\pgfplotsset{
    x tick style={color=black},
    y tick style={color=black}
}
\renewcommand{\labelenumi}{(\alph{enumi})}
\renewcommand\theenumi\labelenumi
\renewcommand{\labelenumi}{\theenumi}
\renewcommand{\theenumi}{(\roman{enumi})}
\newtheorem{theorem}{Theorem}
\newtheorem{lemma}[theorem]{Lemma}
\newtheorem{corollary}[theorem]{Corollary}
\newtheorem{definition}[theorem]{Definition}
\newcommand{\oea}{\mbox{$(1 + 1)$~EA}\xspace}
\newcommand{\olea}{($1\overset{+}{,}\lambda)$~EA\xspace}
\newcommand{\mpoea}{\mbox{$(\mu+1)$~EA}\xspace}
\newcommand{\opllga}{\mbox{$(1+(\lambda,\lambda))$~GA}\xspace}
\newcommand{\NSGA}{NSGA\nobreakdash-II\xspace}
\newcommand{\onemax}{\textsc{OneMax}\xspace}
\newcommand{\LO}{\textsc{Leading\-Ones}\xspace}
\newcommand{\leadingones}{\LO}
\newcommand{\jump}{\textsc{Jump}\xspace}
\newcommand{\cocz}{\textsc{COCZ}\xspace}
\newcommand{\zplg}{\textsc{ZPLG}\xspace}
\newcommand{\spg}{\textsc{SPG}\xspace}
\newcommand{\oneminmax}{\textsc{OneMinMax}\xspace}
\newcommand{\lotz}{\textsc{LOTZ}\xspace}
\newcommand{\wlptno}{\textsc{WLPTNO}\xspace}
\newcommand{\decobjmop}{\textsc{Dec-obj-MOP}\xspace}
\newcommand{\ojzj}{\textsc{OneJumpZeroJump}\xspace}
\newcommand{\DLB}{\textsc{DeceptiveLeadingBlocks}\xspace}
\newcommand{\mut}{\textsc{mut}\xspace}
\newcommand{\R}{\ensuremath{\mathbb{R}}}
\newcommand{\N}{\ensuremath{\mathbb{N}}} 
\newcommand{\Z}{\ensuremath{\mathbb{Z}}}
\begin{document}
{\sloppy
\title{Theoretical Analyses of Multiobjective Evolutionary Algorithms on Multimodal Objectives\thanks{A preliminary short version was published in the proceeding of AAAI 2021~\cite{DoerrZ21aaai}.}}

\author{Weijie Zheng\\ School of Computer Science and Technology\\
International Research Institute for Artificial Intelligence\\
       Harbin Institute of Technology\\ 
       Shenzhen, China\\ 
        Southern University of Science and Technology\\ Shenzhen, China
\and Benjamin Doerr\thanks{Corresponding author.}\\ Laboratoire d'Informatique (LIX)\\ \'Ecole Polytechnique, CNRS\\ Institut Polytechnique de Paris\\ Palaiseau, France}



\maketitle
\begin{abstract}
Multiobjective evolutionary algorithms are successfully applied in many real-world multiobjective optimization problems. As for many other AI methods, the theoretical understanding of these algorithms is lagging far behind their success in practice. In particular, previous theory work considers mostly easy problems that are composed of unimodal objectives.

As a first step towards a deeper understanding of how evolutionary algorithms solve multimodal multiobjective problems, we propose the $\ojzj$ problem, a bi-objective problem composed of two objectives isomorphic to the classic jump function benchmark. We prove that the simple evolutionary multiobjective optimizer (SEMO) with probability one does not compute the full Pareto front, regardless of the runtime. In contrast, for all problem sizes $n$ and all jump sizes ${k \in [4..\frac n2 - 1]}$, the global SEMO (GSEMO) covers the Pareto front in an expected number of $\Theta((n-2k)n^{k})$ iterations. For $k = o(n)$, we also show the tighter bound $\frac 32 e n^{k+1} \pm o(n^{k+1})$, which might be the first runtime bound for an MOEA that is tight apart from lower-order terms. We also combine the GSEMO with two approaches that showed advantages in single-objective multimodal problems. When using the GSEMO with a heavy-tailed mutation operator, the expected runtime improves by a factor of at least $k^{\Omega(k)}$. When adapting the recent stagnation-detection strategy of Rajabi and Witt~\cite{RajabiW22} to the GSEMO, the expected runtime also improves by a factor of at least $k^{\Omega(k)}$ and surpasses the heavy-tailed GSEMO by a small polynomial factor in $k$. Via an experimental analysis, we show that these asymptotic differences are visible already for small problem sizes: A factor-$5$ speed-up from heavy-tailed mutation and a factor-$10$ speed-up from stagnation detection can be observed already for jump size~$4$ and problem sizes between $10$ and $50$. 
Overall, our results show that the ideas recently developed to aid single-objective evolutionary algorithms to cope with local optima can be effectively employed also in multiobjective optimization.
\end{abstract}

%
%



\maketitle

\section{Introduction}
\label{sec:int}
Real-world problems often contain multiple conflicting objectives. For such problems, a single best solution cannot be determined without additional information. One solution concept for such problems is to compute a set of solutions each of which cannot be improved without worsening in at least one objective (Pareto optima) and then let a decision maker choose one of these solutions. With their  population-based nature, multiobjective evolutionary algorithms (MOEAs) are a natural choice for this approach and have indeed been very successfully applied here~\cite{ZhouQLZSZ11}. 

Similar to the situation for single-objective evolutionary algorithms, the rigorous theoretical understanding of MOEAs falls far behind the success of these algorithms in practical applications. The classic works in this area have defined multiobjective, especially bi-objective, counterparts of well-analyzed single-objective benchmark functions used in evolutionary computation theory and have analyzed the performance of mostly very simple MOEAs on these benchmarks.
 
For example,  {in the problems \cocz~\cite{LaumannsTZWD02} and \oneminmax~\cite{GielL10}}, the two objectives are both (conflicting) variants of the classic \onemax benchmark. The classic benchmark \LO was used to construct the \lotz~\cite{LaumannsTZ04} and \wlptno~\cite{QianYZ13} problems. These multiobjective benchmark problems are among the most intensively studied~\cite{Giel03,BrockhoffFN08,DoerrKV13,DoerrGN16,BianQT18ijcaigeneral,HuangZCH19,HuangZ20,OsunaGNS20,ZhengLD22,BianQ22,ZhengD22gecco}. We note that these problems are all very easy. They are composed of {unimodal} objectives and they have the further property that from each set of solutions $P$ a set $P'$ witnessing the Pareto front can be computed by repeatedly selecting a solution from $P$, flipping a single bit in it, adding it to~$P$, and removing dominated solutions from~$P$. They are thus relatively easy to solve, as witnessed by typical runtimes such as $O(n^2 \log n)$ on the \onemax-type problems or $O(n^3)$ on the \leadingones-inspired benchmarks. These runtimes naturally are higher than for the single-objective counterparts due to the fact the Pareto front of the multiobjective versions has size $\Theta(n)$, hence $\Theta(n)$ times more solutions have to be computed compared to the single-objective setting.
  	
We defer a detailed discussion of the state of the art of rigorous analyses of MOEAs to Section~\ref{sec:previous} and state here only that, to the best of our knowledge, there is not a single runtime analysis work discussing in detail how MOEAs cope with problems composed of multimodal objectives.

\textbf{Our contributions.} This paper aims at a deeper understanding of how MOEAs cope with multiobjective problems consisting of natural, well-analyzed, multimodal objectives. In the theory of single-objective evolutionary computation, the class of \jump functions proposed by Droste et al.~\cite{DrosteJW02} is a natural and intensively used multimodal function class that has inspired many ground-breaking results. 
Hence, in this paper, we design a bi-objective counterpart of the \jump function with problem size~$n$ and jump size~$k$, called $\ojzj_{n,k}$. It consists of one \jump function w.r.t. the number of ones and one \jump function w.r.t. the number of zeros, hence both objectives are isomorphic to classic jump functions. We compute its Pareto front in Theorem~\ref{thm:paretoset} and, as intended, observe that different from the easy multimodal benchmarks the Pareto front is not connected, that is, one cannot transform any solution on the Pareto front into any other solution on the Pareto front via one-bit flips. From this observation, we easily show that for all $n \in \N$ and all $k\in[2..\frac n2]$, the simple evolutionary multiobjective optimizer (SEMO) cannot find the full Pareto front of the \ojzj benchmark (Theorem~\ref{thm:semoojzj}).

The heart of this work are mathematical runtime analyses for the global SEMO (GSEMO). We first show that the classic version of this algorithm for all $n$ and $k \in [2..\lfloor \frac n2 \rfloor]$ finds the Pareto front of $\ojzj_{n,k}$ in $O((n-2k+3)n^{k})$ iterations (and fitness evaluations) in expectation (Theorem~\ref{thm:gsemoojzj}). We show a matching lower bound of $\Omega((n-2k)n^{k})$ for $k\in [4..\frac n2-1]$ (Theorem~\ref{thm:lowerbound}). Here and in the remainder, the asymptotic notation only hides constants independent of $n$ and $k$. We note that our actual bounds are more precise. In particular, for $4 \le k = o(n)$, the expected runtime is $\frac 32 e n^{k+1}$ apart from lower-order terms. This result might be the first runtime analysis of an MOEA that is tight apart from lower order terms.

We then try to reuse two ideas that led to performance gains for multimodal problems in single-objective optimization. Doerr et al.~\cite{DoerrLMN17} proposed the \emph{fast mutation} operator in which the number of flipped bits follows a power-law distribution with (negative) exponent $\beta > 1$. They showed that the \oea with this operator optimizes jump functions with jump size~$k$ by a factor of $k^{\Omega(k)}$ faster than the standard \oea. We show that the GSEMO with fast mutation also witnesses such an improvement over the standard GSEMO. More precisely, we prove an upper bound of $O((n-2k)(en)^{k}/k^{k+0.5-\beta})$ for this algorithm (Theorem~\ref{thm:gsemohojzj}). We also provide a result with explicit leading constant, for the first time for an algorithm using fast mutation. We are optimistic that our non-asymptotic estimates for the number of flipped bits by this operator will be useful also in other works.

The stagnation-detection strategy of Rajabi and Witt~\cite{RajabiW22} is a second way to cope with multimodality (in single-objective optimization). So far only used in conjunction with the \olea (and there mostly with the \oea), it consists of counting for how long no improvement has been found and using this information to set the mutation rate (higher mutation rates when no improvement was made for longer time). With suitable choices of the parameters of this strategy, the \oea can optimize jump functions with jump size~$k$ faster than the standard \oea by a factor of $\Omega(k^{\Omega(k)})$. This is the same rough estimate as for the \oea with fast mutation. A more detailed calculation, however, shows that the stagnation-detection \oea is faster than the fast \oea by a factor of $\Theta(k^{\beta-0.5})$. We note that this difference usually is not extremely large since $\beta$ is usually chosen small ($\beta = 1.5$ was recommended by~\cite{DoerrLMN17}) and $k$ has to be small to admit reasonable runtimes (recall that the runtime dependence on $n$ is $\Omega(n^k)$). Nevertheless, in experiments conducted by \cite{RajabiW22} the stagnation-detection \oea was faster than the fast \oea by around a factor of three. Different from fast mutation, it is not immediately clear how to incorporate stagnation detection into the GSEMO. Being an algorithm with non-trivial parent population, one question is whether one should count unsuccessful iterations globally or individually for each parent individual. Also, clearly the parameters of the algorithm need to be adjusted to the longer waiting times for an improvement. We succeeded in defining a stagnation-detection version of the GSEMO that effectively solves the \ojzj problem, more precisely, that computes the full Pareto front in an expected runtime of $O((n-2k)(en)^{k}/k^{k})$, again a $k^{\Omega(k)}$ factor improvement over the classic GSEMO and reducing the runtime guarantee for the heavy-tailed GSEMO by a small factor of $\Omega(k^{\beta-0.5})$, see Theorem~\ref{thm:sdgsemoojzj}.

Via a small set of experiments, we demonstrate that these are not only asymptotic differences. Compared to the standard GSEMO, we observe roughly a factor-$5$ speed-up with heavy-tailed mutation and a factor-$10$ speed-up with our stagnation-detection GSEMO, and this already for jump size~$k=4$ and problem sizes $n$ between $10$ and $50$. 

 {We note that this work is an extended version of the 7-page (excluding acknowledgments and references) preliminary paper~\cite{DoerrZ21aaai} and differs in the following ways. We estimate the leading constant of the asymptotic upper bound of the runtime of the GSEMO with heavy-tailed mutation for not too large jump size, which has not been shown in our preliminary version, and also not been shown before for the theoretical works with the heavy-tailed mutation. Besides, this version adds a formal discussion on the multimodality in multiobjective problems in Section~\ref{sec:previous}, and contains all mathematical proofs that had to be omitted in~\cite{DoerrZ21aaai} for reasons of space.}

The remainder of this paper is organized as follows. Section~\ref{sec:basic} introduces the basic definitions that will be used throughout this work. In Section~\ref{sec:previous}, we discuss the relevant previous works. The $\ojzj_{n,k}$ function class is defined in Section~\ref{sec:ojzj}. Our mathematical runtime analyses are presented in Sections~\ref{sec:semo} to~\ref{sec:sdgsemo}. Section~\ref{sec:exp} shows our experimental results. A conclusion is given in Section~\ref{sec:con}.

\section{Basic Definitions}
\label{sec:basic}

A multiobjective optimization problem consists of optimizing multiple objectives simultaneously. In this paper, we restrict ourselves to the maximization of bi-objective pseudo-Boolean problems $f=(f_1,f_2): \{0,1\}^n \rightarrow \R^2$. We note that only sporadic theoretical works exist that discuss the optimization of more than two objectives~\cite{LaumannsTZ04,BianQT18ijcaigeneral,HuangZLL21}.

We brief{}ly review the standard notation in multiobjective optimization. For any two search points $x, y \in \{0,1\}^n$, we say that
\begin{itemize}
\item $x$ \emph{weakly dominates} $y$, denoted by $x \succeq y$, if and only if $f_1(x) \ge f_1(y)$ and $f_2(x) \ge f_2(y)$;
\item $x$ \emph{dominates} $y$, denoted by $x \succ y$, if and only if $f_1(x) \ge f_1(y)$ and $f_2(x) \ge f_2(y)$ and at least one of the inequalities is strict.
\end{itemize}
We call $x\in\{0,1\}^n$ \emph{Pareto optimal} if and only if there is no $y\in\{0,1\}^n$ such that $y\succ x$. All Pareto optimal solutions form the \emph{Pareto set} $S^*$ of the problem. The set $F^* = f(S^*) = \{(f_1(x),f_2(x)) \mid x \in S^*\}$ of the function values of the Pareto set is called the \emph{Pareto front}. 

For most multiobjective problems, the objectives are at least partially conflicting and thus there is usually not a single Pareto optimum. Since a priori it is not clear which of several incomparable Pareto optima to prefer, the most common target is to compute the Pareto front, that is, compute a set $P$ of solutions such that $f(P) := \{(f_1(x),f_2(x)) \mid x \in P\}$ equals the Pareto front~$F^*$. This is our objective in this work as well. We note that if the Pareto front is excessively large, then one has to resort to approximating it in a suitable manner, but this will not be our problem here. 

We will use $|x|_1$ and $|x|_0$ to denote the number of ones and the number of zeros of the search point $x \in \{0,1\}^n$. We use $[a..b]$ to denote the set $\{a,a+1,\dots,b\}$ for $a,b\in \Z$ and $a\le b$.

\section{Previous Works and A Discussion of Multimodality in Multiobjective Problems}\label{sec:previous}

Since this work conducts a mathematical runtime analysis of an evolutionary algorithm for discrete search spaces, we primarily discuss the state of the art inside this subarea of evolutionary computation. 
The mathematical runtime analysis as a rigorous means to understand evolutionary algorithms was started in the 1990s with works like~\cite{Muhlenbein92,Back93,Rudolph97}. The first analysis of MOEAs were conducted by Laumanns et al.~\cite{LaumannsTZWD02,LaumannsTZ04}, Giel~\cite{Giel03}, and Thierens~\cite{Thierens03}. The theory of MOEAs here often followed the successful examples given by single-objective works. For example, the multiobjective benchmarks \cocz \cite{LaumannsTZ04}, and \oneminmax~\cite{GielL10} are composed of two contradicting copies of the classic \onemax problem, the problems \lotz~\cite{LaumannsTZ04} and \wlptno~\cite{QianYZ13} follow the main idea of the \leadingones problem~\cite{Rudolph97}.

Due to the higher complexity of analyzing MOEAs, many topics with interesting results on single-objective EAs received almost no attention for MOEAs. For example, the first runtime analysis of a single-objective algorithm using crossover was conducted by Jansen and Wegener~\cite{JansenW01} and was followed up by a long sequence of important works. In contrast, only sporadic works discuss crossover in MOEAs~\cite{NeumannT10,QianYZ13,HuangZCH19,BianQ22}. Dynamic parameter settings for single-objective EAs were first discussed by Jansen and Wegener~\cite{JansenW05} and then received a huge amount of attention (see, e.g., the bookchapter~\cite{DoerrD20bookchapter}), whereas the only such result for MOEAs appears to be one result {by Doerr et al.~\cite{DoerrHP22}}. How single-objective evolutionary algorithms cope with noise was discussed already by Droste~\cite{Droste04}, but despite numerous follow-up works in single-objective optimization, there is only a single mathematical runtime analysis of an MOEA in the presence of noise {~\cite{Gutjahr12}}.

In this work, we regard another topic that is fairly well-understood in the evolutionary theory community on the single-objective side, but where essentially nothing is known on the multiobjective side. It is well-known in general that local optima are challenging for evolutionary algorithms.
Already one of the first runtime analysis works, the ground-breaking paper by Droste et al.~\cite{DrosteJW02}, proposes a multimodal (that is, having local optima different from the global optimum) benchmark of scalable difficulty. The jump function with difficulty parameter~$k$, later called \emph{gap size}, resembles the easy \onemax function except that it has a valley of very low fitness around the optimum. This valley consists of the $k-1$ first Hamming levels around the optimum, hence the search points in distance $k$ form a local optimum that is easy to reach, but hard to leave. Mutation-based algorithms need to flip the right $k$ bits to go from any point on the local optimum to a search point of better fitness (which here is the global optimum). While other multimodal benchmarks have been proposed and have received some attention, most notably trap functions~\cite{JagerskupperS07} and the \DLB problem~\cite{LehreN19foga}, it is safe to say that the \jump benchmark is the by far most studied one and that these studies have led to fundamental insights on how evolutionary algorithms cope with local optima.

For reasons of space, we only discuss some of the most significant works on jump functions. The first work~\cite{DrosteJW02} determined the runtime of the \oea on \jump functions with gap size~$k \ge 2$ to be $\Theta(n^k)$, showing that local optima can lead to substantial performance losses. This bound was tightened and extended to arbitrary mutation rates by Doerr et al.~\cite{DoerrLMN17}. This analysis led to the discovery of a heavy-tailed mutation operator and more generally, established the use of power-law distributed parameters, an idea that led to many interesting results subsequently, e.g., \cite{AntipovBD21gecco,CorusOY21foga,AntipovBD22,DangELQ22}. Again an analysis of how single-trajectory heuristics optimize jump functions led to the definition of a stagnation-detection mechanism, which in a very natural manner leads to a reduction of the time an EA is stuck on a local optimum~\cite{RajabiW22}. Jump functions are among the first and most convincing examples that show that crossover can lead to significant speed-ups over only using mutation as variation operator~\cite{JansenW02,DangFKKLOSS18,AntipovDK22}. That estimation-of-distribution algorithms and ant-colony optimizers may find it easier to cope with local optima was also shown via jump functions~\cite{HasenohrlS18,Doerr21cgajump,BenbakiBD21,Witt23}.

In contrast to this intensive discussion of multimodal functions in single-objective evolutionary optimization, almost no such results exist in multiobjective optimization. Before discussing this aspect in detail, let us agree (for this work) on the following terminology. We say that a function $f : \{0,1\}^n \to \R$ is \emph{unimodal} if it has a unique maximum and if all other search points have a Hamming neighbor with strictly larger fitness. We say that $f$ is \emph{multimodal} if it has local optima different from the global optimum. Here we call \emph{local optimum} a set $S$ of search points such that $f(x) = f(y)$ for all $x, y \in S$ and such that all neighbors of $S$, that is, all $z \notin S$ having a Hamming neighbor in $S$, have a smaller $f$-value.   {Note that a global optimum is also a local optimum according to this definition}. With these definitions, unimodal functions are those where a hillclimber can reach the optimum via one-bit flips regardless of the initial solution. In contrast, multimodal functions definitely need a single-trajectory heuristic to flip more than one bit (when started in a suitable solution) or need the acceptance of truly inferior solutions. With our definition, plateaus of constant fitness render a function not unimodal, but they do not imply multimodality. This is convenient for our purposes as we shall not deal with such plateaus -- for the simple reason that they behave again differently from unimodal functions and true local optima. We note that the role of plateaus in multiobjective evolutionary optimization has been extensively studied, among others, in~\cite{BrockhoffFHKNZ07,FriedrichHN10,FriedrichHN11,QianTZ16,LiZZZ16}.

We extend the definitions of unimodality and multimodality to multiobjective problems in the natural way: A multiobjective problem is \emph{unimodal} if all its objectives are unimodal functions. Many such problems are easy and even simple MOEAs employing one-bit flips as only variation operator, e.g., the SEMO, can find the full Pareto front. Examples of such problems include the well-studied benchmarks inspired by the unimodal single-objective benchmarks \onemax and \leadingones. However, somewhat surprisingly, it is not true that the SEMO can cover the full Pareto front of all unimodal multiobjective problems as we now show. 

\begin{lemma}
There exists a unimodal multiobjective problem whose full Pareto front cannot be covered by the SEMO in an arbitrarily long time with a positive probability.
\label{lem:cesemo}
\end{lemma}

\begin{proof}
We consider the following bi-objective problem defined on $\{0,1\}^3$. 
\begin{align*}
&f(000)=(0,1), f(001)=(2,0), f(010)=(0,2), f(011)=(1,5)\\
&f(100)=(4,0), f(101)=(5,0), f(110)=(3,3), f(111)=(0,4).
\end{align*} 
Then its Pareto front is $\{(1,5),(5,0),(3,3)\}$.
We know that $f_1$ is unimodal as 
\begin{align*}
&f_1(000)<f_1(001), f_1(001)<f_1(101),
f_1(010)<f_1(011), f_1(011)<f_1(001), \\
&f_1(100)<f_1(101), f_1(110)<f_1(100), f_1(111)<f_1(110),
\end{align*} 
and $f_1(101)$ is the unique maximal value. We also know that $f_2$ is unimodal as 
\begin{align*}
& f_2(000)<f_2(010), f_2(001)<f_2(000), f_2(010)<f_2(011), f_2(100)<f_2(000),\\
& f_2(101)<f_2(111), f_2(110)<f_3(111), f_2(111)<f_2(011),
\end{align*} 
and $f_2(011)$ is the unique maximal value.

Consider the run of the SEMO described in Table~\ref{tbl:cesemo}. It is easy to see that this process happens with probability of
\begin{align*}
\frac18 \cdot \frac 13 \cdot \left(\frac 12 \frac13\right) \cdot \left(\frac12 \frac13 \right)\cdot \left(\frac12 \frac13 \right) = \frac1{5184}>0.
\end{align*}
\begin{table}[!h]
\centering
\setlength{\tabcolsep}{7mm}
\begin{tabular}{c l }
\toprule
Generation & Population\\
\midrule
0 & (100, (4,0)) \\
1 & (100, (4,0)),\, (000, (0,1)) \\
2 & (100, (4,0)),\, (010, (0,2)) \\
3 & (101, (5,0)),\, (010, (0,2)) \\
4 & (101, (5,0)),\, (011, (1,5)) \\
\bottomrule
\end{tabular}
\caption{An example process of the SEMO. Given is the population at the end of each iteration (which for iteration~$0$ is the initial population). To ease understanding the process, with each individual we also state its objective value.}
\label{tbl:cesemo}
\end{table}

We now show that once the optimization process has reached the state described in the last line of Table~\ref{tbl:cesemo}, the Pareto front point $(3,3)$ cannot be reached anymore. Note that $110$ is the only Pareto optimum for $(3,3)$, and its Hamming neighbors are $010,100$, and $111$. Since $f(010)=(0,2),f(100)=(4,0)$, and $f(111)=(0,4)$, we know that $010$ and $111$ are strictly dominated by $011$, and that $100$ is strictly dominated by $101$. Hence, even if any of them is generated, it cannot survive to the next generation when the current population is $\{101,011\}$. Therefore, $110$ cannot be generated (and $(3,3)$ cannot be reached) via the SEMO with the one-bit mutation.
\end{proof}

We say that a multiobjective problem is \emph{multimodal} if at least one objective is multimodal. We note that this does not automatically imply that the SEMO cannot find the full Pareto front, in fact, as the following proof of Lemma~\ref{lem:mmeasy} shows, there are multi-objective problems consisting only of multimodal objectives such that the SEMO regardless of the initial solution computes the full Pareto front very efficiently. Clearly, our interest in this work are multimodal multiobjective problems that are harder to optimize. To avoid misunderstandings, we note that a second notion of multimodality exists. In \emph{multimodal optimization}, the target is to find all or a diverse set of solutions having some optimality property~\cite{Preuss15,LiangYQ16,TanabeI20}. This notion is unrelated to our notion of multimodality. 

\begin{lemma}\label{lem:mmeasy}
  There is a bi-objective problem $f = (f_1,f_2) : \{0,1\}^n \to \R^2$ such that both objectives $f_1$ and $f_2$ have several local optima, but the SEMO algorithm, regardless of its initial solution, finds the full Pareto front in time $O(n^2 \log n)$.
\end{lemma}

\begin{proof}
  Let $k, \ell \in \N_{\ge 2}$ and $n = k\ell$. Define $f_1$ via
  \begin{align*}
  f_1(x)&=
  \begin{cases}
  |x|_1, & \text{if~} |x|_1 \equiv 0 \pmod \ell,\\ 
  \lfloor |x|_1 / \ell \rfloor \ell + (\ell-i), & \text{if~} |x|_1 \equiv i \pmod \ell,
  \end{cases}
  \end{align*}
	for all $x \in \{0,1\}^n$. Hence $f_1$ agrees with the classic \onemax benchmark when $|x|_1$ is a multiple of $\ell$. Solutions $x$ with $|x|_1 \equiv 1 \pmod \ell$ are local optima different from the global optimum, which is $x^* = 1^n$. The function $f_1$ is deceptive in each interval $[j\ell+1..j\ell+\ell-1]$, $j = 0, \dots, k-1$.
	
	We define $f_2$ analogously, but with the roles of zeroes and ones interchanged, that is,
  \begin{align*}
  f_2(x)&=
  \begin{cases}
  |x|_0, & \text{if~} |x|_0 \equiv 0 \pmod \ell,\\ 
  \lfloor |x|_0 / \ell \rfloor \ell + (\ell-i), & \text{if~} |x|_0 \equiv i \pmod \ell,
  \end{cases}
  \end{align*}
	for all $x \in \{0,1\}^n$. Clearly, $f_2$ has the same characteristics as $f_1$, in particular, it is highly multimodal with its $k$ local optima. 

Figure~\ref{fig:lemma2} plots these two functions with $(n,\ell)=(50,10)$. We clearly to see the $n/\ell+1=6$  {local optima} for both objectives, namely the points with $|x|_1 \equiv 1 \pmod \ell$ and $x=1^n$ for $f_1$ and the points with $|x|_0 \equiv 1 \pmod \ell$ and $x=0^n$ for $f_2$.
\begin{figure}[!ht]
\centering
\includegraphics[width=3.8in]{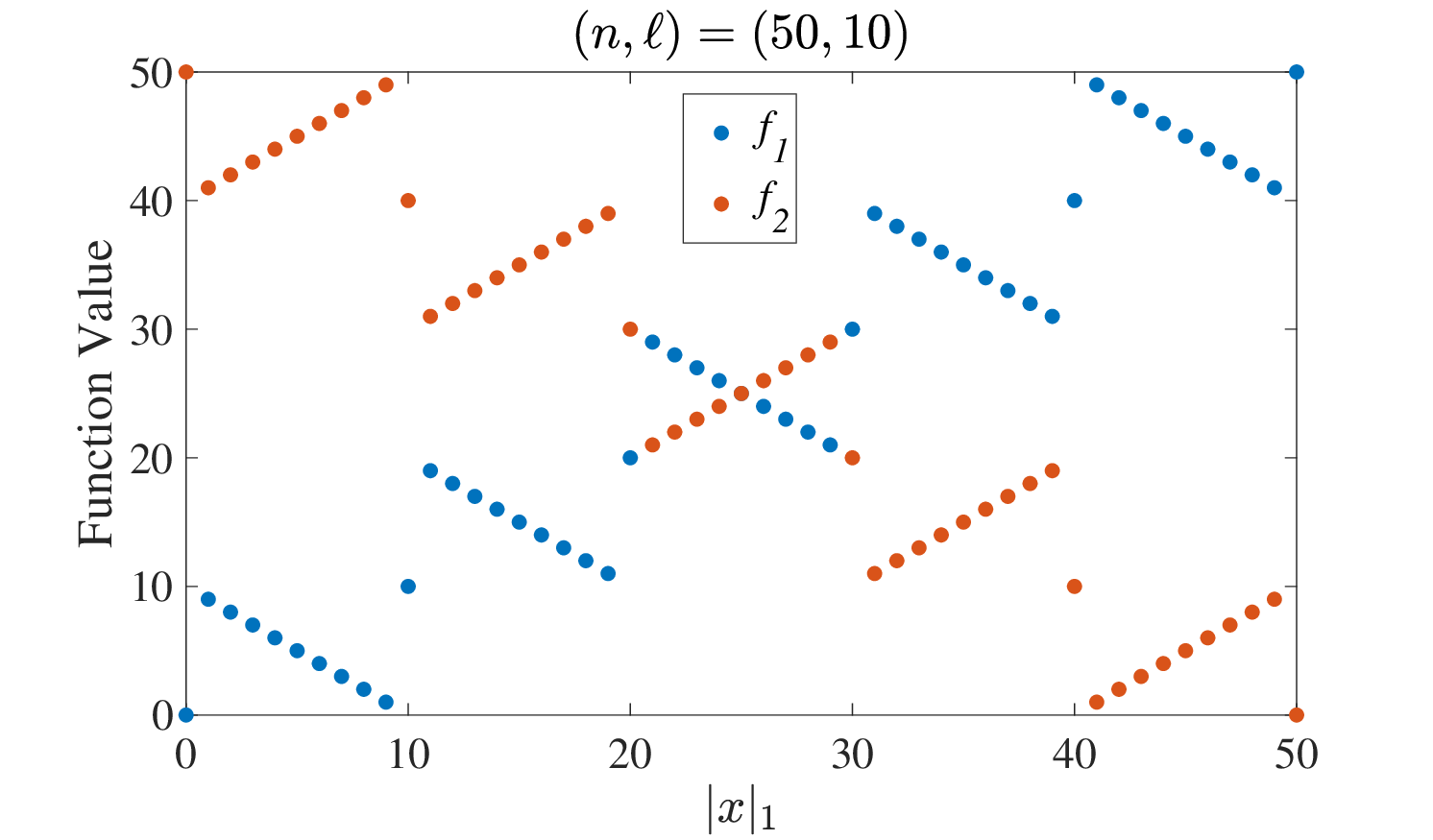}
\caption{The values of $f_1$ and $f_2$ with respect to $|x|_1$, the number of ones in the search point $x$.}
\label{fig:lemma2}
\end{figure}

  We note that any solution is a Pareto optimum of $f = (f_1,f_2)$. Indeed, we have $f_1(x) + f_2(x) = n$ for all $x \in \{0,1\}^n$. Consequently, if $f_1(x) > f_1(y)$ for some $x, y \in \{0,1\}^n$, then necessarily $f_1(x) < f_2(y)$ and thus $x$ does not dominate $y$. 
	
	Since all solutions are Pareto optima and since $f$ has the same classes of search points of equal fitness, the optimization process of the SEMO on $f$ is identical to the one on \onemax. With~\cite[Theorem~3]{GielL10} and its proof, our claim is proven.
\end{proof}

Having made precise what we understand as multimodal multiobjective problem, we review the known results on such problems.

To the best of our knowledge, only very few runtime analysis works on multimodal multiobjective problems exist and they all are not focussed on the possible difficulties arising from the multimodality. Roughly speaking, these can be divided into works that construct artificial problems to demonstrate, often in an extreme way, a particular strength or weakness of an algorithm, and works on classic combinatorial optimization problems that happen to be multimodal. The first set of works gave rise to the multimodal problems \zplg, \spg, and \decobjmop. The first two of these were proposed by Qian et al.~\cite{QianTZ16} and are defined as follows.
\begin{definition}[\zplg~\cite{QianTZ16}]
\label{def:zplg}
The function $\zplg: \{0,1\}^n \rightarrow \R\times\{0,1,2\}$ is defined by
\begin{align*}
\zplg(x)&=
\begin{cases}
(n+1,1), & \text{if}~x=1^i0^{n-i}, i \in [1..\frac34 n-1];\\
(n+2+i,0), & \text{if}~x=1^{\frac34n +2i}0^{\frac14n-2i}, i \in [0..\frac18 n];\\
(|x|_0,2), & \text{else}.
\end{cases}
\end{align*}
\end{definition}

\begin{definition}[\spg~\cite{QianTZ16}]
\label{def:spg}
The function $\spg: \{0,1\}^n \rightarrow \R\times\{0,1\}$ is defined by
\begin{align*}
\spg(x)&=
\begin{cases}
(-1,0), & \text{if}~x=1^i0^{n-i}, i~\textup{mod}~3 =1, i\in[1..n];\\
(in,0), & \text{if}~x=1^i0^{n-i}, i~\textup{mod}~3 \in\{0,2\}, i\in[1..n];\\
(|x|_0,1), & \text{else}.
\end{cases}
\end{align*}
\end{definition}
In order to investigate the effect of mixing low-level heuristics, Qian et al.~\cite{QianTZ16} showed that mixing fair selection w.r.t.\ the decision space and the objective space is beneficial for \zplg, and that mixing 1-bit and 2-bit mutation is efficient for \spg. 

In the first theoretical study of decomposition-based MOEAs, Li et al.~\cite{LiZZZ16} analyzed how the MOEA/D solves several multiobjective problems, among them the following multimodal one. 
\begin{definition}[\decobjmop~\cite{LiZZZ16}]
\label{def:dec}
The function $\decobjmop: \{0,1\}^n \rightarrow \R^2$ is defined by
\begin{align*}
\decobjmop(x)=(n+1-|x|_0~\textup{mod}~n+1, n+|x|_0~\textup{mod}~n+1).
\end{align*}
\end{definition}

We note that the three problems \zplg, \spg, and \decobjmop all appear slightly artificial. Also, they have only one multimodal objective. The first objective of \zplg has  {local optima} on $x=1^{\frac34n +2i}0^{\frac14n-2i}, i \in [0..\frac18 n]$,  the first objective of \spg has  {local optima} on $x=0^n$ and $x=1^i0^{n-i}, i~\textup{mod}~3=0, i\in[1..n]$, and the second objective of \decobjmop has two  {local optima} on $x \in \{0^n,1^n\}$.

The second type of works dealing with multimodal multiobjective problems are those which regard combinatorial optimization problems, for example,~\cite{LaumannsTZ04knapsack,KumarB06,Neumann07,NeumannR08,Greiner09,Horoba09,QianYZ13,QianYZ15ijcai,QianZTY18,FengQT19,QianBF20,RoostapourBN20,RoostapourNNF22}. Combinatorial optimization problems almost always are multimodal, simply because already a simple cardinality constraint suffices to render a problem multimodal. We note for some such problems the multimodality is not very strong, e.g., for minimum spanning tree problems flipping two bits suffices to leave a local optimum. Overall, all these works are relatively problem-specific and we could not distill from these any general insights on how MOEAs cope with local optima.

\section{The $\ojzj$ Problem}
\label{sec:ojzj}

To study via mathematical means how MOEAs cope with multimodality, we now define and analyze a class of bi-objective functions of scalable difficulty.  {As mentioned above, t}his is strongly influenced by the single-objective \jump function class proposed in~\cite{DrosteJW02}, which is intensively used in the theory of single-objective evolutionary computation and which gave rise to many interesting results including that larger mutation rates help in the optimization of multimodal functions, like~\cite{DoerrLMN17}, that crossover can help to cope with multimodality, like~\cite{JansenW02,DangFKKLOSS18}, and that estimation-of-distribution algorithms and the \opllga can significantly outperform classic evolutionary algorithms on multimodal problems, like~\cite{HasenohrlS18,Doerr21cgajump,AntipovD20ppsn,AntipovDK22}.

We recall that for all $n \in \N$ and $k \in [1..n]$, the jump function $\jump_{n,k} : \{0,1\}^n \to \R$ with problem size~$n$ and gap size~$k$ is defined by $\jump_{n,k}(x) = k+|x|_1$, if $|x|_1 \in [0..n-k] \cup \{n\}$ and $\jump_{n,k}(x) = n - |x|_1$ otherwise. Hence for $k \ge 2$, this function has a valley of low fitness around its optimum~$x^*=1^n$, which can be crossed only by flipping $k$ specific bits (or accepting solutions with very low fitness). We define the $\ojzj_{n,k}$ function as a bi-objective counterpart of the function $\jump_{n,k}$. 
\begin{definition}[$\ojzj_{n,k}$]
Let $n \in \N$ and $k=[1..n]$. The function $\ojzj_{n,k}=(f_1,f_2): \{0,1\}^n \rightarrow \R^2$ is defined by
\begin{align*}
f_1(x)&=
\begin{cases}
k+|x|_1, & \text{if~} |x|_1\le n-k \text{~or~} x=1^n,\\
n-|x|_1, & \text{else};
\end{cases}
\\
f_2(x)&=
\begin{cases}
k+|x|_0, & \text{if~} |x|_0\le n-k \text{~or~} x=0^n,\\
n-|x|_0, & \text{else}.
\end{cases}
\end{align*}
\end{definition}
Hence the first objective of $\ojzj_{n,k}$ is just the classic $\jump_{n,k}$ function. The second objective has a fitness landscape isomorphic to this function, but the roles of zeros and ones are interchanged, that it, $f_2(x) = \jump_{n,k}( {\bar{x}})$ {, where $\bar{x}$ is the bitwise complement of $x$}. Figure~\ref{fig:ojzj} displays these two functions and in particular the two  {local optima} on $|x|_1=n-k$ and $x=1^n$ for the first objective and two  {local optima} on $|x|_1=k$ and $x=0^n$ for the second objective.
\begin{figure}[!ht]
\centering
\includegraphics[width=3.8in]{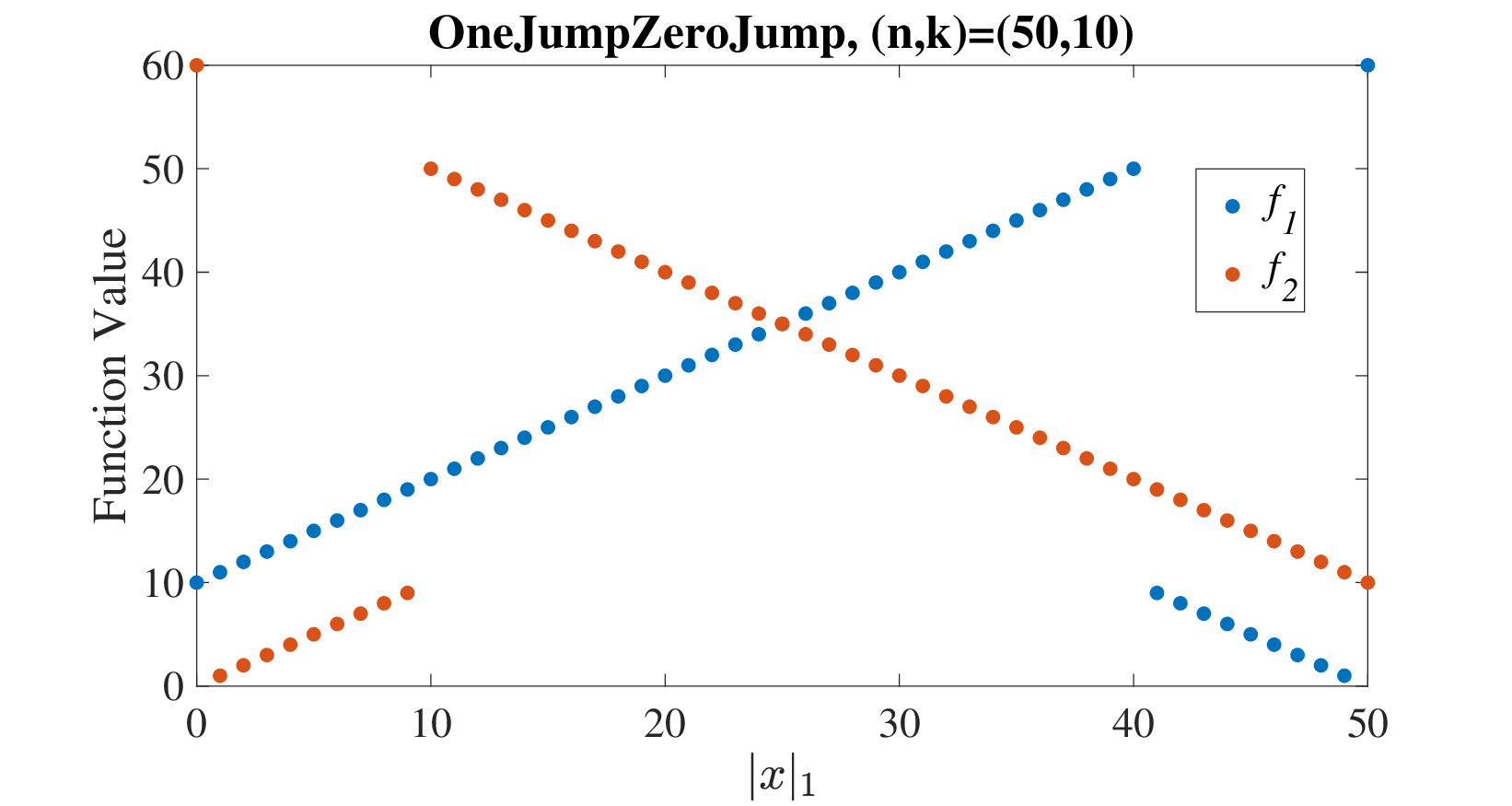}
\caption{The values of two objectives in $\ojzj_{n,k}$ with respect to $|x|_1$, the number of ones in the search point $x$.}
\label{fig:ojzj}
\end{figure}

In the following theorem we determine the Pareto set and front of the $\ojzj_{n,k}$ function. As Figure~\ref{fig:ojzj} suggests, for $k \ge 2$ the Pareto set consists of an inner region of all search points $x$ with $|x|_1 \in [k..n-k]$ and the two extremal points $1^n$ and $0^n$, as visualized in Figure~\ref{fig:ojzjPF}.  {We shall call the Pareto front values $\{(a,2k+n-a) \mid a\in [2k..n]\}$ the \emph{inner part/region} of the Pareto front in this paper.}

\begin{theorem}
The Pareto set of the $\ojzj_{n,k}$ function is $S^*=\{x\mid |x|_1 \in [k..n-k] \cup \{0,n\} \}$, and the Pareto front is $F^*=\{(a,2k+n-a) \mid a\in[2k..n] \cup \{k,n+k\}\}$.
\label{thm:paretoset}
\end{theorem}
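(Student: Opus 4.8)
The plan is to reduce everything to the single integer parameter $m := |x|_1$, since both objectives depend on $x$ only through $m$ (the first through $|x|_1 = m$, the second through $|x|_0 = n-m$). First I would evaluate $(f_1(x),f_2(x))$ on each of the five ranges of $m$ that the piecewise definition induces in the meaningful regime $k \le \frac n2$: for $m=0$ one obtains $(k,\,n+k)$; for $m \in [1..k-1]$ one obtains $(k+m,\,m)$; for $m \in [k..n-k]$ one obtains $(k+m,\,n+k-m)$; for $m \in [n-k+1..n-1]$ one obtains $(n-m,\,n+k-m)$; and for $m=n$ one obtains $(n+k,\,k)$. The only slightly delicate point is to check which branch of each objective is active on each range, in particular that the condition $|x|_1 \le n-k$ holds exactly on $[0..n-k]$, that the condition $|x|_0 \le n-k$ holds exactly for $m \ge k$, and that the $x=1^n$ and $x=0^n$ exceptions matter only at the two endpoints; this boundary bookkeeping is where I expect to spend most of the care.

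Next I would observe that on the outer set $S^* = \{x : m \in [k..n-k] \cup \{0,n\}\}$ all image points satisfy $f_1(x)+f_2(x) = n+2k$, and that along this anti-diagonal the first coordinate takes the pairwise distinct values $\{k\} \cup [2k..n] \cup \{n+k\}$ while $f_2 = n+2k-f_1$ decreases correspondingly. Two points lying on a line of constant coordinate-sum with different first coordinates are incomparable under $\succeq$, so no point of $S^*$ dominates another; this already identifies the candidate front $F^* = \{(a,\,n+2k-a) : a \in [2k..n] \cup \{k,\,n+k\}\}$ and, incidentally, exhibits its disconnectedness (the first coordinate skips the ranges $[k+1..2k-1]$ and $[n+1..n+k-1]$).

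It then remains to rule out the inner points, namely $m \in [1..k-1]$ and $m \in [n-k+1..n-1]$. For these I would first record the strict inequality $f_1(x)+f_2(x) < n+2k$: the sum equals $k+2m \le 3k-2$ in the first range and $k+2(n-m) \le 3k-2$ in the second, both strictly below $n+2k$ since $k \le \frac n2$. Then I would exhibit a dominating member of $S^*$: the point with $m=k$, whose image is $(2k,\,n)$, dominates every $(k+m,\,m)$ with $m \in [1..k-1]$ (both coordinate comparisons hold, with $2k > k+m$ strict), and symmetrically the point with $m=n-k$, whose image is $(n,\,2k)$, dominates every $(n-m,\,n+k-m)$ with $m \in [n-k+1..n-1]$ (here $2k \ge n+k-m \iff m \ge n-k$, and $n > n-m$ is strict). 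Verifying $\succ$ in each case is a direct two-coordinate comparison with at least one strict inequality. Combining the three steps shows that the Pareto set is exactly $S^*$ and the Pareto front is exactly $F^*$. I anticipate no genuine obstacle beyond the case-analysis bookkeeping; the geometric picture of a single anti-diagonal flanked by two strictly dominated interior valleys makes the domination arguments transparent.
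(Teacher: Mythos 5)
Your proposal is correct, and for the second half of the argument (every point outside $S^*$ is strictly dominated) it coincides with the paper's proof: you exhibit exactly the same dominators, namely a point with $k$ ones (image $(2k,n)$) for the range $|y|_1 \in [1..k-1]$ and a point with $n-k$ ones (image $(n,2k)$) for the range $|y|_1 \in [n-k+1..n-1]$. Where you genuinely diverge is in the first half, the Pareto optimality of $S^*$. The paper argues by contradiction: assuming $y \succ x$ with, w.l.o.g., $f_1(y) > f_1(x)$, it runs a case analysis on $|x|_1 \in [k..n-k]$ versus $|x|_1 = 0$ and on where $|y|_1$ can land, deriving $f_2(y) < f_2(x)$ in each case. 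You instead observe the invariant $f_1(x)+f_2(x) = n+2k$ on all of $S^*$, with pairwise distinct first coordinates across the distinct objective values, versus the strict deficit $f_1(y)+f_2(y) \le 3k-2 < n+2k$ off $S^*$. This single linear-functional observation buys both needed facts at once: two distinct values on the anti-diagonal are incomparable, and a point whose coordinate sum is strictly below $n+2k$ cannot even weakly dominate a point on the anti-diagonal, since weak domination forces the coordinate sum to be at least as large. The result is a shorter, more geometric argument than the paper's case analysis, and it yields the disconnectedness of the front as a by-product. One sentence is worth making explicit in a final write-up: your text only ``records'' the strict sum inequality for the inner points, and the inference that this rules out their dominating any point of $S^*$ (which is what lets your three steps combine into ``the Pareto set is exactly $S^*$'') should be stated; it is immediate, so this is a presentational point rather than a gap. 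Your five-range evaluation of $(f_1,f_2)$ as a function of $m = |x|_1$ checks out, including the branch bookkeeping at $m=0$ and $m=n$ and the thresholds $m \le n-k$ and $m \ge k$, and like the paper you implicitly work in the regime $k \le \frac n2$, which is the regime in which the theorem's ranges are meaningful.
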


\begin{proof}
We firstly prove that for any given $x\in S^*$, $y \nsucc x$ holds for all $y \in \{0,1\}^n$. If $y \succ x$, then $f_1(y)\ge f_1(x)$ and $f_2(y) \ge f_2(x)$, and at least one of two inequalities is strict. W.l.o.g., let $f_1(y) > f_1(x)$. From $x\in S^*$, we have $f_1(x) \ge k$, hence $y\ne 0^n$. Since $f_1(y) \le n+k$, we know $x\ne 1^n$. Then we have $|x|_1 \in [k..n-k] \cup \{0\}$ since $x\in S^*$. 
\begin{itemize}
\item If $|x|_1 \in [k..n-k]$, then $k \le |x|_1 < |y|_1 \le n-k$ or $|y|_1 = n$. For $k \le |y|_1 \le n-k$, we have $k \le |y|_0 < |x|_0 \le n-k$. Then $f_2(y) = k+|y|_0 < k+|x|_0 =f_2(x)$. For $|y|_1=n$, we have $f_2(y)=k < k+k \le f_2(x)$ since $|x|_0\in[k..n-k]$ from $|x|_1\in[k..n-k]$.
\item If $|x|_1=0$, then $f_2(x) = k+n > f_2(y)$ since $y \neq 0^n$.
\end{itemize}
That is, we obtain $f_2(y) < f_2(x)$ which is contradictory to $y \succ x$. Hence,  {$S^*$ is the Pareto set.}

We secondly prove that for any given $y \in \{0,1\}^n \setminus S^*$, there is $x \in S^*$ such that $x \succ y$. Since $y \in \{0,1\}^n \setminus S^*$, we have $|y|_1\in[1..k-1] \cup [n-k+1..n-1]$. 
\begin{itemize}
\item If $|y|_1\in[1..k-1] $, we have $f_1(y) = k+|y|_1$ and $f_2(y)=|y|_1$. Then we could take $x$ with $|x|_1=k$, and have $f_1(x) = k+k > f_1(y)$ and $f_2(x)=k+n-k > f_2(y)$. Then $x\succ y$.
\item If $|y|_1\in [n-k+1..n-1]$, we have $f_1(y)=n-|y|_1$ and $f_2(y)=k+|y|_0 \le 2k-1$. Then we could take $x$ with $|x|_1=n-k$, and have $f_1(x)=n>f_1(y)$ and $f_2(x)=k+k>f_2(y)$. Then $x \succ y$.
\end{itemize}
Hence,  {$F^*$ is the Pareto front and this theorem is} proven.
\end{proof}

Theorem~\ref{thm:paretoset} implies that for $k > n/2$, the Pareto front consists only of the two extremal solutions $0^n$ and $1^n$. This appears to be a relatively special situation that is not very representative for multimodal multiobjective problems. For this reason, in the remainder, we shall only regard the case that $k \le n/2$. In this case, again from Theorem~\ref{thm:paretoset}, we easily obtain in the following corollary a general upper bound on the size of any set of solutions without pair-wise weak domination, and thus also on the size of the population in the algorithms discussed in this work.
\begin{corollary}
Let $k\le n/2$. Consider any set of solutions $P$ such that $x\not\preceq y$ w.r.t. $\ojzj_{n,k}$ for all $x,y \in P$ with $x\ne y$. Then $|P|\le n-2k+3$.
\label{cor:population}
\end{corollary}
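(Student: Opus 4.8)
The plan is to translate the statement about search points into a statement about objective values, and then bound the size of an antichain in the resulting two-dimensional partial order. First I would note that the hypothesis ``$x \not\preceq y$ for all distinct $x,y \in P$'' says precisely that no element of $P$ is weakly dominated by another, i.e.\ $P$ is an antichain for $\succeq$. In particular, if two search points had equal $\ojzj_{n,k}$-value they would weakly dominate each other, which is forbidden; hence all elements of $P$ carry pairwise distinct objective values. Since $\ojzj_{n,k}(x)$ depends only on $|x|_1$, I would tabulate, for each $i = |x|_1 \in [0..n]$, the value $v_i := \ojzj_{n,k}(x)$ by reading off the definition: the inner values $v_i = (k+i,\,k+n-i)$ for $i \in [k..n-k]$, the extremes $v_0 = (k, n+k)$ and $v_n = (n+k, k)$, and the two ``dominated'' families $v_i = (k+i,\,i)$ for $i \in [1..k-1]$ and $v_i = (n-i,\,k+n-i)$ for $i \in [n-k+1..n-1]$ (this is consistent with the Pareto front of Theorem~\ref{thm:paretoset}). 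Thus $P$ injects into $\{v_0,\dots,v_n\}$ with image an antichain, so it suffices to bound the largest antichain among these $n+1$ values.

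Then I would invoke Dilworth's theorem: it is enough to cover $\{v_0,\dots,v_n\}$ by $n-2k+3$ chains, where a chain is a subset totally ordered under $\succeq$. The cover I propose attaches each dominated family, as a whole, below one extreme. The family $v_1 \prec v_2 \prec \cdots \prec v_{k-1}$ is a chain (both coordinates strictly increase with $i$), and one checks $v_{k-1} \preceq v_n$, so $\{v_1,\dots,v_{k-1}\} \cup \{v_n\}$ is a single chain of $k$ values; symmetrically, the family $v_{n-1} \prec \cdots \prec v_{n-k+1}$ together with $v_0$ (using $v_{n-k+1} \preceq v_0$) is a chain of $k$ values. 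The remaining inner values $v_k, v_{k+1}, \dots, v_{n-k}$ are taken as $n-2k+1$ singleton chains. This gives $1 + 1 + (n-2k+1) = n-2k+3$ chains covering all $k + k + (n-2k+1) = n+1$ values, so by Dilworth every antichain has size at most $n-2k+3$, whence $|P| \le n-2k+3$.

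The only genuine verifications are the elementary domination inequalities that make the proposed cover valid, namely $v_{k-1} \preceq v_n$ and $v_{n-k+1} \preceq v_0$, together with correctness of the value table at the regime boundaries $i \in \{0,k-1,k,n-k,n-k+1,n\}$. These are routine, so the step I expect to need the most care is the bookkeeping: ensuring the index ranges of the two families and the inner block partition $[0..n]$ exactly, so the chain count is $n-2k+3$ and not off by one, and that the (uniform) attachment below the extremes rather than below $v_k, v_{n-k}$ keeps the count correct even in the degenerate case $k=\frac n2$, where the inner block collapses to the single value $v_{n/2}$. As a sanity check, the set $F^*$ of Theorem~\ref{thm:paretoset} is an antichain of size exactly $n-2k+3$, so the bound is tight. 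One can also avoid Dilworth: since every inner value weakly dominates every value of both dominated families, an antichain either contains no such value --- leaving at most the $n-2k+3$ Pareto values --- or contains at most one value from the families while then excluding all inner values, and a short case distinction gives the same bound.
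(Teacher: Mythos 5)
Your proof is correct. The paper itself offers no written proof of Corollary~\ref{cor:population} --- it is asserted as following ``easily'' from Theorem~\ref{thm:paretoset} --- so your argument supplies the missing details rather than diverging from an existing one, and it does so along the lines the paper implicitly intends: since weak domination is mutual for equal objective values, $P$ injects into the $n+1$ possible values $v_0,\dots,v_n$ indexed by $|x|_1$, and the two ``gap'' families $\{(k+i,i)\}_{i\in[1..k-1]}$ and $\{(n-i,k+n-i)\}_{i\in[n-k+1..n-1]}$ are totally ordered (the paper uses exactly this monotonicity inside the proof of Theorem~\ref{thm:gsemoojzj}). Your value table is correct in all six boundary regimes (implicitly assuming $k\le\lfloor n/2\rfloor$, e.g.\ for $f_1=k+i$ on $i\in[1..k-1]$, which matches the regime where the paper uses the corollary), the attachment inequalities $v_{k-1}\preceq v_n$ and $v_{n-k+1}\preceq v_0$ hold, and the chain count $1+1+(n-2k+1)=n-2k+3$ is right, including the degenerate cases $k=1$ (empty families) and $k=\lfloor n/2\rfloor$. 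Two small remarks. First, invoking Dilworth's theorem is overkill: the direction you use --- a cover by $m$ chains bounds every antichain by $m$, since an antichain meets each chain at most once --- is the trivial pigeonhole direction; Dilworth is the nontrivial converse and is not needed. Second, your closing elementary sketch has a minor slip: an antichain \emph{can} contain one value from each of the two dominated families, since $(k+j,j)$ with $j\in[1..k-1]$ and $(n-i,k+n-i)$ with $i\in[n-k+1..n-1]$ are incomparable (first coordinates satisfy $k+j>n-i$, second $j<k+n-i$), so ``at most one value from the families'' is not quite accurate; the case distinction still closes, because any antichain containing a gap value excludes all inner values and at least one extreme ($v_n$ dominates the low family, $v_0$ the high family), hence has size at most $2\le n-2k+3$ for $k\le\lfloor n/2\rfloor$. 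Your observation that the Pareto front $F^*$ realizes the bound, so it is tight, is a nice check the paper does not state explicitly.
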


\begin{proof}
  We first note that any solution that lies on the Pareto front dominates any solution not on the Pareto front. Hence either $P$ is a subset of the Pareto front or it contains no point on the Pareto front. In the first case, Theorem~\ref{thm:paretoset} immediately shows our claim. Hence assume that $P$ contains no point of the Pareto front. Let $x,y \in P$ with $|x|_1, |y|_1 \in [1..k-1]$. Assume without loss of generality that $|x|_1 \le |y|_1$. Then $x \preceq y$. Hence $P$ contains at most one solution $x$ with $|x|_1 \in [1..k-1]$. A symmetric argument shows that $P$ contains at most one solution $x$ with $|x|_1 \in [n-k+1..n-1]$. Hence $|P| \le 2  {~\le n-2k+3}$.
\end{proof}

\begin{figure}[!ht]
\centering
\includegraphics[width=3.8in]{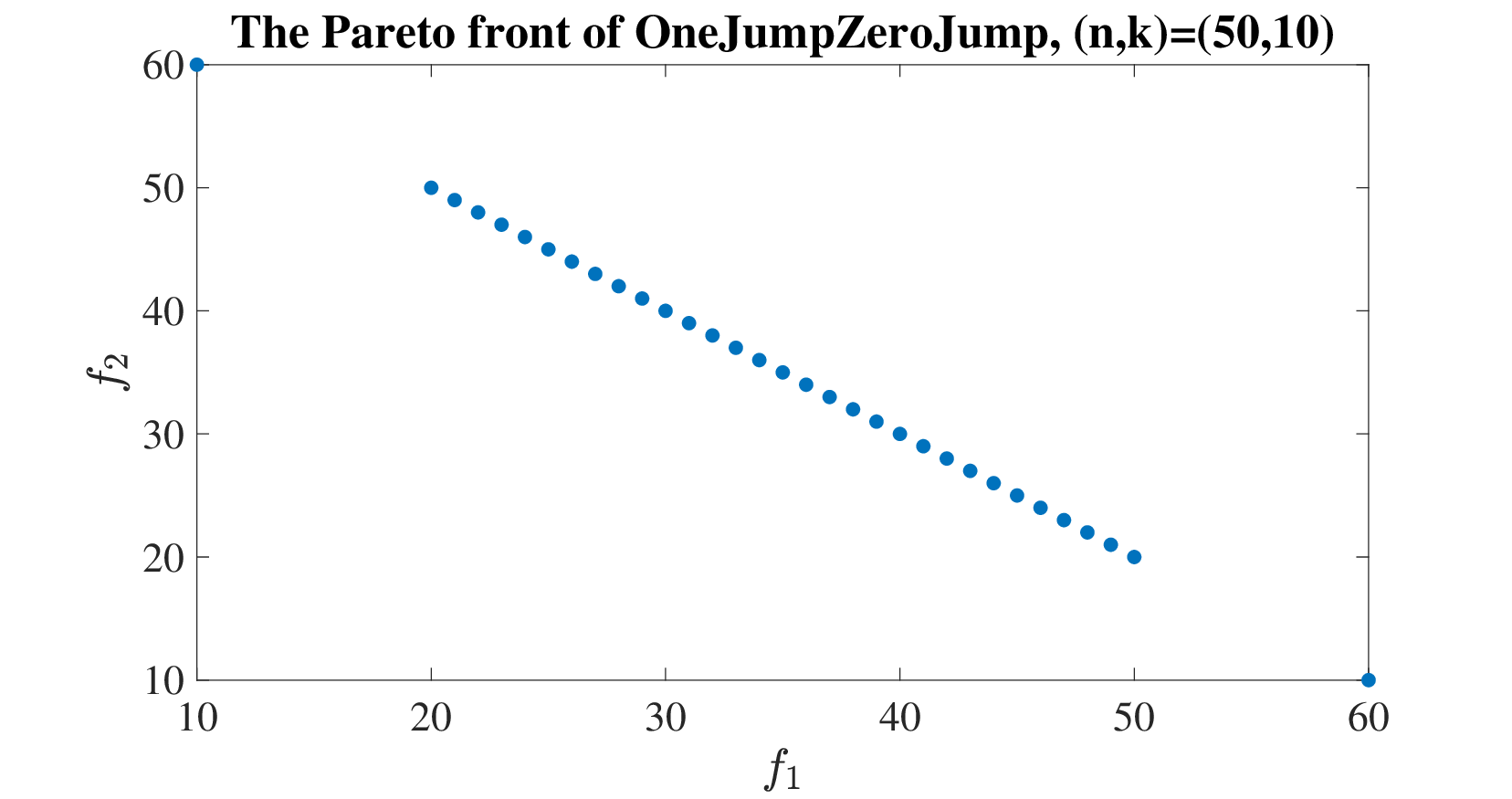}
\caption{The Pareto front for the $\ojzj_{n,k}$ function with $(n,k)=(50,10)$.}
\label{fig:ojzjPF}
\end{figure}

\section{SEMO Cannot Optimize $\ojzj$ Functions}
\label{sec:semo}

The simple evolutionary multiobjective optimizer (SEMO), proposed by~\cite{LaumannsTZWD02}, is a well-studied basic benchmark algorithm in multiobjective evolutionary theory~\cite{QianYZ13,LiZZZ16}. It is a multiobjective analogue of the randomized local search (RLS) algorithm, which starts with a random individual and tries to improve it by repeatedly flipping a single random bit and accepting the better of parent and this offspring. As a multiobjective algorithm trying to compute the full Pareto front, the SEMO naturally has to work with a non-trivial population. This is initialized with a single random individual. In each iteration, a random parent is chosen from the population. It generates an offspring by flipping a random bit. The offspring enters the population if it is not weakly dominated by some individual already in the population. In this case, any individual dominated by it is removed from the population. The details of SEMO are shown in Algorithm~\ref{alg:semo}. We note that more recent works such as~\cite{FriedrichHN10} define the (G)SEMO minimally different, namely so that in case of equal objective values the offspring enters the population and replaces the existing individual with this objective value. Preferring offspring in case of equal objective values allows EAs to traverse plateaus of equal fitness and this is generally preferred over keeping the first solution with a certain objective value. For our purposes, this difference is not important since all points with equal fitness behave identically. 
\begin{algorithm}[!ht]
    \caption{SEMO}
    {\small
    \begin{algorithmic}[1]
    \STATE {Generate $x\in\{0,1\}^n$ uniformly at random and $P\leftarrow \{x\}$}
    \LOOP
    \STATE {Uniformly at random select one individual $x$ from $P$}
    \STATE {Generate $x'$ via flipping one bit chosen uniformly at random}
    \IF {there is no $y \in P$ such that $x' \preceq y$}
    \STATE {$P=\{z\in P \mid z \npreceq x'\} \cup \{x'\}$} 
    \ENDIF
    \ENDLOOP
    \end{algorithmic}
    \label{alg:semo}
    }
\end{algorithm}

In the following Theorem~\ref{thm:semoojzj}, we show that the SEMO cannot cope with the multimodality of the $\ojzj_{n,k}$ function. Even with infinite time, it does not find the full Pareto front of any $\ojzj$ function. We note that this result is not very surprising and we prove it rather for reasons of completeness. It is well-known in single-objective optimization that the \emph{randomized local search} heuristic with positive (and often very high) probability fails on multimodal problems (the only published reference we found is~\cite[Theorem~9]{DoerrJK08}, but surely this was known before). It is for the same reason that the SEMO algorithm cannot optimize the $\ojzj$ function. 

\begin{theorem}
For all $n,k\in \N$ with $k \in [2..\lfloor \frac n2 \rfloor]$, the SEMO cannot optimize the $\ojzj_{n,k}$ function.
\label{thm:semoojzj}
\end{theorem}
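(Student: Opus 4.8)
The plan is to show that no population ever reached by the SEMO can simultaneously cover the two extreme points of the Pareto front, $(k,n+k)$ and $(n+k,k)$, which by Theorem~\ref{thm:paretoset} are attained only by $0^n$ and $1^n$ respectively. Since covering $F^*$ requires both of these objective values to be present in the population, it suffices to prove that $0^n$ and $1^n$ can never both belong to the population at the same time, no matter how long SEMO runs.

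First I would record three elementary facts. (i) \emph{Antichain invariant}: by the acceptance condition of Algorithm~\ref{alg:semo}, the population $P$ is at every iteration an antichain with respect to $\preceq$, i.e.\ no member weakly dominates another; this follows by induction, since a newly accepted offspring is not weakly dominated by any surviving member and all members it weakly dominates are deleted. (ii) \emph{Domination of the valleys}: directly from the definition of $\ojzj_{n,k}$, the point $1^n$ (value $(n+k,k)$) strictly dominates every $x$ with $|x|_1\in[1..k-1]$ (value $(k+|x|_1,|x|_1)$), and symmetrically $0^n$ (value $(k,n+k)$) strictly dominates every $x$ with $|x|_1\in[n-k+1..n-1]$. (iii) \emph{Persistence}: both $0^n$ and $1^n$ are Pareto optimal and are the \emph{unique} search points attaining their objective values (only $x=1^n$ gives $f_1=n+k$, only $x=0^n$ gives $f_2=n+k$); hence once either enters $P$, the only offspring that could weakly dominate it is the point itself, which the acceptance test rejects, so it is never removed.

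Next I would exploit the single-bit nature of the SEMO. The point $1^n$ can be produced only by flipping the unique zero of a parent with $|x|_1=n-1$, and $0^n$ only by flipping the unique one of a parent with $|x|_1=1$; in either case this parent must lie in $P$ at that iteration. Combining this with (ii) and (i): at the iteration in which $0^n$ is first created, its parent (a point with $|x|_1=1$) lies in $P$, so $1^n\notin P$ at that moment, as otherwise $1^n$ would strictly dominate the parent, contradicting the antichain invariant. By persistence (iii), $1^n\notin P$ at that moment means $1^n$ has not yet been created, so the first appearance of $0^n$ strictly precedes that of $1^n$. The fully symmetric argument shows that the first appearance of $1^n$ strictly precedes that of $0^n$. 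Since a single iteration produces only one offspring, these two strict orderings are contradictory whenever both points are ever created.

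Finally I would dispatch the boundary cases, namely when $0^n$ or $1^n$ is the initial random individual. If, say, $0^n$ is the initial individual, then by (iii) it is present in $P$ throughout the run, so by (ii) and (i) no point with $|x|_1\in[n-k+1..n-1]$ can ever enter $P$, whence $1^n$ can never be created; the symmetric case is identical. In every case at least one of $0^n,1^n$ is absent from $P$ at all times, so $F^*$ is never fully covered, even with unlimited iterations, which is the claim. The only mildly delicate point is the bookkeeping of these ``first appearance times'' together with the persistence claim (iii), which is what forbids an extreme point from leaving and later re-entering $P$ and thereby circumventing the ordering contradiction; everything else is a direct consequence of the domination relations read off from the definition of $\ojzj_{n,k}$ and Theorem~\ref{thm:paretoset}.
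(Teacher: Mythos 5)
Your proof is correct, but it takes a genuinely different route from the paper's. The paper assumes w.l.o.g.\ that the initial individual has at most $\frac n2$ ones and shows that $1^n$ specifically is never generated: since the SEMO flips single bits, either the run never creates a point with $n-k$ ones (and then ``never reach $n-k$ or more ones'' is a loop invariant), or the first such point enters the population and---because every point with $|x|_1\in[k..n-k]$ has value at least $(2k,2k)$ while the valley points near $1^n$ have value at most $(k-1,2k-1)$---permanently bars all points with $|x|_1\in[n-k+1..n-1]$ from the population, so a parent with $n-1$ ones is never available. You instead prove a symmetric mutual-exclusion statement: combining the antichain invariant of Algorithm~\ref{alg:semo}, the persistence of the uniquely-attained extreme points, and the fact that the first creation of each extreme requires a valley parent in $P$ that the opposite extreme would strictly dominate, you conclude that the first appearance of $0^n$ must strictly precede that of $1^n$ and vice versa, so at most one extreme is ever created in any run. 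Both arguments use $k\ge 2$ only to place $1$ and $n-1$ inside the valleys, and both rest on domination of valley points, but the dominators differ (inner Pareto-optimal points in the paper, the opposite extreme point in yours), as does the global structure (ridge-crossing invariant versus temporal-ordering contradiction). Your version avoids the w.l.o.g.\ reduction and the crossing-level invariant at the cost of some bookkeeping (persistence plus the antichain property, both of which you verify correctly, including acceptance of a newly generated extreme when absent and the boundary case of an extreme initial individual); in exchange it yields the symmetric conclusion that $0^n$ and $1^n$ can never coexist, whereas the paper's argument pins down which extreme is unreachable given the starting half.
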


\begin{proof}
Note that any individual with $m' \in [k..n-k]$ ones has the function value $(m'+k,n-m'+k) \ge (2k,2k)$. Since a search point with $m \in [1..k-1]$ ones has the function value $(m+k,m)\le(2k-1,k-1)$ and a search point with $m \in [1..k-1]$ zeros has the function value $(m,m+k)\le(k-1,2k-1)$, we know that any newly-generated search point with  number of ones in $[k..n-k]$ will lead to the removal from the current population of all individuals with number of zeros or ones in $[1..k-1]$ and will also prevent any such individual from entering the population in the future.

W.l.o.g., suppose that the initial individual has at most $\frac n2$ ones. We show that $1^n$ can never enter the population.
 Assume first that never in this run of the SEMO a search point with $n-k$ ones is generated. Since each offspring has a Hamming distance of one from its parent, the statement ``Never an individual with $n-k$ or more ones is generated''  {holds in each generation.}
 In particular, the search point $1^n$ is never generated. 

Hence assume now that at some time $t$ for the first time an individual $x$ with $n-k$ ones is generated. As in the previous paragraph, up to this point no search point with more than $n-k$ ones is generated. In particular, the point $1^n$ is not generated so far. Since $x$ lies on the Pareto front, it enters the population. From this point on, by our initial considerations, no individual with $n-k+1$ to $n-1$ ones can enter the population. In particular, at no time a parent with $n-1$ ones can be selected, which is a necessary precondition for generating the point~$1^n$. Consequently, the point~$1^n$ will never be generated. 
\end{proof}

\section{Runtime Analysis for the GSEMO}
\label{sec:gsemo}
As we have shown in the previous section, to deal with the \ojzj benchmark a mutation-based algorithm needs to be able to flip more than one bit. The global SEMO (GSEMO), proposed by~\cite{Giel03}, is a well-analyzed MOEA that has this ability. Generalized from the \oea algorithm, it uses the standard bit-wise mutation, that is, each bit is flipped independently with the same probability of, usually, $1/n$. The details are shown in Algorithm~\ref{alg:gsemo}. 
\begin{algorithm}[!ht]
    \caption{GSEMO}
    {\small
    \begin{algorithmic}[1]
    \STATE {Generate $x\in\{0,1\}^n$ uniformly at random and $P\leftarrow \{x\}$}
    \LOOP
    \STATE {Uniformly at random select one individual $x$ from $P$}
    \STATE {Generate $x'$ via independently flipping each bit value of $x$ with probability $1/n$}
    \IF {there is no $y \in P$ such that $x' \preceq y$}
    \STATE {$P=\{z\in P \mid z \npreceq x'\} \cup \{x'\}$}
    \ENDIF
    \ENDLOOP
    \end{algorithmic}
    \label{alg:gsemo}
    }
\end{algorithm}

Theorem~\ref{thm:gsemoojzj} will show that GSEMO can find the Pareto front. To make the main proof clearer, some propositions are extracted as the following lemmas. 
Lemma~\ref{lem:spread} shows that once a solution in the inner part of the Pareto front is part of the population, it takes $O(n^2 \log n)$ iterations until the whole inner part of the Pareto front is covered.

\begin{lemma}
Consider optimizing the function $f := \ojzj_{n,k}$ via the GSEMO. Let $a_0\in[2k..n]$. Assume that at some time the population contains an individual $x$ with $f(x) = (a_0,2k+n-a_0)$. Then the expected number of interations until all $(a,2k+n-a)$, $a\in[2k..n]$, are covered by the population is at most $2en(n-2k+3) (\ln \lceil \frac n2 \rceil+1)$ iterations.
\label{lem:spread}
\end{lemma}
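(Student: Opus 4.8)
The plan is to view the inner part of the Pareto front as a path indexed by $a\in[2k..n]$, where the value $(a,2k+n-a)$ is realized by any search point with exactly $a-k$ ones, and to show that the population gradually ``spreads'' along this path from the given value $a_0$ to both endpoints $2k$ and $n$. The first step I would carry out is to establish \emph{monotonicity of coverage}: every inner point is Pareto optimal by Theorem~\ref{thm:paretoset}, so once a search point realizing some value $(a,2k+n-a)$ enters $P$ it is never weakly dominated and hence never removed. Thus the set of covered values is non-decreasing over time, and it suffices to bound the time until it has grown to include both $2k$ and $n$.

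Next I would track the two frontiers of the covered region separately. Writing $m_0 := a_0-k\in[k..n-k]$, let the right frontier be the maximum and the left frontier the minimum number of ones among covered inner points; both start at $m_0$. I would then lower bound the per-iteration probability of advancing a frontier by one. To push the right frontier from $j$ to $j+1$ it suffices to select the (persistent) member with $j$ ones, probability at least $1/|P|$, and then flip exactly one of its $n-j$ zero-bits, probability at least $\frac{n-j}{n}(1-\tfrac1n)^{n-1}\ge\frac{n-j}{en}$; the offspring has $j+1\le n-k$ ones, so it is again inner Pareto optimal and enters $P$. Using the uniform bound $|P|\le n-2k+3$ from Corollary~\ref{cor:population} (which conveniently absorbs the fact that $|P|$ fluctuates while the other frontier advances), the success probability is at least $\frac{n-j}{en(n-2k+3)}$; the symmetric estimate for the left frontier, advancing from $i$ to $i-1$ by flipping one of $i$ ones, gives at least $\frac{i}{en(n-2k+3)}$.

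With these level-wise probabilities a fitness-level (summed geometric waiting time) argument bounds the expected time for the right frontier to reach $n-k$ by $\sum_{j=m_0}^{n-k-1}\frac{en(n-2k+3)}{n-j}$ and, symmetrically, the left frontier to reach $k$ by $\sum_{i=k+1}^{m_0}\frac{en(n-2k+3)}{i}$, where I use that frontier jumps larger than one only help. Since covering towards the left never obstructs covering towards the right, the total expected time is at most the sum of the two, namely $en(n-2k+3)\left(\sum_{s=k+1}^{n-m_0}\frac1s+\sum_{i=k+1}^{m_0}\frac1i\right)$. Finally I would simplify the harmonic sums: each is at most $1+\ln(\cdot)$ of its upper limit, giving $2+\ln\!\left(m_0(n-m_0)\right)$, and $m_0(n-m_0)\le(n/2)^2$ then yields $2(\ln\lceil\frac n2\rceil+1)$, hence the claimed $2en(n-2k+3)(\ln\lceil\frac n2\rceil+1)$.

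I expect the main obstacle to be the bookkeeping around the two \emph{simultaneous} frontiers: one must argue cleanly that the two directional processes may be bounded independently and then added, despite their sharing a single selection step in each iteration and a population size that grows as the opposite frontier expands. This is resolved by the uniform population bound and by the observation that each directional advance only requires the corresponding frontier individual, which is persistent. A secondary subtlety is getting the precise constant: it is exactly the product $m_0(n-m_0)$, maximized at the centre of the front, that produces the $\ln\lceil\frac n2\rceil$ factor rather than a cruder $\ln n$, so I would take care that the two harmonic sums are combined as $\ln(m_0)+\ln(n-m_0)$ before applying $m_0(n-m_0)\le(n/2)^2$.
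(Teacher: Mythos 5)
Your proposal is correct and follows essentially the same route as the paper's proof: a fitness-level argument over the inner part of the front, with the per-step success probability $\frac{1}{n-2k+3}\cdot\frac{n-j}{n}\left(1-\frac1n\right)^{n-1}$ (resp.\ its mirror) and the population bound from Corollary~\ref{cor:population}, followed by summing the two directional waiting times. The only difference is cosmetic: you bound each harmonic sum by $1+\ln$ of its upper limit and use $m_0(n-m_0)\le (n/2)^2$, whereas the paper pairs the summands of the two sums and bounds the total by twice the worst-case sum; both yield the stated $2en(n-2k+3)\left(\ln\left\lceil\frac n2\right\rceil+1\right)$.
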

\begin{proof}
Note that any solution corresponding to $(a,2k+n-a)$ in the Pareto front contains exactly $a-k$ ones. If the population contains such an individual, then the probability to find $(a+1,2k+n-a-1)$ in the Pareto front is at least
\begin{align*}
\frac{1}{n-2k+3}\frac{n-a+k}{n}\left(1-\frac1n\right)^{n-1}\ge \frac{n-a+k}{en(n-2k+3)},
\end{align*}
where we use that the population size is at most $n-2k+3$, see Corollary~\ref{cor:population}. Hence, by a simple Markov chain argument (similar to Wegener's fitness level method~\cite{Wegener01}), the expected time to find all $(a,2k+n-a)$, $a\in[a_0+1..n]$, is at most
\begin{align}
\sum_{a=a_0}^{n-1} \frac{en(n-2k+3)}{n-a+k}.
\label{eq:half}
\end{align}

Similarly, the probability to find $(a-1,2k+n-a+1)$ in the Pareto front once the population contains a solution with function value $(a,2k+n-a)$ is at least
\begin{align*}
\frac{1}{n-2k+3}\frac{a-k}{n}\left(1-\frac1n\right)^{n-1}\ge \frac{a-k}{en(n-2k+3)},
\end{align*}
and the expected time to find all $(a,2k+n-a)$, $a\in[2k..a_0-1]$, is at most
\begin{align}
\sum_{a=a_0}^{2k+1} \frac{en(n-2k+3)}{a-k}.
\label{eq:anhalf}
\end{align}

By comparing the individual summands, the sum of (\ref{eq:half}) and (\ref{eq:anhalf}) is at most 
\begin{align*}
2\sum_{a=\lfloor n/2 \rfloor + k}^{n-1} &\frac{en(n-2k+3)}{n-a+k} \le {2en(n-2k+3)\left(\ln \left\lceil \frac n2 \right\rceil +1\right)}.
\qedhere
\end{align*}
\end{proof}

From a population covering the whole inner part of the Pareto front, the two missing extremal points of the front can be found in roughly $O(n^{k+1})$ iterations, as the following Lemma~\ref{lem:jump} shows.

\begin{lemma}
Consider the GSEMO optimizing the $\ojzj_{n,k}$ function. Assume that at some time, all $(a,2k+n-a)$, $a\in[2k..n]$, are  {covered by the current population}. Then the two missing points $(a,2k+n-a)$, $a\in\{k,n+k\}$, of the Pareto front will be found in an expected number of at most $\frac32en^k(n-2k+3)$ additional generations.
\label{lem:jump}
\end{lemma}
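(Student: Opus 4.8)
The plan is to identify the two missing front points with the two extremal search points and then bound the time to produce each of them by a direct ``flip exactly the right bits'' argument. By Theorem~\ref{thm:paretoset}, the front point with $a=k$ is $(k,n+k)=f(0^n)$ and the one with $a=n+k$ is $(n+k,k)=f(1^n)$, so the two missing points are exactly those realised by $0^n$ and $1^n$. Moreover, every individual realising an inner front value $(a,2k+n-a)$ has exactly $a-k$ ones; in particular the inner point $a=2k$ is realised by an individual with exactly $k$ ones and the inner point $a=n$ by one with exactly $n-k$ ones. Since $(2k,n)$ and $(n,2k)$ both lie on the Pareto front and the GSEMO only ever discards dominated individuals, these two fitness values stay covered forever once present. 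Hence, under the assumption of the lemma, in \emph{every} subsequent iteration the population contains an individual with exactly $k$ ones and one with exactly $n-k$ ones.

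Next I would lower-bound the per-iteration probability of generating each extremal point. From the individual with $k$ ones, flipping precisely its $k$ one-bits and none of its $n-k$ zero-bits yields $0^n$, which, being Pareto optimal, then enters and never leaves the population. Using the population-size bound $|P|\le n-2k+3$ from Corollary~\ref{cor:population} and $(1-1/n)^{n-k}\ge(1-1/n)^{n-1}\ge 1/e$, this happens in a single iteration with probability at least
\[
\frac{1}{n-2k+3}\left(\frac1n\right)^{k}\left(1-\frac1n\right)^{n-k}\;\ge\;\frac{1}{e\,n^{k}(n-2k+3)}\;=:\;p .
\]
Symmetrically, flipping the $k$ zero-bits of the individual with $n-k$ ones produces $1^n$ with probability at least $p$ per iteration.

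The final and only mildly delicate step is to combine the two waiting times to obtain the factor $\frac32$ rather than the naive $2$ from a union bound. The key observation is that one iteration produces a single offspring, so the events ``the offspring equals $0^n$'' and ``the offspring equals $1^n$'' are disjoint. Consequently, while both extremal points are still missing, the probability of producing at least one of them in an iteration is at least $2p$, and since this bound holds in every iteration the expected time to obtain the first of the two is at most $\frac{1}{2p}$; once one is found, the other appears after at most $\frac1p$ further iterations in expectation. By linearity the total expected time is at most $\frac{1}{2p}+\frac1p=\frac{3}{2p}=\frac32\,e\,n^{k}(n-2k+3)$, as claimed. I expect the main thing to get right to be exactly this disjointness argument together with the persistence of the two ``source'' individuals in the population; the probability estimates themselves are routine geometric-distribution bounds.
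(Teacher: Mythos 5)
Your proposal is correct and follows essentially the same route as the paper's proof: the same per-iteration probability bound $\frac{1}{n-2k+3}\left(\frac1n\right)^{k}\left(1-\frac1n\right)^{n-k}\ge \frac{1}{en^k(n-2k+3)}$ from the boundary individuals with $k$ and $n-k$ ones, followed by the decomposition $\frac{1}{2p}+\frac{1}{p}=\frac{3}{2p}$ for finding first one and then the other extremal point. You are merely more explicit than the paper about why the factor $2$ in the first waiting time is legitimate (disjointness of the two offspring events) and about the persistence of the two source individuals, which is a welcome clarification rather than a deviation.
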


\begin{proof}
We pessimistically calculate the time to generate two elements $(a,2k+n-a)$, $a\in\{k,n+k\}$, in the Pareto front ignoring the fact that the current population may contain the corresponding solutions $0^n$ and $1^n$ already. 
Similar to the proof of Lemma~\ref{lem:spread}, we note that the probability to find $(k,k+n)$ in the Pareto front from a solution with function value $(2k,n)$ is at least
\begin{align*}
\frac{1}{n-2k+3} \left(\frac1n\right)^k \left(1-\frac1n\right)^{n-k} \ge \frac{1}{en^k(n-2k+3)}.
\end{align*}
The same estimate holds for the probability to find $(k+n,k)$ in the Pareto front from one solution with  function value $(n,2k)$.
Hence, the expected time to find one of $(k,n+k)$ and $(k+n,k)$ is at most $\frac e2 n^k(n-2k+3)$, and the expected time to find the remaining element of the Pareto front is at most $en^k(n-2k+3)$ additional iterations.
\end{proof}

Now we establish our main result for the GSEMO.

\begin{theorem}
Let $n \in \N_{\ge 2}$ and $k \in [1..\lfloor \frac n2 \rfloor]$. The expected runtime of the GSEMO optimizing the $\ojzj_{n,k}$ function is at most 
\[
e(n-2k+3)(\tfrac32n^k+2n\ln \lceil \tfrac n2 \rceil+3).
\]
For $2 \le k = o(n)$, this bound is $\frac 32 e n^{k+1} \pm o(n^{k+1})$.
\label{thm:gsemoojzj}
\end{theorem}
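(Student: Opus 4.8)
The plan is to decompose a run of the GSEMO into three consecutive phases and to bound the expected length of each, using throughout that by Corollary~\ref{cor:population} the population never exceeds $n-2k+3$ individuals, so that any fixed member is chosen as parent with probability at least $1/(n-2k+3)$. In Phase~1 the population acquires a first individual on the \emph{inner} part of the Pareto front, i.e.\ a point $x$ with $|x|_1\in[k..n-k]$, equivalently $f(x)=(a_0,2k+n-a_0)$ for some $a_0\in[2k..n]$. Phase~2 spreads this single inner point over the entire inner front $\{(a,2k+n-a)\mid a\in[2k..n]\}$, and Phase~3 adds the two extremal points $(k,k+n)$ and $(k+n,k)$. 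Phases~2 and~3 are precisely Lemmas~\ref{lem:spread} and~\ref{lem:jump}, so the only genuinely new argument needed is Phase~1, after which the three expected-time bounds are added by linearity of expectation, each phase bound being invoked afresh from the random time at which the previous phase ends (legitimate because the bounds hold from any admissible starting population).

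For Phase~1 I would exploit the $0$--$1$ symmetry of $\ojzj_{n,k}$ to assume that the random initial search point has at most $n/2$ ones. If it already has between $k$ and $n-k$ ones we are done, and this is in fact the overwhelmingly likely case for a uniformly random start. Otherwise $|x|_1\in\{0\}\cup[1..k-1]$ and the point lies in (or just below) the left valley, where a search point with $m\in[1..k-1]$ ones has value $(k+m,m)$. The decisive structural observation is that flipping a single $0$ to a $1$ then strictly increases \emph{both} objectives, so the offspring dominates its parent and is accepted; and when $m+1=k$ the offspring jumps directly onto the inner front with value $(2k,n)$. Thus the valley carries a monotone, always-accepted single-bit gradient pushing $m$ up to $k$. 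The only initialization not immediately of this type is the tip $0^n$, with value $(k,k+n)$: here one single-bit flip yields $(k+1,1)$, which is incomparable to $(k,k+n)$ and therefore enters the population, after which the same climb applies. Lower-bounding each improving step by $\tfrac{1}{n-2k+3}\cdot\tfrac{n-m}{n}(1-\tfrac1n)^{n-1}\ge \tfrac{n-m}{en(n-2k+3)}$ (and noting that during the climb the valley points form a domination chain, so the population stays tiny and the selection probability is in fact close to $1$) and summing the expected waiting times over the at most $k\le n/2$ values of $m$ yields a Phase~1 bound of order $O(n)$, dominated by the Phase~2 spreading term.

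Adding the three bounds then gives the result: Lemma~\ref{lem:jump} supplies the $\tfrac32 en^k(n-2k+3)$ contribution, Lemma~\ref{lem:spread} the $2en(n-2k+3)\ln\lceil n/2\rceil$ contribution, and the cheap Phase~1 together with the constant (boundary) parts of the two harmonic sums appearing in Lemma~\ref{lem:spread} accounts for the remaining additive $O(n-2k+3)$ term; after factoring out $e(n-2k+3)$ this matches the stated form $e(n-2k+3)\bigl(\tfrac32 n^{k}+2n\ln\lceil \tfrac n2\rceil+3\bigr)$.

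The step I expect to be the main obstacle is Phase~1, specifically ruling out that the search gets trapped. Because the extremal points $0^n$ and $1^n$ are Pareto optimal, they can reside in the population indefinitely without helping to reach the inner front, so I must verify carefully that the single-bit move off such a tip into the adjacent valley is always accepted (it is, by incomparability) and that every subsequent valley step is accepted and strictly progresses in the number of ones. A secondary, purely arithmetic difficulty is to carry out the final summation tightly enough that the leftover beyond the two dominant terms is only a constant multiple of $(n-2k+3)$ rather than a term growing with $n$; this hinges on the population being small during Phase~1 and on the climb needing at most $k\le n/2$ improving steps, each cheap because $n-m$ remains large throughout the valley.
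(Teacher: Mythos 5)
Your proposal takes essentially the same route as the paper's proof: the identical three-phase decomposition, with Phase~1 being the valley climb to the inner front via always-accepted one-bit improving steps at probability at least $\frac{1}{n-2k+3}\cdot\frac{n-m}{n}(1-\frac1n)^{n-1}$ per iteration (the paper treats the symmetric case of few zeros rather than few ones, summing the same waiting times to get at most $en(n-2k+3)$), followed by exactly Lemmas~\ref{lem:spread} and~\ref{lem:jump}. The one inaccuracy is your final accounting: adding the three bounds gives $e(n-2k+3)\bigl(\frac32 n^k + 2n\ln\lceil \frac n2\rceil + 3n\bigr)$ --- Phase~1 contributes up to $en(n-2k+3)$ and the ``$+1$'' terms inside Lemma~\ref{lem:spread} contribute another $2en(n-2k+3)$, so the leftover is $3n$ inside the parentheses, not the $O(1)$ (i.e.\ the ``$+3$'') you claim to recover; the ``$+3$'' in the theorem statement appears to be a slip in the paper itself, whose own proof sums to exactly the $3n$ term.
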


\begin{proof}
We first consider the time until the population $P$ contains an $x$ such that $(f_1(x),f_2(x))=(a,2k+n-a)$ for some $a\in[2k..n]$. If the initial search point has a number of ones in $[k..n-k]$, then such an $x$ is already found. Otherwise, the initial search point has at most $k-1$ ones or at most $k-1$ zeros. W.l.o.g., we consider an initial point with $m\le k-1$ zeros. If $m=0$, that is, the initial point is~$1^n$, then its function value is $(n+k,k)$. Since any point $\tilde{x}$ with $\tilde{m}\in [1..k-1]$ zeros has function value $(\tilde{m},k+\tilde{m})$, such a point is not dominated by $1^n$, and thus will be kept in the population. After that, since $(\tilde{m},k+\tilde{m})$ increases with respect to $\tilde{m}$, one individual with larger $\tilde{m} \in [1..k-1]$ will replace the one with smaller $\tilde{m} \in [1..k-1]$. Finally, any individual with $m'\in[k..n-k]$ zeros has function value $(n-m'+k,m'+k)>(k-1,2k-1)$ and thus dominates all search points with $\tilde{m} \in [1..k-1]$ zeros. Hence, we pessimistically add up the expected waiting times for increasing the number of zeros one by one until one individual with $k$ zeros is found, which need an expected number of iterations of at most
\begin{equation}
\begin{split}
\sum_{m=0}^{k-1}&\left(\frac{1}{n-2k+3} \frac{n-m}{n} \left(1-\frac{1}{n}\right)^{n-1}\right)^{-1} \le \sum_{m=0}^{k-1} \frac{en(n-2k+3)}{n-m} \\
\le{} &  {en}(n-2k+3)\frac{k}{n-k+1} \le en(n-2k+3).
\end{split}
\label{eq:ingap}
\end{equation}

Then from Lemma~\ref{lem:spread}, we know that all $(a,2k+n-a)$, $a\in[2k..n]$, in the Pareto front will be found in an expected number of at most $2en(n-2k+3) (\ln \lceil \frac n2 \rceil+1)$ iterations. After that, from Lemma~\ref{lem:jump}, the points $(k,n+k)$ and $(n+k,k)$ in the Pareto front will be found in an expected number of at most $\frac32en^k(n-2k+3)$ additional iterations.  
Hence, the expected iterations to find the Pareto front is at most $en(n-2k+3)+ 2en(n-2k+3) (\ln \lceil \frac n2 \rceil+1)+\frac32en^k(n-2k+3)= e(n-2k+3)(\frac32n^k+2n\ln \lceil \frac n2 \rceil+3)$.
%
\end{proof}

 {We finally show that this bound is very tight. For convenience, we only consider the case $k \ge 4$. In this case, it is sufficiently hard to generate an extremal point of the Pareto front from the inner part that we do not need a careful analysis of the population dynamics in the short initial phase in which the inner part of the Pareto front is detected, which is not the main theme of this work. That said, we are very optimistic that it would pose no greater difficulties to show that with high probability at the first time when a solution with less than $\frac 14 n$ or more than $\frac 34 n$ ones enters the population, the population already contains $\Theta(n)$ individuals. Hence from this point on, only every $\Theta(n)$ iterations an outer-most point will be chosen as parent, and this should suffice to prove an $\Omega(n^{k+1})$ lower bound also for $k=2$ and $k=3$.}
\begin{theorem}
Let $n \in \N_{\ge 8}$ and $k \in [4..\frac n2 -1]$. The expected runtime of the GSEMO optimizing the $\ojzj_{n,k}$ function is at least 
\[
\frac 32 e(n-2k+1)n^k\frac{n^k}{(n-1)^k} \left(1-\frac{1}{n^{n/4-2}}-\frac{2}{n^{n-2k}}-\frac{e+3}{n} - \frac{4e (\ln n+1)}{n^{k-3}}\right).
\]
If $k = o(n)$, this bound becomes $\frac 32 e n^{k+1} - o(n^{k+1})$ and matches the upper bound of Theorem~\ref{thm:gsemoojzj} apart from lower-order terms.
\label{thm:lowerbound}
\end{theorem}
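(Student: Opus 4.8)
The plan is to bound the total runtime from below by the time the GSEMO needs to generate the two extremal Pareto-optimal points $0^n$ and $1^n$, since the run cannot have found the full Pareto front (Theorem~\ref{thm:paretoset}) before both of these have entered the population. First I would record the structural facts about the population. Exactly as in the domination argument in the proof of Theorem~\ref{thm:semoojzj}, once any inner Pareto point is present, no search point with $|x|_1 \in [1..k-1] \cup [n-k+1..n-1]$ can enter or survive, and the inner Pareto points are never removed. Hence, before $0^n$ and $1^n$ are found, the population is a subset of the $n-2k+1$ inner Pareto points. Consequently the only realistic way to create $1^n$ is a direct mutation from an inner point $x$ that flips exactly the $|x|_0 \in [k..n-k]$ zero-bits and no one-bit, and symmetrically for $0^n$.

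The quantitative heart is an upper bound on the per-iteration probability $\lambda$ of producing a fixed extremal point. If the current population consists of inner points with zero-counts $z_i \in [k..n-k]$, the probability of producing $1^n$ in one step is $\frac{1}{|P|}\sum_i (1/n)^{z_i}(1-1/n)^{n-z_i}$. I would bound this geometric-type sum by its leading term (the boundary point $z_i = k$) times the factor $\frac{n-1}{n-2}$, and, crucially, use that throughout the long wait for the extremal points the inner front is already fully covered, so $|P| = n-2k+1$ (and only grows afterwards). This yields $\lambda \le \frac{1}{n-2k+1}(1/n)^k(1-1/n)^{n-k}\frac{n-1}{n-2}$, i.e.\ $1/\lambda \ge (n-2k+1)\frac{n-2}{n-1}\,n^k(1-1/n)^{-(n-k)}$; finally $(1-1/n)^{-n} \ge e$ extracts the constant $e$ and the remaining powers of $n$ and $n-1$, giving the leading factor of the claimed bound.

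I would then combine this with a two-target waiting-time argument. In every iteration of the main phase the conditional probability of generating \emph{either} missing extremal point is at most $2\lambda$, and after one of them has been found the conditional probability of generating the other is at most $\lambda$ (the point just added only enlarges $|P|$, lowering the rate). Hence the two hitting times stochastically dominate independent geometric variables, and the expected number of remaining iterations is at least $\frac{1}{2\lambda} + \frac{1}{\lambda} = \frac{3}{2\lambda}$, which is where the constant $\frac32$ originates.

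The main obstacle is the conditioning needed to guarantee that the population really has size $n-2k+1$ throughout this expensive phase; this is exactly what the $(1-\dots)$ factor accounts for. I would control it by failure-probability bookkeeping: (i) a Chernoff bound showing the random initial point lies in a central band of $|x|_1$, contributing the $n^{-(n/4-2)}$ term; (ii) a union bound over the $O(n^2\log n)$ iterations of the spreading phase (Lemma~\ref{lem:spread}) showing that, with probability $1 - O(\log n / n^{k-2})$, no extremal point is created before the inner front is complete, which becomes the $\frac{4e(\ln n+1)}{n^{k-3}}$ term; and (iii) a bound of order $n^{-(n-2k)}$ ruling out the rare long jumps from non-boundary points. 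Conditioned on these good events the full inner front is present before either extremal point appears, legitimizing the bound on $\lambda$, and $E[T] \ge \Pr[\text{good}]\cdot \frac{3}{2\lambda}$ gives the theorem. The delicate step is (ii): one must bound the extremal-creation probability \emph{uniformly} over the small but nonempty partial populations that occur while the front is still spreading, so that the transient cannot accidentally shortcut to $0^n$ or $1^n$.
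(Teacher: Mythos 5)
Your proposal follows essentially the same route as the paper's proof: condition on a good event (central initial point via Chernoff, no premature generation of $0^n$ or $1^n$ during the initial and spreading phases, full inner front of size $n-2k+1$ present), bound the per-iteration probability of hitting a fixed extremal point by its boundary term times the geometric-series factor $\frac{n-1}{n-2}$, and add the two waiting times $\frac{1}{2\lambda}+\frac{1}{\lambda}=\frac{3}{2\lambda}$ before multiplying by the probability of the good event. The one bookkeeping detail to align is your step (ii): the spreading phase has \emph{expected} length $O(n^2\log n)$, so a union bound over that many iterations (your $O(\log n/n^{k-2})$) only holds with constant probability; as in the paper, one must apply Markov's inequality at the longer threshold $2en^3(\ln n +1)$ (failure probability $\frac 1n$, absorbed into $\frac{e+3}{n}$), which is precisely what produces the $\frac{4e(\ln n+1)}{n^{k-3}}$ term you quote.
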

\begin{proof}
 {We first prove that with high probability the inner region of the Pareto front will be fully covered before $0^n$ or $1^n$ is generated and then use this to bound the time until the two extremal points of the Pareto front are found.}

For the initial individual $x$, we have $E[|x|_1]=\frac 12n$. With the additive Chernoff bound, see, e.g.,~\cite[Theorem~1.10.7]{Doerr20bookchapter}, we know that with probability at least 
\begin{align*}
1-2\exp\left(-\frac 2n \left(\frac n4\right)^2\right)=1-2\exp\left(-\frac18n\right),
\end{align*}
the initial individual has $\frac14 n < |x|_1 < \frac 34 n$. Then $|x|_1 < k$ or $|x|_1 > n-k$ only if $k > \frac14 n$. If $k>\frac 14 n$, w.l.o.g., let $|x|_1 < k$, then we will show that with probability at least $1-{n^{-n/4+2}}-{2}{n^{-n+2k}}-en^{-1}$, an individual $x'$ with $k\le |x'|_1 \le n-k$ is generated earlier than $1^n$ or $0^n$. Note that before such an $x'$ appears, the population consists of one individual with $\frac 14n < |x|_1 < k$ except if $1^n$, $0^n$, or a $y$ with $|y|_1 > n-k$ is generated. We note that the probability to generate one of the exceptions from one individual with $\frac 14n < |x|_1 < k$ is at most
\begin{align*}
\frac{1}{n^{n-|x|_1}}+\frac{1}{n^{|x|_1}}+\frac{1}{n^{n-k+1-|x|_1}} < \frac{1}{n^{n-k+1}}+\frac{1}{n^{\frac14 n}}+\frac{1}{n^{n-2k+2}} 
\le \frac{1}{n^{\frac14 n}}+\frac{2}{n^{n-2k+2}},
\end{align*}
where the last inequality uses $k\ge 1$. Then with probability at least
\begin{align*}
\left(1-n^{-n/4}-{2}{n^{-(n-2k+2)}}\right)^{n^{2}} \ge 1-{n^{-n/4+2}}-{2}{n^{-n+2k}},
\end{align*} 
the population consists of only one individual with $\frac 14 n< |x|_1 < k$ in $n^{2}$ iterations. Then from the similar calculation in (\ref{eq:ingap}) except changing $n-2k+3$ to $1$, we know that it takes an expected time of at most $en$ iterations to generate such an $x'$. Via Markov's inequality, we know that with probability at least $1-\frac{en}{n^{2}}=1-\frac{e}{n}$, such an $x'$ will be generated in $n^{2}$ iterations. Hence, with probability at least $1-{n^{-n/4+2}}-{2}{n^{-n+2k}}-en^{-1}$, an individual $x'$ with $k\le |x'|_1 \le n-k$ is generated earlier than $1^n$ or $0^n$.

From this point on, we will first show that with probability at least $1-\frac{8e (\ln n+1)}{n^{k-3}}$, all $(a,2k+n-a)$, $a\in[2k..n]$, in the Pareto front are found before $0^n$ or $1^n$ is generated. It is not difficult to see that any search point $y$ in either of the two gaps, that it, with $|y|_1 \in [1..k-1] \cup [n-k+1..n-1]$, has both objective values less than any search point $z$ with $|z|_1\in [k..n-k]$, and thus cannot enter into the population. Therefore, the probability to generate $0^n$ or $1^n$ from some parent $x$ with $|x|_1 \in [k..n-k]$ is at most
\begin{align}
\frac{1}{n^{|x|_1}}\left(1-\frac{1}{n}\right)^{n-|x|_1}+\frac{1}{n^{n-|x|_1}}\left(1-\frac{1}{n}\right)^{|x|_1} \le \frac{2}{n^k}.
\label{eq:twoends}
\end{align}
From Lemma~\ref{lem:spread}, we know that all $(a,2k+n-a)$, $a\in[2k..n]$, in the Pareto front will be found in an expected number of at most $2en(n-2k+3) (\ln \lceil \frac n2 \rceil+1)$ iterations. Via Markov's inequality, we know that with probability at least $1-\frac{2en(n-2k+3) (\ln \lceil \frac n2 \rceil+1)}{2en^3 (\ln n+1)} \ge1-\frac{1}{n}$, all $(a,2k+n-a)$, $a\in[2k..n]$, in the Pareto front will be found in $2en^3 (\ln n+1)$ iterations. Hence, with (\ref{eq:twoends}), we know that the event that during all $2en^3 (\ln n+1)$ iterations, $0^n$ or $1^n$ is not generated,  {has probability} at least
\begin{align*}
\left(1-\frac{2}{n^k}\right)^{2en^3 (\ln n+1)} \ge 1-\frac{4en^3 (\ln n+1)}{n^k}=1-\frac{4e (\ln n+1)}{n^{k-3}}.
\end{align*}

Now all $(a,2k+n-a)$, $a\in[2k..n]$, in the Pareto front are found and the current population has size $n-2k+1$. From this time on, we compute the probability to generate the remaining Pareto optimal solution $0^n$ and $1^n$ as
\begin{align*}
\sum_{i=k}^{n-k}&\frac{1}{n-2k+1}\left(\frac{1}{n^i}\left(1-\frac 1n\right)^{n-i}+\frac{1}{n^{n-i}}\left(1-\frac 1n\right)^{i}\right)\\
= {} &  {\frac{2}{(n-2k+1)n^k}}\left(1-\frac{1}{n}\right)^{n-k}\left(1+\sum_{j=1}^{n-2k} \frac{1}{n^j}\left(1-\frac1n\right)^{-j}\right)\\
\le{}&  {\frac{2}{(n-2k+1)n^k}}\left(1-\frac{1}{n}\right)^{n-k} \left(1+\frac{\frac1{n-1}}{1-\frac1{n-1}}\right) \\
={} &  {\left(1+\frac{1}{n-2}\right)}\frac{2}{(n-2k+1)n^k} \left(1-\frac{1}{n}\right)^{n-k}
=\frac{n-1}{n-2}\frac{2}{(n-2k+1)n^k} \left(1-\frac{1}{n}\right)^{n-k}.
\end{align*}
Then the expected time to find $0^n$ or $1^n$ is at least $\frac{1}{2}(n-2k+1)n^k\left(1-\frac{1}{n}\right)^{k-n}\frac{n-2}{n-1}$. W.l.o.g., $0^n$ is found. Now the current population size is $n-2k+2$, and the probability to find the remaining $1^n$ is
\begin{align*}
\frac{1}{n-2k+2}&\cdot \frac{1}{n^n}+\sum_{i=k}^{n-k}\frac{1}{n-2k+2}\cdot\frac{1}{n^i}\left(1-\frac 1n\right)^{n-i}\\
\le{} &  {\frac{1}{(n-2k+2)n^k}}\left(1-\frac{1}{n}\right)^{n-k}\left(\frac{1}{n^{n-k}}\left(1-\frac{1}{n}\right)^{-n+k} +1+\frac{1}{n-2}\right)\\
\le{} &  {\frac{1}{(n-2k+2)n^k}}\left(1-\frac{1}{n}\right)^{n-k}\left(1+\frac{2}{n-2}\right)\\
={} &   {\frac{1}{(n-2k+2)n^k}}\left(1-\frac{1}{n}\right)^{n-k} \frac{n}{n-2}
\end{align*}
Then we need at least $(n-2k+2)n^k\left(1-\frac{1}{n}\right)^{k-n}\frac{n-2}{n}$ additional iterations in expectation to cover the whole Pareto front.

In summary, together with the above discussed probability for the initial individual and the probability that all $(a,2k+n-a)$, $a\in[2k..n]$, in the Pareto front are found before $0^n$ or $1^n$ is generated, we obtain that the lower bound to cover the Pareto front is 
\begin{align*}
\bigg(1-&\frac{1}{n^{n/4-2}}-\frac{2}{n^{n-2k}}-\frac{e}{n} - \frac{1}{n} - \frac{4e (\ln n+1)}{n^{k-3}}\bigg) 
 \cdot \frac 32(n-2k+1)n^k\left(1-\frac{1}{n}\right)^{k-n}\frac{n-2}{n}\\
\ge{} &  {\frac 32}(n-2k+1)n^k\left(1-\frac{1}{n}\right)^{k-n} \left(1-\frac{1}{n^{n/4-2}}-\frac{2}{n^{n-2k}}-\frac{e+3}{n} - \frac{4e (\ln n+1)}{n^{k-3}}\right)\\
\ge{} &  {\frac 32} e(n-2k+1)n^k\frac{n^k}{(n-1)^k} \left(1-\frac{1}{n^{n/4-2}}-\frac{2}{n^{n-2k}}-\frac{e+3}{n} - \frac{4e (\ln n+1)}{n^{k-3}}\right).
\qedhere
\end{align*}
\end{proof}

\section{GSEMO with Heavy-Tailed Mutation}
\label{sec:heavytail}

In the previous section, we have shown that the GSEMO can optimize our multimodal optimization problem, but similar to the single-objective world (say, the optimization of \jump functions via simple evolutionary algorithms~\cite{DrosteJW02}), the runtime increases significantly with the distance $k$ that a solution on the Pareto front can have from all other solutions on the front. As has been discussed in~\cite{DoerrLMN17}, increasing the mutation rate can improve the time it takes to jump over such gaps. However, this work also showed that a deviation from the optimal mutation rate can be costly: A small constant-factor deviation from the optimal rate $k/n$ leads to a performance loss exponential in $k$. For this reason, a heavy-tailed mutation operator was proposed. Compared to using the optimal (usually unknown) rate, it only loses a small polynomial factor (in $k$) in performance. 

We now equip the GSEMO with the heavy-tailed mutation operator from~\cite{DoerrLMN17} and observe similar advantages. We start by introducing the heavy-tailed mutation operator and giving a first non-asymptotic estimate for the probabilities to flip a certain number of bits.

\subsection{Heavy-Tailed Mutation Operator}

The following capped power-law distribution will be used in the definition of the heavy-tailed mutation operator.

\begin{definition}[Power-law distribution $D_{n/2}^{\beta}$]
Let $n\in \N_{\ge 2}$ and $\beta > 1$. Let $C_{n/2}^{\beta} :=\sum_{i=1}^{n/2} i^{-\beta}$. We say a random variable $\xi$ follows the power-low $D_{n/2}^{\beta}$, written as $\xi \sim D_{n/2}^{\beta}$, if for all $\alpha \in [1..n/2]$, we have
\begin{align*}
\Pr[\xi = \alpha] = \left(C_{n/2}^{\beta}\right)^{-1}\alpha^{-\beta}.
\end{align*}
\end{definition}

The heavy-tailed mutation operator proposed by~\cite{DoerrLMN17}, in the remainder denoted by $\mut^{\beta}(\cdot)$, in each application independently samples a number $\alpha$ from the power-law distribution $D_{n/2}^{\beta}$ and then uses standard bit-wise mutation with mutation rate ${\alpha}/{n}$, that is, flips each bit independently with probability $\alpha/n$. 
 \cite{DoerrLMN17} shows the two following properties of $\mut^{\beta}$.
\begin{lemma}[\cite{DoerrLMN17}]
Let $x\in\{0,1\}^n$ and $y\sim \mut^{\beta}(x)$. Let $H(x,y)$ denote the Hamming distance between $x$ and $y$. Then we have
\begin{align*}
P_i^{\beta}:=\Pr[H(x,y)=i] & = 
\begin{cases}
\left(C_{n/2}^{\beta}\right)^{-1} \Theta(1), & \text{for}~i=1;\\
\left(C_{n/2}^{\beta}\right)^{-1} \Omega(i^{-\beta}), & \text{for}~i=[2..n/2].
\end{cases}
\end{align*}
\label{lem:htpro}
\end{lemma}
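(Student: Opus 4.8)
The plan is to condition on the randomly sampled mutation rate. By the definition of $\mut^{\beta}$, one first draws $\alpha \sim D_{n/2}^{\beta}$ and then flips each of the $n$ bits independently with probability $\alpha/n$; hence, conditioned on $\alpha = j$, the Hamming distance $H(x,y)$ is exactly $\Bin(n,j/n)$-distributed. Summing over $j$ with the power-law weights yields the closed expression
\[
P_i^{\beta} = \left(C_{n/2}^{\beta}\right)^{-1}\sum_{j=1}^{n/2} j^{-\beta}\,\Pr[\Bin(n,j/n)=i],
\]
where $\Pr[\Bin(n,j/n)=i]=\binom{n}{i}(j/n)^i(1-j/n)^{n-i}$. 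The prefactor $\left(C_{n/2}^{\beta}\right)^{-1}$ pulls out and matches the statement, so it suffices to estimate the weighted sum $\sum_j j^{-\beta}\Pr[\Bin(n,j/n)=i]$.

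For $i=1$ I would argue directly to obtain the two-sided $\Theta(1)$ bound. The single term $j=1$ already gives $\Pr[\Bin(n,1/n)=1]=(1-1/n)^{n-1}\ge 1/e$, which is the $\Omega(1)$ lower bound. For the matching upper bound, $\Pr[\Bin(n,j/n)=1]=j(1-j/n)^{n-1}$, so the whole sum equals $\sum_{j} j^{1-\beta}(1-j/n)^{n-1}$; bounding $(1-j/n)^{n-1}\le e^{-j/2}$ and using that $\sum_{j\ge 1} j^{1-\beta}e^{-j/2}=O(1)$ for every $\beta>1$ gives $P_1^{\beta}=\left(C_{n/2}^{\beta}\right)^{-1}\Theta(1)$.

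For $i\in[2..n/2]$ only the lower bound $\Omega(i^{-\beta})$ is needed. The natural first attempt, keeping solely the term $j=i$, gives $i^{-\beta}\Pr[\Bin(n,i/n)=i]$; but the point probability of a binomial at its mean is only $\Theta(1/\sqrt i)$ by Stirling, so this yields the too-weak $\Omega(i^{-\beta-1/2})$. To recover the full exponent I would instead sum over a whole window of rates just below $i$, say $j$ ranging over $W:=[i-\lceil\sqrt i/2\rceil..i]$. On $W$ two facts hold at once: the weight satisfies $j^{-\beta}\ge i^{-\beta}$ because $j\le i$, and $i$ stays within $O(1)$ standard deviations of the mean $j$ of $\Bin(n,j/n)$, the standard deviation being $\sqrt{j(1-j/n)}=\Theta(\sqrt i)$ for $j\le n/2$. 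Factoring out $i^{-\beta}$ then gives
\[
\sum_{j=1}^{n/2} j^{-\beta}\Pr[\Bin(n,j/n)=i]\ \ge\ i^{-\beta}\sum_{j\in W}\Pr[\Bin(n,j/n)=i],
\]
reducing everything to showing that the windowed mass $\sum_{j\in W}\Pr[\Bin(n,j/n)=i]=\Omega(1)$.

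The main obstacle is precisely this last estimate, which is a local-limit-theorem computation. I would use Stirling's formula to show that, uniformly for $j\in W$, the point probability $\Pr[\Bin(n,j/n)=i]=\Theta(1/\sqrt i)$, where the restriction $i\le n/2$ keeps $1-j/n\ge 1/2$ under control. Since $W$ contains $\Theta(\sqrt i)$ integers, their contributions add to $\Theta(\sqrt i)\cdot\Theta(1/\sqrt i)=\Omega(1)$, closing the argument. The only nuisance is the small-$i$ regime (e.g.\ $i\in\{2,3\}$), where the window degenerates to essentially a single integer; there, however, the single term $j=i$ already has $\Pr[\Bin(n,i/n)=i]=\Theta(1)$, so the windowed mass is $\Omega(1)$ in this case as well. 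Combining the cases gives $P_i^{\beta}=\left(C_{n/2}^{\beta}\right)^{-1}\Omega(i^{-\beta})$ for all $i\ge 2$ and $P_1^{\beta}=\left(C_{n/2}^{\beta}\right)^{-1}\Theta(1)$, as claimed.
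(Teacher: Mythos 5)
The paper does not prove this lemma at all---it is quoted directly from \citep{DoerrLMN17}---so there is no in-paper proof to compare against; your blind attempt must stand on its own, and it does. Your argument is correct and essentially reconstructs the proof from the cited source: conditioning on the sampled rate $\alpha$, the $j=1$ term plus the exponential-tail bound $\sum_{j\ge 1} j^{1-\beta}e^{-j/2}=O(1)$ settles $i=1$, and for $i\ge 2$ you correctly identify both the decisive idea---summing over the $\Theta(\sqrt i)$ rates $j$ just below $i$, where $\Pr[\Bin(n,j/n)=i]=\Theta(1/\sqrt i)$ uniformly (the Stirling computation works because $i-j=O(\sqrt i)$ keeps $i$ within $O(1)$ standard deviations of the mean and $i\le n/2$ keeps $1-j/n\ge 1/2$), which recovers the full exponent $i^{-\beta}$ instead of the too-weak $i^{-\beta-1/2}$ from the single term $j=i$---and the needed patch for bounded $i$, where $\Pr[\Bin(n,i/n)=i]=\Theta(1)$ alone suffices.
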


In order to obtain an understanding of the leading constants when generating a solution with a not too large Hamming distance, we prove the following estimations, which have not been shown before, and can also be applied to other previous works about the theoretical analysis of the heavy-tailed mutation.  {We suspect tighter estimates of the leading constants exist but leave it as interesting future work.} 
\begin{lemma}
Let $x\in\{0,1\}^n$ and $y\sim \mut^{\beta}(x)$. Let $H(x,y)$ denote the Hamming distance between $x$ and $y$. Then we have
\begin{align*}
P_i^{\beta}:=\Pr[H(x,y)=i] & \ge 
\begin{cases}
\frac{\beta-1}{e\beta}, & \text{for}~i=1;\\
\tfrac{\beta-1}{ {2}\sqrt{2\pi}e^{8\sqrt 2 +13}\beta}i^{-\beta}, & \text{for}~i=[2..\lfloor \sqrt n \rfloor].
\end{cases}
\end{align*}
\label{lem:htp}
\end{lemma}
 {The proof of Lemma~\ref{lem:htp} will use an elementary estimate for $\binom{n}{k}$ when $k\le \sqrt n$. Since we have not found a good reference for it and since we believe that it might be useful for other runtime analyses, we formulate it as a separate lemma and give a formal proof.
\begin{lemma}
Let $n\in \N$ and $i=[0..\lfloor\sqrt n\rfloor]$. Then $\binom{n}{i} \ge \frac{n^i}{2i!}$.
\label{lem:bino}
\end{lemma}
\begin{proof}
Our claim holds trivially for $i=0$ and $1$. Thus, for any $n\in[1..3]$, this lemma holds. We now discuss $n\ge 4$ and $i\in[2..\lfloor\sqrt n\rfloor]$. By Weierstrass product inequality, see, e.g., \cite[Lemma~1.4.8(b)]{Doerr20bookchapter}, we know 
\begin{align*}
\left(1-\frac{1}{n}\right)\left(1-\frac2n\right)\dots\left(1-\frac{i-1}{n}\right) \ge 1-\sum_{j=1}^{i-1}\frac jn=1-\frac{\frac{(i-1)(i-1+1)}{2}}{n} \ge 1-\frac{i^2}{2n} \ge \frac12.
\end{align*}
Then we have
\begin{align*}
\binom{n}{i}&{}={}\frac{n!}{(n-i)!i!}=\frac{n(n-1)\dots(n-i+1)}{i!} =\frac{n^i}{i!}\frac{n(n-1)\dots(n-i+1)}{n^i}\\
&{}={}\frac{n^i}{i!}\left(1-\frac{1}{n}\right)\left(1-\frac2n\right)\dots\left(1-\frac{i-1}{n}\right) \ge \frac{n^i}{2i!}.
\qedhere
\end{align*}
\end{proof}}

 {Now we prove Lemma~\ref{lem:htp}.}
\begin{proof}[Proof of Lemma~\ref{lem:htp}]
We calculate
\begin{align*}
P_1^{\beta}=\Pr[H(x,y)=1] \ge \left(C_{n/2}^{\beta}\right)^{-1} \frac{1}{1^{\beta}}\binom{n}{1}\frac1n\left(1-\frac1n\right)^{n-1} \ge \frac{1}{eC_{n/2}^{\beta}}.
\end{align*}
 {Since $\beta>1$, we know 
\begin{equation}
\begin{split}
C_{n/2}^{\beta}={}&\sum_{i=1}^{n/2} i^{-\beta}\le1+ \int_{1}^{n/2}x^{-\beta}\,dx 
=1+\frac{(n/2)^{-\beta+1}-1}{-\beta+1}\le 1+\frac{1}{\beta-1}=\frac{\beta}{\beta-1}.
\end{split}
\label{eq:p1}
\end{equation}
Hence, $P_1^{\beta} \ge \frac{\beta-1}{e\beta}$.}

For $i\in[2..\lfloor \sqrt n \rfloor]$, we have 
\begin{align*}
\binom{n}{i} \ge \frac{n^i}{ {2}i!} \ge \frac{n^i}{ {2}\sqrt{2\pi i} (i/e)^i e^{1/(12i)}} =\frac{1}{ {2}\sqrt{2\pi i}} \left(\frac{n}{i}\right)^{i}e^{i-1/(12i)},
\end{align*}
where the second inequality uses Stirling's approximation. Using that $j \rightarrow j^i(1-\tfrac{j}{n})^{n-i}$ is unimodal and has its maximum at $j=i$, we compute 
\begin{equation*}
\begin{split}
P_i^{\beta}={}&\left(C_{n/2}^{\beta}\right)^{-1}\sum_{j=1}^{n/2}  \frac{1}{j^{\beta}}\binom{n}{i}\left(\frac{j}{n}\right)^i\left(1-\frac{j}{n}\right)^{n-i} \\
\ge {}& \frac{e^{i-1/(12i)}}{ {2}\sqrt{2\pi i} C_{n/2}^{\beta} } \sum_{j=1}^{n/2}  \frac{1}{j^{\beta}}\left(\frac{j}{i}\right)^i\left(1-\frac{j}{n}\right)^{n-i}\\
\ge {}&  \frac{ e^{i-1/(12i)}}{ {2}\sqrt{2\pi i} C_{n/2}^{\beta}} \sum_{j=i-\lfloor \sqrt i \rfloor}^{i}  \frac{1}{j^{\beta}}\left(\frac{j}{i}\right)^i\left(1-\frac{j}{n}\right)^{n-i}\\
\ge {} &  \frac{e^{i-1/(12i)}}{ {2}\sqrt{2\pi i} C_{n/2}^{\beta} \cdot i^{\beta}}  \sum_{j=i-\lfloor \sqrt i \rfloor}^{i}  \left(\frac{j}{i}\right)^i\left(1-\frac{j}{n}\right)^{n-i} \\
\ge {} &  \frac{e^{i-1/(12i)} (\lfloor \sqrt i \rfloor +1)}{ {2}\sqrt{2\pi i} C_{n/2}^{\beta} \cdot i^{\beta}}  \left(\frac{i-\sqrt i}{i}\right)^i\left(1-\frac{i-\sqrt i}{n}\right)^{n-i}\\
\ge {} &  \frac{1 }{ {2}\sqrt{2\pi} C_{n/2}^{\beta} i^{\beta}} \exp\left(i-\frac{1}{12i}\right) \exp\left(-\frac{i}{\sqrt i - 1}\right) \exp\left(-\frac{n-i}{\frac{n}{i-\sqrt i}-1}\right)\\
 = {} & {} \frac{1}{ {2}\sqrt{2\pi} C_{n/2}^{\beta} i^{\beta}} \exp\left(i-\frac{1}{12i}-\frac{i}{\sqrt i - 1}-\frac{n-i}{\frac{n}{i-\sqrt i}-1}\right).
\end{split}
\end{equation*}
Now we calculate
\begin{equation*}
\begin{split}
i-\frac{1}{12i}-{} & \frac{i}{\sqrt i - 1}-\frac{n-i}{\frac{n}{i-\sqrt i}-1} = -\frac{1}{12i}-\frac{i}{\sqrt i - 1}+\frac{n\sqrt i}{n-i+\sqrt i}\\
= {} & -\frac{1}{12i}-\frac{n\sqrt i - i^2 +i\sqrt i}{(n-i+\sqrt i)(\sqrt i - 1)}
\ge  -\frac{1}{24}-\frac{n\sqrt i+i \sqrt i}{(n-i)\sqrt i /4} \\
= {} &-\tfrac{1}{24}-4\left(\tfrac{2n}{n-i}-1\right) \ge -\tfrac{1}{24}-4\left(\tfrac{2n}{n-\sqrt n}-1\right)  \\
=  {} & -\tfrac{1}{24}-4\left(\tfrac{2}{\sqrt n-1}+1\right) \ge -\tfrac{1}{24}-4\left(\tfrac{2}{\sqrt 2-1}+1\right) \\
\ge {} & -8\sqrt 2 -13,
\end{split}
\end{equation*}
where the first inequality uses that $n\sqrt i - i^2 +i\sqrt i \ge 0$ for $i < \sqrt n$ and $\sqrt i - 1 \ge \sqrt i / 4$ for $i \ge 2$, the second inequality uses $i < \sqrt n$, and the third inequality uses $n\ge 2$. Then we know 
\begin{equation*}
P_i^{\beta} \ge  \frac{\left(C_{n/2}^{\beta} \right)^{-1}  }{ {2}\sqrt{2\pi} i^{\beta}} e^{-8\sqrt 2 -13}  {~\ge \frac{\beta-1}{ {2}\sqrt{2\pi}e^{8\sqrt 2 +13}\beta}i^{-\beta},}
\end{equation*}
 {where the last inequality uses (\ref{eq:p1}). Hence,} this lemma is proven.
\end{proof}

Equipping the standard GSEMO with this mutation operator $\mut^{\beta}$ gives Algorithm~\ref{alg:gsemohtm}, which we call GSEMO-HTM. 
\begin{algorithm}[!ht]
    \caption{The GSEMO-HTM algorithm with power-law exponent $\beta > 1$}
    {\small
    \begin{algorithmic}[1]
    \STATE {Generate $x\in\{0,1\}^n$ uniformly at random, and $P\leftarrow \{x\}$}
    \LOOP
    \STATE {Uniformly at random select one individual $x$ from $P$}
    \STATE {Sample $\alpha$ from $D_{n/2}^{\beta}$ and generate $x'$ via independently flipping each bit value of $x$ with probability $\alpha/n$}
    \IF {there is no $y \in P$ such that $x' \preceq y$}
    \STATE {$P=\{z\in P \mid z \npreceq x'\} \cup \{x'\}$}
    \ENDIF
    \ENDLOOP
    \end{algorithmic}
    \label{alg:gsemohtm}
    }
\end{algorithm}

\subsection{GSEMO-HTM on $\ojzj_{n,k}$}
We now analyze the runtime of the GSEMO-HTM on $\ojzj_{n,k}$. We start with an estimate for the time it takes to find the inner part of the Pareto front. 

\begin{lemma}
Consider the GSEMO-HTM optimizing the $\ojzj_{n,k}$ function. Let $a_0\in[2k..n]$. If in a certain iteration, the point $(a_0,2k+n-a_0)$ is  {covered by the current population}, then all $(a,2k+n-a)$, $a\in[2k..n]$, in the Pareto front will be found in an expected number of at most $\frac{2}{P_1^{\beta}}n(n-2k+3) (\ln \lceil \frac n2 \rceil+1)$ iterations.
\label{lem:spreadhtm}
\end{lemma}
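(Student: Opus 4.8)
The plan is to follow verbatim the fitness-level (Markov-chain) argument from the proof of Lemma~\ref{lem:spread}, changing only the estimate of the probability of a single favorable bit-flip so that it reflects the operator $\mut^\beta$ rather than standard bitwise mutation. As there, I treat the objective values $(a,2k+n-a)$ for $a \in [2k..n]$ as a chain of ``fitness levels'': a solution realizing $(a,2k+n-a)$ has exactly $a-k$ ones and hence $n-a+k$ zeros, and once such a value is covered it stays covered, since Pareto-optimal points are never removed from the population. Starting from the covered value $a_0$, it therefore suffices to bound the expected time to walk outward to $a=n$ and inward to $a=2k$.

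The one new ingredient is the per-step probability. To generate $(a+1,2k+n-a-1)$ from a population member with value $(a,2k+n-a)$, I select that member (probability at least $1/(n-2k+3)$ by Corollary~\ref{cor:population}) and then let $\mut^\beta$ flip exactly one of its $n-a+k$ zero-bits and nothing else. Since $\mut^\beta$ first samples $\alpha\sim D_{n/2}^\beta$ and then flips each bit independently with the same probability $\alpha/n$, it is symmetric across bit positions; hence, conditioned on producing a Hamming-distance-one offspring (an event of probability $P_1^\beta$ by Lemma~\ref{lem:htpro}), the flipped bit is uniformly distributed over the $n$ positions. The probability that this unique flip hits one of the $n-a+k$ zeros is thus $P_1^\beta (n-a+k)/n$, so the step probability is at least
\begin{align*}
\frac{1}{n-2k+3}\cdot P_1^\beta\,\frac{n-a+k}{n}.
\end{align*}
The symmetric estimate $\frac{1}{n-2k+3}\cdot P_1^\beta\,\frac{a-k}{n}$ holds for generating $(a-1,2k+n-a+1)$ by flipping one of the $a-k$ one-bits.

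With these bounds the expected time to cover all of $[a_0+1..n]$ and all of $[2k..a_0-1]$ is at most
\begin{align*}
\sum_{a=a_0}^{n-1}\frac{n(n-2k+3)}{P_1^\beta\,(n-a+k)}+\sum_{a=a_0}^{2k+1}\frac{n(n-2k+3)}{P_1^\beta\,(a-k)},
\end{align*}
which is precisely the sum appearing in the proof of Lemma~\ref{lem:spread} with the factor $e$ replaced by $1/P_1^\beta$. The same comparison of summands therefore bounds it by $\frac{2}{P_1^\beta}n(n-2k+3)(\ln\lceil \frac n2\rceil+1)$, as claimed.

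I expect the only point requiring care to be the symmetry claim that, conditioned on a one-bit move, each bit is equally likely to be the flipped one; this is immediate from the position-independence of the two sampling stages of $\mut^\beta$, so beyond this observation the argument is a direct transcription of the classic GSEMO case and I anticipate no real obstacle.
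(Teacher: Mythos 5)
Your proposal is correct and takes essentially the same approach as the paper's proof: the identical fitness-level argument of Lemma~\ref{lem:spread} with the per-step probability bounds $\frac{1}{n-2k+3}\cdot\frac{n-a+k}{n}P_1^{\beta}$ and $\frac{1}{n-2k+3}\cdot\frac{a-k}{n}P_1^{\beta}$, followed by the same comparison of summands yielding $\frac{2}{P_1^{\beta}}n(n-2k+3)(\ln\lceil\frac n2\rceil+1)$. Your explicit justification that, conditioned on a Hamming-distance-one offspring of $\mut^{\beta}$, the flipped position is uniformly distributed is a detail the paper leaves implicit, but both arguments coincide.
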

\begin{proof}
The proof is very similar to the one of Lemma~\ref{lem:spread}. As there, we ignore any positive effect of mutation flipping more than one bit. Noting that the corresponding solution for $(a,2k+n-a)$ in the Pareto front has $a-k$ ones, the probability to find $(a+1,2k+n-a-1)$ in the Pareto front conditional on the population has one solution with  function value $(a,2k+n-a)$ is at least
\begin{align*}
\frac{1}{n-2k+3}\frac{n-a+k}{n} P_1^{\beta} = \frac{(n-a+k)P_1^{\beta}}{n(n-2k+3)}.
\end{align*}
Hence, the expected time to find all $(a,2k+n-a)$ for $a\in[a_0..n]$ is at most
\begin{align}
\sum_{a=a_0}^{n-1} \frac{n(n-2k+3)}{(n-a+k)P_1^{\beta}}.
\label{eq:innerhtm}
\end{align}

Similarly, the probability to find $(a-1,2k+n-a+1)$ in the Pareto front conditional on the population containing a solution with function value $(a,2k+n-a)$ is at least
\begin{align*}
\frac{1}{n-2k+3}\frac{a-k}{n} P_1^{\beta} = \frac{(a-k) P_1^{\beta}}{n(n-2k+3)},
\end{align*}
and the expected time to find all $(a,2k+n-a)$ for $a\in[2k..a_0]$ is at most
\begin{align}
\sum_{a=2k+1}^{a_0} \frac{n(n-2k+3)}{(a-k)P_1^{\beta}}.
\label{eq:aninnerhtm}
\end{align}

By comparing the individual summands, the sum of (\ref{eq:innerhtm}) and (\ref{eq:aninnerhtm}) is at most 
\begin{align*}
2\sum_{a=\lfloor n/2 \rfloor + k}^{n-1} &\frac{n(n-2k+3)}{(n-a+k)P_1^{\beta}} \le \frac{2n(n-2k+3)(\ln \lceil \frac n2 \rceil +1)}{P_1^{\beta}}.
\qedhere
\end{align*}
\end{proof}

 {We now estimate the time to find the two extreme Pareto front points after the coverage of the inner part of the Pareto front.}
\begin{lemma}
Consider the GSEMO-HTM optimizing the $\ojzj_{n,k}$ function. Assume that at some time, all $(a,2k+n-a)$, $a\in[2k..n]$, are  {covered by the current population}. Then the two missing $(a,2k+n-a)$, $a\in\{k,n+k\}$, in the Pareto front will be found in an expected number of at most $\frac{3}{2P_k^{\beta}}\binom{n}{k}(n-2k+3)$ additional generations.
\label{lem:jumphtm}
\end{lemma}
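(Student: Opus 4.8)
The plan is to mirror the proof of Lemma~\ref{lem:jump} almost verbatim, changing only the probability estimate for the crucial $k$-bit flip. The two missing front points are $(k,n+k)$, realized uniquely by $0^n$, and $(n+k,k)$, realized uniquely by $1^n$. As in Lemma~\ref{lem:jump}, I would pessimistically ignore the possibility that the current population already contains $0^n$ or $1^n$, and exploit only the two inner extremal individuals: the one with function value $(2k,n)$, which has exactly $k$ ones, and the one with function value $(n,2k)$, which has exactly $n-k$ ones. The hypothesis that the whole inner front is covered guarantees both are present, and each of $0^n$ and $1^n$ is obtained from the respective extremal individual by flipping exactly the $k$ ``wrong'' bits.

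First I would compute the per-iteration success probability of the heavy-tailed operator. Selecting the relevant parent happens with probability at least $1/(n-2k+3)$, using the population-size bound from Corollary~\ref{cor:population}. The key new ingredient is the probability that $\mut^{\beta}$ flips exactly one prescribed set of $k$ bits: by definition $\mut^{\beta}$ samples $\alpha \sim D_{n/2}^{\beta}$ and then applies standard bit-wise mutation with rate $\alpha/n$, which is symmetric over the bit positions. Hence, conditional on the Hamming distance being $k$ (an event of probability $P_k^{\beta}$), all $\binom{n}{k}$ possible sets of flipped bits are equally likely, so the probability of hitting one specific set of $k$ bits equals $P_k^{\beta}/\binom{n}{k}$. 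Therefore the probability of generating $0^n$ from the $k$-ones individual in a single iteration is at least $\frac{1}{n-2k+3}\cdot\frac{P_k^{\beta}}{\binom{n}{k}}$, and the same bound holds for generating $1^n$ from the $(n-k)$-ones individual.

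From here the calculation is identical to Lemma~\ref{lem:jump}. Since the two targets are reachable from two different parents, the per-iteration probability of producing at least one of $0^n, 1^n$ is at least $\frac{2P_k^{\beta}}{\binom{n}{k}(n-2k+3)}$, giving an expected waiting time of at most $\frac{\binom{n}{k}(n-2k+3)}{2P_k^{\beta}}$ for the first of the two points. Once it is found, the remaining point is produced with per-iteration probability at least $\frac{P_k^{\beta}}{\binom{n}{k}(n-2k+3)}$, where the population-size bound still applies, for an additional expected waiting time of at most $\frac{\binom{n}{k}(n-2k+3)}{P_k^{\beta}}$. Summing the two contributions yields the claimed bound $\frac{3}{2P_k^{\beta}}\binom{n}{k}(n-2k+3)$.

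The only genuinely new step compared to Lemma~\ref{lem:jump} is the symmetry argument giving $P_k^{\beta}/\binom{n}{k}$ for a prescribed $k$-bit flip; it is precisely what replaces the explicit estimate $(1/n)^k(1-1/n)^{n-k} \ge 1/(en^k)$ used in the standard GSEMO analysis. I therefore expect the main (and rather minor) obstacle to be stating this symmetry cleanly. No other subtlety arises, since the hypothesis already guarantees that the two inner extremal individuals are present, so no separate argument about their availability is needed.
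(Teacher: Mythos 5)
Your proposal is correct and takes essentially the same route as the paper's proof: the paper likewise bounds the per-iteration success probability by $\frac{2}{n-2k+3}\binom{n}{k}^{-1}P_k^{\beta}$ for the first extremal point and $\frac{1}{n-2k+3}\binom{n}{k}^{-1}P_k^{\beta}$ for the second, summing the two waiting times to $\frac{3}{2P_k^{\beta}}\binom{n}{k}(n-2k+3)$. The symmetry argument you spell out (conditional on Hamming distance $k$, the flipped set is uniform over all $\binom{n}{k}$ possibilities) is exactly the step the paper uses implicitly, so your write-up matches the intended argument.
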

\begin{proof}
We note that the probability to find $(k,k+n)$ in the Pareto front from one solution with function value $(2k,n)$, or to find $(k+n,k)$ in the Pareto front from one solution with  function value $(n,2k)$ is at least
\begin{align*}
\frac{2}{n-2k+3} \binom{n}{k}^{-1} P_k^{\beta}.
\end{align*}
Hence, the expected time to find $(k,n+k)$ or $(k+n,k)$ is at most $\binom{n}{k}(n-2k+3)/(2P_k^{\beta})$. 

W.l.o.g., $(k,n+k)$ is found  {first}. Then the probability to find $(k+n,k)$ in the Pareto front from one solution with function value $(n,2k)$ is at least
\begin{align*}
\frac{1}{n-2k+3} \binom{n}{k}^{-1} P_k^{\beta}.
\end{align*}
Hence, the expected additional time to find the last missing element in the Pareto front is at most $\binom{n}{k}(n-2k+3)/P_k^{\beta}$. Then this lemma is proven.
\end{proof}

Now we are ready to show the runtime for the GSEMO-HTM on $\ojzj_{n,k}$.
\begin{theorem}
The expected runtime of the GSEMO-HTM optimizing the $\ojzj_{n,k}$ function is at most $(n-2k+3)O(k^{\beta-0.5})C_{n/2}^{\beta} \frac{n^n}{k^k (n-k)^{n-k}}$.
\label{thm:gsemohojzj}
\end{theorem}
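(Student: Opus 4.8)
The plan is to mirror the three-phase structure of the proof of Theorem~\ref{thm:gsemoojzj}, replacing each single-bit-flip probability of the standard mutation by its heavy-tailed counterpart and then collecting the dominant term. Concretely, I would split the run into: (A) reaching some point $(a_0,2k+n-a_0)$ with $a_0\in[2k..n]$ on the inner part of the Pareto front; (B) covering the whole inner part; and (C) generating the two extremal points $0^n$ and $1^n$.

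For Phase~A I would repeat the fitness-level argument behind~(\ref{eq:ingap}), the only change being that flipping one prescribed bit now has probability $P_1^\beta/n$ instead of $\frac1n(1-\frac1n)^{n-1}$. Summing the waiting times for increasing the number of zeros (or ones) one by one until the inner region is reached yields a bound of order $n(n-2k+3)/P_1^\beta$. Phases~B and~C are already handled by Lemma~\ref{lem:spreadhtm} and Lemma~\ref{lem:jumphtm}, contributing at most $\frac{2}{P_1^\beta}n(n-2k+3)(\ln\lceil\frac n2\rceil+1)$ and $\frac{3}{2P_k^\beta}\binom nk(n-2k+3)$ respectively. Adding the three bounds gives the total expected runtime, and it remains only to rewrite this sum in the stated closed form.

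Using Lemma~\ref{lem:htpro} I would substitute $1/P_1^\beta=O(C_{n/2}^\beta)$ and $1/P_k^\beta=C_{n/2}^\beta\,O(k^\beta)$. For the binomial coefficient I would apply Stirling's formula; since $k\le\frac n2$ implies $n-k\ge\frac n2$, the prefactor $\sqrt{n/(k(n-k))}$ is $\Theta(k^{-1/2})$, so that $\binom nk=O\!\left(k^{-1/2}\frac{n^n}{k^k(n-k)^{n-k}}\right)$. Plugging this into the Phase~C bound produces exactly $(n-2k+3)\,O(k^{\beta-0.5})\,C_{n/2}^\beta\,\frac{n^n}{k^k(n-k)^{n-k}}$.

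Finally I would argue that Phases~A and~B are asymptotically negligible against Phase~C: their common order $n(n-2k+3)(\ln n)/P_1^\beta=(n-2k+3)\,C_{n/2}^\beta\,O(n\ln n)$ is dominated because $k^{\beta-0.5}\frac{n^n}{k^k(n-k)^{n-k}}=\Theta(k^\beta\binom nk)=\Omega(\binom nk)=\Omega(n^2)$ for $k\ge2$, which beats $n\ln n$. Hence the whole sum is absorbed into the $O(k^{\beta-0.5})$ Phase~C term, and the claimed bound follows. The only slightly delicate step is the Stirling estimate for $\binom nk$: the $\Theta(k^{-1/2})$ behaviour of the prefactor, and thus the exponent $\beta-0.5$ rather than $\beta$, relies crucially on the regime $k\le\frac n2$, where $n-k\ge\frac n2$; outside this regime the prefactor would no longer be $\Theta(k^{-1/2})$.
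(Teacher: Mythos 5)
Your proposal is correct and follows essentially the same route as the paper's proof: the same three-phase decomposition (reach the inner front via the fitness-level argument of~(\ref{eq:ingap}) with $P_1^\beta$ in place of the $1/n$-bit-flip probability, then apply Lemma~\ref{lem:spreadhtm} and Lemma~\ref{lem:jumphtm}), followed by substituting Lemma~\ref{lem:htpro} and the Stirling estimate $\binom{n}{k}=O\bigl(k^{-1/2}\frac{n^n}{k^k(n-k)^{n-k}}\bigr)$. Your only additions are making explicit two points the paper leaves implicit --- the $\Theta(k^{-1/2})$ Stirling prefactor requiring $k\le\frac n2$, and the absorption of the $O(n\log n)$ phases into the Phase~C term for $k\ge 2$ --- both of which are correct.
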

\begin{proof}
We first consider the time to find one $(a,2k+n-a)$ for some $a\in[2k..n]$ in the Pareto front. If the initial search point has a number of ones in $[k,n-k]$, then $(a,2k+n-a)$ for some $a\in[2k..n]$ in the Pareto front is already found. Otherwise, we have the initial search point with at most $k-1$ ones or zeros. W.l.o.g., we consider the initial point with $m\le k-1$ zeros. Similar to the discussion in the proof of Theorem~\ref{thm:gsemoojzj}, we pessimistically add the waiting time to increase the number of zeros one by one until one individual with $k$ zeros is found. This gives the expected number of iterations of at most
\begin{align*}
\sum_{m=0}^{k-1}\left(\frac{1}{n-2k+3} \frac{n-m}{n} P_1^{\beta} \right)^{-1} \le \sum_{m=0}^{k-1} \frac{n(n-2k+3)}{(n-m)P_1^{\beta}} 
\le \frac{n(n-2k+3)(\ln n +1)}{P_1^{\beta}}.
\end{align*}

Then from Lemma~\ref{lem:spreadhtm}, we know that all $(a,2k+n-a)$ for $a\in[2k..n]$ in the Pareto front will be found in at most $\frac{2}{P_1^{\beta}}n(n-2k+3) (\ln (n-k)+1)$ iterations in expectation. After that, from Lemma~\ref{lem:jumphtm}, the points $(k,n+k)$ and $(n+k,k)$ enter the Pareto front in at most $\frac{2}{P_k^{\beta}}\binom{n}{k}(n-2k+3)$ iterations in expectation.  

Hence, the expected time to cover the Pareto front is at most
\begin{equation}
\begin{split}
(n-2k+3)& \left( \frac{2n (\ln \lceil \frac n2 \rceil+1)}{P_1^{\beta}} + \frac{2}{P_k^{\beta}}\binom{n}{k} + \frac{n(\ln n +1)}{P_1^{\beta}}\right)\\
={} & {(n-2k+3)} \bigg( O(1)C_{n/2}^{\beta}n (\ln n +1)
+O(k^{\beta-0.5})C_{n/2}^{\beta} \frac{n^n}{k^k (n-k)^{n-k}} \bigg)\\
={}& {(n-2k+3)}O(k^{\beta-0.5})C_{n/2}^{\beta} \frac{n^n}{k^k (n-k)^{n-k}},
\end{split}
\label{eq:htmruntime}
\end{equation}
where the first equality uses Lemma~\ref{lem:htpro}.
%
\end{proof}

To give a non-asymptotic runtime bound, by Lemma~\ref{lem:htp}, for $k\in[2..\lfloor \sqrt n \rfloor]$, we know that the first line of (\ref{eq:htmruntime}) can be calculated as
\begin{equation*}
\begin{split}
(n-2k&+3) \left( \frac{2n (\ln \lceil \frac n2 \rceil+1)}{P_1^{\beta}} + \frac{2}{P_k^{\beta}}\binom{n}{k} + \frac{n(\ln n +1)}{P_1^{\beta}}\right)\\
\le{} &  {(n-2k+3)} \left( \frac{e\beta}{\beta-1} 3n (\ln n+1) + \frac{ {4}\sqrt{2\pi}e^{8\sqrt 2 +13}\beta}{\beta-1}k^{\beta} \frac{n!}{k!(n-k)!}\right).
\end{split}
\label{eq:htmruntimeupper}
\end{equation*}
Via the Stirling’s approximation $\sqrt{2\pi} n^{n+0.5}e^{-n} \le n! \le en^{n+0.5}e^{-n}$ and due to the fact that $n/(n-k) \le 2$, we know that 
\begin{align*}
\frac{n!}{k!(n-k)!} \le {} & \frac{en^{n+0.5}e^{-n}}{\left(\sqrt{2\pi} k^{k+0.5}e^{-k}\right) \left(\sqrt{2\pi} (n-k)^{n-k+0.5}e^{-(n-k)}\right)} \\
\le {} & \frac{en^n}{\sqrt 2 \pi k^{k+0.5}(n-k)^{n-k}}.
\end{align*} 
Hence, we easily obtain the following runtime for $k\in[2..\lfloor \sqrt n \rfloor]$.
\begin{theorem}
Let $n\in \N$ and $k\in[2..\lfloor \sqrt n \rfloor]$. Then the expected runtime of the GSEMO-HTM optimizing the $\ojzj_{n,k}$ function is at most $(n-2k+3) (\tfrac{ {4}e^{8\sqrt 2 +14}\beta}{(\beta-1)\sqrt \pi} k^{\beta-0.5}\frac{n^n}{k^k (n-k)^{n-k}} + \tfrac{3e\beta}{\beta-1} n (\ln n+1) )$.
\label{cor:gsemohojzj}
\end{theorem}

Comparing Theorem~\ref{thm:gsemoojzj} and Theorem~\ref{thm:gsemohojzj} ( {Theorem}~\ref{cor:gsemohojzj}), we see that the asymptotic expected runtime of the GSEMO-HTM on $\ojzj_{n,k}$ is smaller than that of the GSEMO by a factor of around $k^{k+0.5-\beta}/e^k$.

\section{GSEMO with Stagnation Detection}
\label{sec:sdgsemo}

In this section, we discuss the non-trivial question of how to adapt the stagnation-detection strategy proposed by~\cite{RajabiW22} to multiobjective optimization, which increases the mutation rate with the time that no improvement has been found. We define a stagnation-detection variant of the GSEMO that has a slightly better asymptotic performance on \ojzj than the GSEMO with heavy-tailed mutation. In contrast to this positive result on \ojzj, we speculate that this algorithm may have difficulties with plateaus of constant fitness.

\subsection{The Stagnation-Detection Strategy of Rajabi and Witt}

\cite{RajabiW22} proposed the following strategy to adjust the mutation rate during the run of the \oea. We recall that the \oea is a very elementary EA working with a parent and offspring population size of one, that is, it generates in each iteration one offspring from the unique parent via mutation and accepts it if it is at least as good as the parent. The classic mutation operator for this algorithm is standard bit-wise mutation with mutation rate $1/n$, that is, the offspring is generated by flipping each bit of the parent independently with probability~$1/n$. 

The main idea of the stagnation-detection approach is as follows. Assume that the \oea for a longer time, say at least $10 n \ln n$ iterations, has not accepted any new solution. Then, with high probability, it has generated (and rejected) all Hamming neighbors of the parent. Consequently, there is no use to generate these solutions again and the algorithm should better concentrate on solutions further away from the parent. This can be achieved by increasing the mutation rate. For example, with a mutation rate of $\frac 2n$ the rate of Hamming neighbors produced is already significantly reduced; however, Hamming neighbors can still be generated, which is important in case we were unlucky so far and missed one of them. 

More generally and more precisely, to implement this stagnation-detection approach the \oea maintains a counter (``failure counter'') that keeps track of how long the parent individual has not given rise to a better offspring. This counter determines the current mutation rate. This dependency is governed by a safety parameter $R$ which is recommended to be at least~$n$. Then for $r = 1, 2, \dots$ in this order the mutation rate $r/n$ is used for 
\begin{equation}\label{eq:tr}
  T_r := \lceil 2 (\tfrac{en}{r})^r \ln(nR) \rceil
\end{equation}
iterations; the rate $\lfloor n/2 \rfloor$ is used until an improvement is found, that is, the mutation rate is never increased above  {$1/2$}. When a strictly improving solution is found, the counter is reset to zero, and consequently, the mutation rate starts again at~$\frac 1n$. 

\cite{RajabiW22} showed that the \oea with this strategy optimizes $\jump_{n,k}$ with $k = o(n)$ in time $\Omega((\frac{en}{k})^k (1 - \frac{k^2}{n-k}))$ and $O((\frac{en}{k})^k)$. In particular, for $k = o(\sqrt n)$, a tight (apart from constant factors independent of $k$ and $n$) bound of $\Theta((\frac{en}{k})^k)$ is obtained. This is faster than the runtime of $\Theta(k^{\beta-0.5} (\frac{en}{k})^k)$ proven by \cite{DoerrLMN17} for the \oea with heavy-tailed mutation with power-law exponent $\beta>1$ by a factor of $k^{\beta-0.5}$. For the recommended choice $\beta = 1.5$, this factor is $\Theta(k)$.  

\subsection{Adaptation of the Stagnation-Detection Strategy to Multiobjective Optimization}

The \oea being an algorithm without a real population, it is clear that certain adaptations are necessary to use the stagnation-detection strategy in multiobjective optimization. 

\subsubsection{Global or Individual Failure Counters}

The first question is how to count the number of unsuccessful iterations. The following two obvious alternatives exist. 

\emph{Individual counters: }From the basic idea of the stagnation-detection strategy, the most natural solution is to equip each individual with its own counter. Whenever an individual is chosen as parent in the GSEMO, its counter is increased by one. New solutions (but see the following subsection for an important technicality of what ``new'' shall mean) entering the population (as well as the random initial individual) start with a counter value of zero. 

\emph{A global counter: }Algorithmically simpler is the approach to use only one global counter. This counter is increased in each iteration. When a new solution enters the population, the global counter is reset to zero.

We suspect that for many problems, both ways of counting give similar results. The global counter appears to be wasteful in the sense that when a new individual enters the population, also parents that are contained in the population for a long time re-start generating offspring with mutation rate $\frac 1n$ despite the fact that they have, with very high probability, already generated as offspring all solutions close by. On the other hand, often these ``old individuals'' do not generate solutions that enter the population anyway, so that {an optimized choice of the mutation rate is less important}. 

For the \ojzj problem, it is quite clear that this second effect is dominant. A typical run starts with some individual in the  {inner} region of the Pareto front. In a relatively short time, the whole middle region is covered, and for this it suffices that relatively recent solutions generate a suitable Hamming neighbor as offspring. The runtime is dominated by the time to find the two extremal solutions and this will almost always happen from the closest parent in the middle region of the front (regardless of whether individual counters or a global counter is used). For this reason, we analyze in the following the simpler approach using a global counter.

\subsubsection{Dealing with Indifferent Solutions}

One question that becomes critical when using stagnation detection is how to deal with indifferent solutions, that is, which solution to put or keep in the population in the case that an offspring $y$ has the same (multiobjective) fitness as an individual $x$ already in the population. Since $f(x) = f(y)$, we have $x \preceq y$ and $y \preceq x$, that is, both solutions do an equally good job in dominating others and thus in approximating the Pareto front. In early works, e.g.~\cite{LaumannsTZWD02} proposing the SEMO algorithm, such later generated indifferent solutions do not enter the population. This is partially justified by the fact that in many of the problems regarded in these works, search points with equal fitness are fully equivalent for the future run of the algorithm. We note that our $\ojzj$ problem also has this property, hence all results presented so far are valid regardless of how indifferent solutions are treated.

When non-equivalent search points with equal fitness exist, it is less obvious how to deal with indifferent solutions. In particular, it is clear that larger plateaus of constant fitness can be traversed much easier when a new indifferent solution is accepted as this allows to imitate a random walk behavior on the plateau~\cite{BrockhoffFHKNZ07}. For that reason, and in analogy to single-objective optimization~\cite{JansenW01}, it seems generally more appropriate to let a new indifferent solution enter the population, and this is also what most of the later works on the SEMO and GSEMO algorithm do~\cite{FriedrichHN10,FriedrichHN11,QianTZ16,LiZZZ16,BianQT18ijcaigeneral,OsunaGNS20}. 

Unfortunately,  {as mentioned in Section~\ref{sec:int},} it is not so clear how to handle indifferent solutions together with stagnation detection. In principle, when a new solution enters the population, the failure counter has to be reset to zero to reset the mutation rate to $1/n$. Otherwise, the continued use of a high mutation rate would prohibit finding good solutions in the direct neighborhood of the new solution. However, the acceptance of indifferent solutions can also lead to unwanted resets. For the \ojzj problem, for example, it is easy to see by mathematical means that in a typical run, it will happen very frequently that an indifferent solution is generated. If this enters the population with a reset of a global failure counter (or an individual counter), then the regular resets will prevent the counters to reach interesting values. In a quick experiment for $n=50$, $k=4$, and a global counter, the largest counter value ever reached in this run of over 500,000,000 iterations was $5$. Consequently, this SD-GSEMO was far from ever increasing the mutation rate and just imitated the classic GSEMO. 

For this reason, in this work we regard the GSEMO with stagnation detection only in the variant that does not accept indifferent solutions, and we take note of the fact that thus our positive results on the stagnation-detection mechanism will not take over to problems with non-trivial plateaus of constant fitness.

\subsubsection{Adjusting the Self-Adjustment}

In the \oea with stagnation detection,~\cite{RajabiW22} increased the mutation rate from $\frac rn$ to $\frac{r+1}n$ once the rate $\frac rn$ has been used for $T_r$ iterations with $T_r$ as defined in~\eqref{eq:tr}. This choice ensured that any particular target solution in Hamming distance $r$ is found in this phase with probability at least $1 - (nR)^{-2}$, see the proof of Lemma~2 in~\cite{RajabiW22}. Since in a run of the GSEMO with current population size $|P|$ each member of the population is chosen as parent only an expected number of $T_r / |P|$ times in a time interval of length $T_r$, we need to adjust the parameter $T_r$. Not surprisingly, by
taking 
\begin{equation}\label{eq:trprime}
\tilde T_r = \lceil 2 \, |P| \,(\tfrac{en}{r})^r \ln(nR) \rceil,
\end{equation}
that is, roughly $|P| \, T_r$,  the probability to generate any particular solution in Hamming distance $r$ in phase $r$ is at least 
\begin{align*}
1-\bigg(&1-\frac{1}{|P|}\left(\frac{r}{n}\right)^r \left(1-\frac{r}{n}\right)^{n-r}\bigg)^{2|P|(en)^r\ln(nR)/r^r} \\
&\ge 1 - \frac{1}{(nR)^2},
\end{align*}
which is sufficient for this purpose. Note that the population size $|P|$ changes only if a new solution enters the population and in this case the mutation rate is reset to $\frac 1n$. Hence the definition of $\tilde T_r$, with the convention that we suppress $|P|$ in the notation to ease reading, is unambiguous.

\subsubsection{The GSEMO with Stagnation Detection: SD-GSEMO}

Putting the design choices discussed so far together, we obtain the following variant of the GSEMO, called SD-GSEMO. Its pseudocode is shown in Algorithm~\ref{alg:sdgsemo}.

\begin{algorithm}[!ht]
    \caption{SD-GSEMO with safety parameter $R$}
    {\small
    \begin{algorithmic}[1]
    \STATE {Generate $x\in\{0,1\}^n$ uniformly at random, and $P\leftarrow \{x\}$}
    \STATE {$r\leftarrow 1$ and $u \leftarrow 0$}
    \LOOP
    \STATE {Uniformly and randomly select one individual $x$ from $P$}
    \STATE {Generate $x'$ via independently flipping each bit value of $x$ with probability $r/n$}
    \STATE {$u \leftarrow u+1$}
    \IF {there is no $y \in P$ such that $x' \preceq y$}
    \STATE {$P=\{z\in P \mid z \npreceq x'\} \cup \{x'\}$}
    \STATE {$r\leftarrow 1$ and $u \leftarrow 0$}
    \ENDIF
    \IF {$u > 2|P|(\frac{en}{r})^{r} \ln (nR)$}
    \STATE {$r\leftarrow \min\{r+1,\frac n2\}$ and $u \leftarrow 0$}
    \ENDIF
    \ENDLOOP
    \end{algorithmic}
    \label{alg:sdgsemo}
    }
\end{algorithm}

\subsection{Runtime Analysis of the SD-GSEMO on $\ojzj_{n,k}$}

We now analyze the runtime of the SD-GSEMO on the \ojzj function class. This will show that its expected runtime is by at least a factor of $\Omega(k)$ smaller than the one of the heavy-tailed GSEMO (which was a factor of $k^{\Omega(k)}$ smaller than the one of the standard GSEMO).

We extract one frequently used calculation into Lemma~\ref{lem:cal} to make our main proof clearer. The proof of Lemma~\ref{lem:cal} uses the following lemma. 
\begin{lemma}[Lemma~2.1~\cite{RajabiW20}]
Let $m,n \in \N$ and $m < n$. Then $\sum_{i=1}^m (\frac{en}{i})^i < \frac {n}{n-m} (\frac{en}{m})^m$.
\label{lem:lem21}
\end{lemma}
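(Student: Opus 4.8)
The plan is to bound the sum by a geometric series dominated by its last term. Writing $a_i := (\frac{en}{i})^i$, the whole argument rests on one clean estimate for the ratio of consecutive terms: I would show that for every integer $i$ with $2 \le i \le m$,
\[
\frac{a_{i-1}}{a_i} = \frac{i}{en}\left(1 + \frac{1}{i-1}\right)^{i-1} < \frac{i}{n} \le \frac{m}{n}.
\]
The middle equality is a direct computation after cancelling the powers of $en$, and the first inequality uses the standard fact that $(1 + \frac{1}{i-1})^{i-1} < e$ for every integer $i \ge 2$, which is exactly what cancels the stray factor of $e$.

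Second, I would telescope this ratio bound toward the top index $m$. For $0 \le j \le m-1$, multiplying the bounds for the indices $i = m, m-1, \dots, m-j+1$ gives
\[
a_{m-j} < a_m \prod_{l=1}^{j} \frac{m-l+1}{n} \le a_m \left(\frac{m}{n}\right)^{j},
\]
since each factor $\frac{m-l+1}{n}$ is at most $\frac{m}{n}$ for $l \ge 1$. The case $j=0$ is trivial, and letting $j$ run all the way up to $m-1$ reaches the smallest term $a_1$ while only ever invoking ratios with index $i \ge 2$, so the estimate from the first step always applies. As a byproduct this shows $a_m$ is indeed the largest of the summands.

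Finally, I would reindex $\sum_{i=1}^m a_i = \sum_{j=0}^{m-1} a_{m-j}$, insert the telescoped bound, and pass to the full geometric series:
\[
\sum_{i=1}^m a_i < a_m \sum_{j=0}^{m-1}\left(\frac{m}{n}\right)^{j} < a_m \sum_{j=0}^{\infty}\left(\frac{m}{n}\right)^{j} = \frac{a_m}{1 - m/n} = \frac{n}{n-m}\left(\frac{en}{m}\right)^{m}.
\]
Both the convergence of the geometric series and the strictness of the second inequality come from the hypothesis $m < n$, which forces $\frac{m}{n} < 1$. This is exactly the claimed bound.

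The only genuine obstacle is making the ratio estimate in the first step tight enough. A sloppy bound on $(1+\frac{1}{i-1})^{i-1}$ would fail to cancel the factor $e$ and leave a ratio exceeding $\frac{i}{n}$, which would no longer telescope into the clean prefactor $\frac{n}{n-m}$; everything after that step is routine telescoping and a standard geometric-series estimate.
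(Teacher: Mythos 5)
Your proof is correct, but note that the paper itself contains no proof of this statement: Lemma~\ref{lem:lem21} is imported verbatim from \citet{RajabiW20}, so there is no in-paper argument to compare against. Judged on its own, your route is in fact the standard one (and essentially the argument in the cited source): setting $a_i = (en/i)^i$, the exact ratio computation $\frac{a_{i-1}}{a_i} = \frac{i}{en}\left(1+\frac{1}{i-1}\right)^{i-1} < \frac{i}{n} \le \frac{m}{n}$ is valid for all integers $i \in [2..m]$, since $(1+\frac{1}{i-1})^{i-1} < e$ holds strictly for every finite $i \ge 2$, and this is precisely what cancels the stray factor of $e$; the telescoped bound $a_{m-j} < a_m (m/n)^j$ for $j \ge 1$ and the comparison with the infinite geometric series $\sum_{j=0}^{\infty} (m/n)^j = \frac{n}{n-m}$ (convergent exactly because $m < n$) then give the claim. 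One cosmetic point: when $m = 1$, the first inequality in your final chain degenerates to an equality, since $\sum_{i=1}^{1} a_i = a_1 = a_m \sum_{j=0}^{0} (m/n)^j$, so that step should be stated as $\le$; the overall strict inequality nevertheless survives because the passage to the infinite geometric series is strict (as $m/n > 0$). Apart from this trivial edge case, every step checks out.
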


\begin{lemma}
Let $c\ge 0$, $a,n \in \N$, and $a \le \frac n2$. Then
\begin{align*}
\sum_{r=a+1}^{n/2}\frac{1}{n^{c(r-a)}} \sum_{i=1}^r \left(\frac {en}{i}\right)^i \le 2\left(\frac{en}{a}\right)^{a} \left(\frac{1}{n^{c-1}a}+ \frac{1}{2n^{2c-3}a^2}\right).
\end{align*}
\label{lem:cal}
\end{lemma}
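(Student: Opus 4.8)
The plan is to first collapse the inner sum with Lemma~\ref{lem:lem21} and then show that the resulting outer sum is essentially geometric, dominated by its first one or two terms. First I would apply Lemma~\ref{lem:lem21} to each inner sum to obtain $\sum_{i=1}^r (\frac{en}{i})^i < \frac{n}{n-r}(\frac{en}{r})^r$, and since $r \le n/2$ gives $\frac{n}{n-r} \le 2$, each inner sum is at most $2(\frac{en}{r})^r$. This already yields the leading factor $2$ of the claimed bound and reduces the task to estimating $S := \sum_{r=a+1}^{n/2} n^{-c(r-a)}(\frac{en}{r})^r$.

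Next I would factor out $(\frac{en}{a})^a$ and substitute $r = a+j$, writing $S = (\frac{en}{a})^a \sum_{j \ge 1} b_j$ where $b_j := n^{-cj}(\frac{en}{a+j})^{a+j}/(\frac{en}{a})^a$. The heart of the argument is a monotone-ratio estimate for the factorial-type terms: for every $r \ge 1$ one has $\frac{(en/(r+1))^{r+1}}{(en/r)^r} = \frac{en}{r+1}(1+\frac1r)^{-r} \le \frac{n}{r}$, which I would derive from $(1+\frac1r)^{-r} \le e^{-1}e^{1/(2r)}$ using $\ln(1+\frac1r) \ge \frac1r - \frac{1}{2r^2} \ge \frac{1}{2r}$. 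Multiplying by the decay factor $n^{-c}$ shows that consecutive outer summands satisfy $b_{j+1}/b_j \le \frac{n^{1-c}}{a+j} \le q$ and $b_1 \le q$, where $q := n^{1-c}/a = \frac{1}{n^{c-1}a}$; in particular $b_1 \le \frac{1}{n^{c-1}a}$ is exactly the first term of the target.

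It then remains to bound the tail $\sum_{j \ge 2} b_j$. Since $b_2 \le q\, b_1 \le q^2$ and all later ratios are at most $q \le \frac12$, the tail is at most $2 b_2 \le 2q^2 = \frac{2 n^{2-2c}}{a^2}$; absorbing the constant via $n \ge 4$ gives $\sum_{j \ge 2} b_j \le \frac{n^{3-2c}}{2a^2} = \frac{1}{2n^{2c-3}a^2}$, the second target term (the few small $n$ being checked directly). Combining $b_1$, the tail, and the factor $2$ from the first step then yields the stated inequality.

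I expect the main obstacle to be the geometric-decay control in the middle step: the terms $(\frac{en}{r})^r$ are themselves increasing in $r$ (their ratio is roughly $\frac{n}{r}$), so the whole estimate rests on the decay factor $n^{-c(r-a)}$ overpowering this growth, i.e.\ on $q = n^{1-c}/a$ being bounded away from $1$ (indeed $q \le \tfrac12$), which is precisely what turns $\sum_j b_j$ into a convergent geometric series whose first two terms match the right-hand side. The one delicate analytic ingredient is the clean bound $(1+\frac1r)^{-r} \le e^{-1}e^{1/(2r)}$ needed to push the ratio estimate down to exactly $\frac{n}{r}$; once that is in place, the remaining manipulations are routine.
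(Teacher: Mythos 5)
Your proposal is correct in substance and follows the same skeleton as the paper's proof: both first collapse the inner sum via Lemma~\ref{lem:lem21} and use $\frac{n}{n-r}\le 2$ for $r\le n/2$, and both then argue that the remaining outer sum is dominated by its first one or two summands. The difference lies in the tail treatment. The paper keeps the $r=a+1$ term, bounds every term with $r\ge a+2$ by the $r=a+2$ term (implicitly using that the summands are non-increasing from there on), and multiplies by the crude count $\frac n2-a\le\frac n2$; this factor of order $n$ is exactly why the second term of the target carries $n^{3-2c}$ rather than $n^{2-2c}$. You instead prove the explicit ratio bound $\frac{(en/(r+1))^{r+1}}{(en/r)^r}\le\frac nr$ and sum a geometric series with ratio $q=n^{1-c}/a$, obtaining the stronger tail bound $2q^2=2n^{2-2c}/a^2$ and then giving back a factor $n/4$ (using $n\ge4$; for $n\le 3$ the outer sum is empty) to match the stated form. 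Your ratio estimate is true --- it is equivalent to $(1+\frac1r)^{r+1}\ge e$ --- but the parenthetical chain is slightly muddled: from $\ln(1+\frac1r)\ge\frac1r-\frac1{2r^2}$ you get $(1+\frac1r)^{-r}\le e^{-1}e^{1/(2r)}$, and you then still need $e^{1/(2r)}\le 1+\frac1r$ (true for $r\ge1$) to conclude; the weaker bound $\ln(1+\frac1r)\ge\frac1{2r}$ that ends your chain does not by itself give what you claim.

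One caveat deserves flagging: your assertion $q\le\frac12$ does not follow from the stated hypothesis $c\ge0$ (for $c=1$, $a=1$ one has $q=1$, and for $c=0$ one has $q=n/a\ge2$), so as written your argument needs roughly $c\ge2$, or more precisely $n^{1-c}\le a/2$. This is not really a defect of your proof alone: the lemma as stated is in fact false for $c=0$ (for fixed $a$ and large $n$ the left side contains the summand $(2e)^{n/2}$ while the right side is polynomial in $n$), and the paper's own proof implicitly requires $c\ge1$ for its monotonicity step. In both places where the paper applies the lemma one has $c=4$, where your condition holds comfortably; still, a careful write-up should state explicitly the range of $c$ for which your geometric-decay argument is valid.
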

\begin{proof}
We compute
\begin{align*}
\sum_{r=a+1}^{n/2}&\frac{1}{n^{c(r-a)}} \sum_{i=1}^r \left(\frac {en}{i}\right)^i \le \sum_{r=a+1}^{n/2}\frac{1}{n^{c(r-a)}} \frac {n}{n-r} \left(\frac{en}{r}\right)^r
\le 2\sum_{r=a+1}^{n/2}\left(\frac{e}{r}\right)^r\frac{n^r}{n^{c(r-a)}} \\
& \le {} 2 \left(\frac{e}{a+1}\right)^{a+1} \frac{n^{a+1}}{n^c} + 2\left(\frac n2 -a\right)\left(\frac{e}{a+2}\right)^{a+2}\frac{n^{a+2}}{n^{2c}}\\
& \le {} 2\left(\frac{en}{a}\right)^{a} \left( \frac{e}{n^{c-1}a} \left(1-\frac{1}{a+1}\right)^{a+1} + \frac{e^2}{2n^{2c-3}a^2} \left(1-\frac{2}{a+2}\right)^{a+2} \right)\\
& \le {} 2\left(\frac{en}{a}\right)^{a} \left(\frac{1}{n^{c-1}a}+ \frac{1}{2n^{2c-3}a^2}\right),
\end{align*}
where the first inequality uses Lemma~\ref{lem:lem21}.
\end{proof}

As for all algorithms discussed in this work, the most costly part of the optimization process is finding the two extremal points of the Pareto front. Borrowing many ideas from the proof of~\cite[Theorem~3.2]{RajabiW20}, we show the following estimate for the time to find these extremal points when the inner part of the Pareto front is already covered.

\begin{lemma}
Consider the SD-GSEMO optimizing the $\ojzj_{n,k}$ function. Consider the first time $T_0$ that all $(a,2k+n-a)$, $a\in[2k..n]$, are  {covered by the current population}. Then the two possibly missing elements $(a,2k+n-a)$, $a\in\{k,n+k\}$, of the Pareto front will be found in an expected number of at most $(n-2k+3)(\frac{en}{k})^{k}(\frac 32+ ( \frac{4k}{n} +\frac{12}{nk}) \ln (nR)) + 2k$ additional generations.
\label{lem:jumpsd}
\end{lemma}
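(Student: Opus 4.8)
The plan is to reduce the lemma to two successive ``gap-jumping'' stages and to analyze each with the stagnation-detection technique of~\citep[Theorem~3.2]{RajabiW20}. The decisive structural observation is that once all inner points $(a,2k+n-a)$, $a\in[2k..n]$, are present, the population already occupies every inner Pareto value, so no further inner solution can ever enter: an offspring with the same fitness as some $y\in P$ satisfies $x'\preceq y$ and is rejected by the acceptance test, while an offspring lying in one of the two gaps ($|x'|_1\in[1..k-1]\cup[n-k+1..n-1]$) is dominated by an inner solution and is likewise rejected. Hence after $T_0$ the only solutions that can ever enter are $0^n$ and $1^n$, and the global counter $u$ is therefore reset (and the rate returned to $r=1$) only when one of these two extremal points is found. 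In particular, right after $T_0$ we have $r=1,u=0$, and in each stage the rate increases monotonically through $\tfrac1n,\tfrac2n,\dots$ up to $\tfrac kn$ and beyond, each rate $\tfrac rn$ being used for $\tilde T_r=\lceil 2|P|(\tfrac{en}{r})^r\ln(nR)\rceil$ iterations, with $|P|\le n-2k+3$ by Corollary~\ref{cor:population}.

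First I would bound the time to find the first of $0^n,1^n$. The two relevant parents are the inner solutions with $|x|_1=k$ (value $(2k,n)$, which reaches $0^n$ by flipping its $k$ ones) and with $|x|_1=n-k$ (value $(n,2k)$, which reaches $1^n$ by flipping its $k$ zeros). Selecting either and flipping exactly the right $k$ bits in phase $r$ succeeds with probability $q_r\ge\frac{2}{|P|}(\tfrac rn)^k(1-\tfrac rn)^{n-k}$, with $|P|=n-2k+1$. Following the structure of~\citep[Theorem~3.2]{RajabiW20}, the cost of passing through phases $r=1,\dots,k-1$ is the deterministic overhead $\sum_{r=1}^{k-1}\tilde T_r$, which I bound via Lemma~\ref{lem:lem21} by roughly $2|P|\ln(nR)\cdot\frac{k}{n-k+1}(\tfrac{en}{k})^k$; and in phase $k$ the success probability is at least $q_k\ge\frac{2}{|P|(en/k)^k}$, because $(1-\tfrac kn)^{n-k}\ge e^{-k}$ yields $(\tfrac kn)^k(1-\tfrac kn)^{n-k}\ge (\tfrac{en}{k})^{-k}$. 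Thus the expected time actually spent jumping in phase $k$ is at most $\frac{|P|}{2}(\tfrac{en}{k})^k$.

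Next I would treat the second jump identically, now with $|P|=n-2k+2$ and only a single surviving target (one parent, one bit pattern), so the phase-$k$ success probability is halved and the expected jump time becomes at most $|P|(\tfrac{en}{k})^k$. Summing the two jump contributions gives the main term $\bigl(\tfrac12+1\bigr)(n-2k+3)(\tfrac{en}{k})^k=\tfrac32(n-2k+3)(\tfrac{en}{k})^k$, using $\frac{n-2k+1}{2}+(n-2k+2)\le\frac32(n-2k+3)$; the two overhead sums $\sum_{r<k}\tilde T_r$ produce the $\frac{4k}{n}\ln(nR)$ correction once $(n-2k+3)(\tfrac{en}{k})^k$ is factored out; and the $\lceil\cdot\rceil$ rounding of $\tilde T_r$ over at most $k$ phases in each of the two stages accounts for the additive $+2k$.

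The main obstacle is controlling the tail, i.e.\ the contribution of phases $r>k$ that are reached only if the jump fails throughout phase $k$. Here I would show that phase $k$ lasts $\tilde T_k\ge 2|P|(\tfrac{en}{k})^k\ln(nR)$ iterations, so the probability of not jumping during it is at most $\exp(-q_k\tilde T_k)\le (nR)^{-4}$, and that this failure probability compounds geometrically over successive phases while the phase lengths grow only like $(\tfrac{en}{r})^r$. Lemma~\ref{lem:cal} is exactly the tool to bound the resulting weighted sum $\sum_{r>k}\Pr[\text{reach phase }r]\,\tilde T_r$ and to absorb it into the lower-order $\frac{12}{nk}\ln(nR)$ term. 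Making the population-size factor in $\tilde T_r$ interact correctly with the $1/|P|$ in $q_r$, and getting all constants to line up with the stated bound, is where most of the care will be needed.
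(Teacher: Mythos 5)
Your proposal is correct and follows essentially the same route as the paper's proof: the same phase decomposition with the deterministic overhead $\sum_{r<k}\tilde T_r$ bounded via Lemma~\ref{lem:lem21}, the phase-$k$ jump time via $(1-\frac kn)^{n-k}\ge e^{-k}$, geometrically compounding per-phase failure probabilities of order $(nR)^{-4}$ (resp.\ $(nR)^{-2}$ in the second stage) with the tail absorbed by Lemma~\ref{lem:cal}, and the two stages with success probabilities $\frac{2}{|P|}$ and $\frac{1}{|P|}$ combining to the $\frac32(n-2k+3)(\frac{en}{k})^k$ main term; your explicit observation that after $T_0$ no solution other than $0^n$ and $1^n$ can enter the population (so the counter is never reset mid-stage) is exactly what the paper uses implicitly. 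The only technicalities the paper spells out that you would still need are the conditioned-geometric fact $E[X \mid X \le T] \le E[X]$ (or, in your unconditional decomposition, $E[\min(X,\tilde T_k)] \le E[X]$) and the check that $1/p_{n/2} \le \tilde T_{n/2}$, so that the tail estimate also covers the final phase $r=\frac n2$, whose duration is not capped by the algorithm.
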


\begin{proof}
Let $T'$ denote the first time (additional number of iterations) that one of $0^n$ and $1^n$ is found. Let phase $r$ be the period when the SD-GSEMO uses the bit-wise mutation rate $\frac rn$, and let $A_r$ denote the event that either $0^n$ or $1^n$ is found in phase $r$. With the pessimistic assumption that neither $0^n$ nor $1^n$ is found during phase $r$ for all $r < k$, we have
\begin{align*}
E[T']=\sum_{r=1}^{n/2} E[T'\mid A_r] \Pr[A_r]\le\sum_{r=k}^{n/2} E[T'\mid A_r] \Pr[A_r \mid r \ge k].
\end{align*}
Conditional on the event $A_r$, the time $T'$ to find the first extremal point includes all time spent in phases $1$ to $r-1$, that is, at most $\tilde{T}_1 + \dots + \tilde{T}_{r-1}$ with $\tilde{T}_i$ defined as in~\eqref{eq:trprime} for $|P|=n-2k+3$. In phase $r$, each iteration has a probability of at least $p_r = \frac{2}{n-2k+3}\left(\frac{r}{n}\right)^k\left(1-\frac rn\right)^{n-k}$ to find an extremal point (via choosing one of the boundary points of the inner region  {as} parent and then flipping exactly the right $k$ bits in it). Consequently, the time spent in phase $r$ follows a geometric law with success probability $p$ conditional on being at most $\tilde{T}_r$. Since for any random variable $X$ and any number $T$ such that $\Pr[X \le T] > 0$ we have $E[X \mid X \le T] \le E[X]$, the expected number of iterations spend in phase $r$ conditional on $A_r$ is at most $\frac {1}{p_r}$. 

From this, and recalling from Corollary~\ref{cor:population} that the population size is at most $n-2k+3$, we compute 
\begin{align}
E[T'\mid & A_k] \le \tilde{T}_1 + \dots + \tilde{T}_{k-1}  + \tfrac {1}{p_k} \nonumber\\
&\le \sum_{r=1}^{k-1} \left\lceil 2(n-2k+3) \left(\frac{en}{r}\right)^{r} \ln (nR) \right\rceil+\left(\frac{2}{n-2k+3}\left(\frac{k}{n}\right)^k\left(1-\frac kn\right)^{n-k}\right)^{-1}\nonumber\\
&\le {} 2(n-2k+3)\ln (nR)\frac{n}{n-k+1}\left(\frac{en}{k-1}\right)^{k-1} + k-1
+\frac{n-2k+3}{2}\left(\frac{en}{k}\right)^{k}\nonumber\\
&\le {} \frac{n-2k+3}{2}\left(\frac{en}{k}\right)^{k} \left( \frac{4k\ln (nR)}{en} \left(1+\frac{1}{k-1}\right)^{k-1} + 1\right)+k-1\nonumber\\
&\le {} \frac{n-2k+3}{2}\left(\frac{en}{k}\right)^{k} \left( \frac{4k\ln (nR)}{n}+ 1\right) +k -1,
\label{eq:halfT}
\end{align}
where the second inequality uses Lemma~\ref{lem:lem21}.

We observe that for $r\ge k$, the probability that during phase $r$ neither $0^n$ nor $1^n$ is found, is at most
\begin{align}
\bigg(1-\frac{2}{|P|}&\left(\frac{r}{n}\right)^k \left(1-\frac{r}{n}\right)^{n-k}\bigg)^{\lceil 2|P|(en)^r\ln(nR)/r^r \rceil}
\le \frac{1}{(nR)^4}.
\label{eq:fail}
\end{align}
Hence we have $\Pr[A_r] \le \prod_{i=k}^{r-1} \frac{1}{(nR)^4} \le (nR)^{-4(r-k)} \le n^{-4(r-k)}$ for $r>k$. Noting that the estimate $E[T'\mid A_r] \le \sum_{i=1}^{r} \tilde{T}_{i}$ holds trivially for $r<\frac n2$, and that it also holds for $r=\frac n2$ due to $\frac{1}{p_r} \le \lceil 2(n-2k+3) (2e)^{n/2} \ln (nR) \rceil= \tilde{T}_{\frac n2}$, we compute
\begin{align}
\sum_{r=k+1}^{n/2} & E[T'\mid A_r] \Pr[A_r] 
\le \sum_{r=k+1}^{n/2} \frac {1}{n^{4(r-k)}} \left(\sum_{i=1}^{r} \left\lceil 2(n-2k+3) \left(\frac{en}{i}\right)^{i} \ln (nR) \right \rceil \right)\nonumber\\
&\le {} 4(n-2k+3) \ln (nR) \left(\frac{en}{k}\right)^{k} \left(\frac{1}{n^3k}+\frac{1}{2n^5k^2}\right) + \sum_{r=k+1}^{n/2} \frac {r}{n^{4(r-k)}} \nonumber\\
&\le {} 4(n-2k+3) \ln (nR) \left(\frac{en}{k}\right)^{k} \left(\frac{1}{n^3k}+\frac{1}{2n^5k^2}\right) +\sum_{r=k+1}^{n/2} \frac {n}{2n^{4}} \nonumber\\
&\le {} 4(n-2k+3) \ln (nR) \left(\frac{en}{k}\right)^{k} \left(\frac{1}{n^3k}+\frac{1}{2n^5k^2}\right) +1,
\label{eq:anhalfT}
\end{align}
where we use Lemma~\ref{lem:cal} for the second inequality.

Noting $\Pr[A_k] \le 1$, together with (\ref{eq:halfT}) and (\ref{eq:anhalfT}), we have 
\begin{align*}
E[T'] \le \frac{n-2k+3}{2}\left(\frac{en}{k}\right)^{k} \left( \frac{4k\ln (nR)}{n}+ 1 + \left(\frac{1}{n^3k}+\frac{1}{2n^5k^2}\right) 8\ln (nR)\right) + k.
\end{align*}

We omit a precise statement of the very similar calculation for the time to generate the second extremal point. The only differences are that the expression $\frac{2}{n-2k+3}$ in the definition of $p$ has to be replaced by $\frac{1}{n-2k+3}$ and that the expression $\frac{2}{|P|}$ in (\ref{eq:fail}) has to be replaced by $\frac{1}{|P|}$, both due to the fact that now only one parent individual can generate the desired solution via a $k$-bit flip. With this, we obtain that the expected additional number of iterations to generate the second extremal point is at most
\begin{align*}
(n-2k+3)\left(\frac{en}{k}\right)^{k}\left(\frac{2k\ln (nR)}{n} +1+ \left(\frac{1}{nk}+\frac{1}{2nk^2}\right) 4\ln (nR)\right) + k.
\end{align*}
With $4(\frac{1}{nk}+\frac{1}{2nk^2})+8(\frac{1}{2n^3k}+\frac{1}{4n^5k^2}) \le \frac{12}{nk}$, this lemma is proven.
\end{proof}

Now we are ready to show the runtime for the SD-GSEMO on the $\ojzj_{n,k}$ function.

\begin{theorem}
The expected runtime of the SD-GSEMO optimizing the $\ojzj_{n,k}$ function is at most 
\[
\textstyle{(n-2k+3)(\frac{en}{k})^{k}(\frac 32+ ( \frac{4k}{n} +\frac{12}{nk}) \ln (nR))+3e(n-2k+3)(n\ln n +2(n-2)\ln(nR))}.
\]
For $k = o(n / \ln(nR))$, this bound is at most 
\[
\textstyle{(n-2k+3)\big((\frac 32 + o(1))(\frac{en}{k})^{k} + 3en (\ln n + 2\ln(nR))\big)}.
\]
\label{thm:sdgsemoojzj}
\end{theorem}

\begin{proof}
We first consider the expected time until we have all $(a,2k+n-a)$, $a\in[2k..n]$, in the current Pareto front. Our main argument is the following adaptation of the analysis of this period for the standard GSEMO in the proof of Theorem~\ref{thm:gsemoojzj}. The main argument there was that the time spent in this period can be estimated by a sum of at most $n-2$ waiting times for the generation of a particular Hamming neighbor of a particular member of the population. The same argument, naturally, is valid for the SD-GSEMO, and fortunately, we can also reuse the computation of these waiting times. Consider the time such a subperiod now takes with the SD-GSEMO. If we condition on the subperiod ending before the rate is increased above the initial value of $1/n$, then the expected time is at most the expected time of this subperiod for the standard GSEMO (this is the same argument as used in the proof of Lemma~\ref{lem:jumpsd}). Consequently, the total time of this period for the SD-GSEMO is at most the corresponding time for the standard GSEMO ($en(n-2k+3)(\ln n +1)+ 2en(n-2k+3) (\ln (n-k)+1)$ as determined in the proof of Theorem~\ref{thm:gsemoojzj}) plus the additional time caused by subperiods using rates $r/n$ with $r \ge 2$. 

We prove an upper bound for this time valid uniformly for all subperiods. Let phase $r$ be the time interval of this subperiod when the SD-GSEMO uses the bit-wise mutation rate $\frac rn$. We recall that at all times the population contains an individual $x$ such that there is a search point $y$ that can dominate at least one individual in the current population and that $y$ is a Hamming neighbor of $x$, that is, can be generated from $x$ by flipping a single bit.
Hence for all $r \ge 1$, the probability that this $y$ is not found in phase $r$ is at most
\begin{align*}
\left(1-\frac{1}{|P|}\left(1-\frac rn\right)^{n-1}\frac{r}{n}\right)^{\lceil 2|P|(en)^r\ln(nR)/r^r \rceil} \le \frac{1}{(nR)^2}.
\end{align*}
Thus with the probability at least $1-\frac{1}{(nR)^2}$, the desired search point $y$ will be found in phase $r$. Hence, if $y$ is not found in phase $r = 1$, then analogous to~\eqref{eq:anhalfT} in the proof of Lemma~\ref{lem:jumpsd}, also noting that the estimate $E[T'\mid A_r] \le \sum_{i=1}^{r} \tilde{T}_{i}$ holds trivially for $r<\frac n2$ and it also holds for $r=\frac n2$ since $|P|2^n \le \lceil 2|P| (2e)^{n/2} \ln (nR) \rceil= \tilde{T}_{\frac n2}$, the expected time spent in phases from $2$ to $n/2$ is at most
\begin{align*}
\sum_{r=2}^{n/2} \frac{1}{(nR)^{2(r-1)}}\sum_{i=1}^{r} \left( 2|P| \left(\frac{en}{i}\right)^{i} \ln (nR) \right) 
\le 4en|P| \ln(nR)\frac{3}{2n} = 6e|P|\ln(nR),
\end{align*}
where we use Lemma~\ref{lem:cal} for the first inequality. 

With this the total additional runtime caused by the stagnation-detection strategy compared with the GSEMO to find all $(a,2k+n-a)$, $a\in[2k..n]$ in the Pareto front is at most $(n-2)6e|P|\ln(nR)$. Therefore together with the runtime for GSEMO to find all $(a,2k+n-a)$, $a\in[2k..n]$ in the Pareto front in the proof of Theorem~\ref{thm:gsemoojzj}, we know the expected time when all $(a,2k+n-a)$, $a\in[2k..n]$ are contained in the current Pareto front is at most
\begin{align*}
en(n-2k+3)(\ln n +1)&+ 2en(n-2k+3) (\ln (n-k)+1)\\
+&{} (n-2)6e(n-2k+3)\ln(nR))\\
= {}&  {3e}(n-2k+3)(n\ln n +2(n-2)\ln(nR)).
\end{align*}

Together with the time to find the remaining two elements in the Pareto front in Lemma~\ref{lem:jumpsd}, this theorem is proven.
\end{proof}

Assume that, as suggested in~\cite{RajabiW20}, the control parameter $R$ is set to $n$. Then the dominating element of the upper bound in Theorem~\ref{thm:sdgsemoojzj} becomes $(n-2k+3)(\frac{en}{k})^k(\frac32 + 8(\frac kn+\frac{3}{nk})\ln n)$. Hence if $k=O(\frac{n}{\ln n})$,  then the runtime of SD-GSEMO on $\ojzj_{n,k}$ is $O((n-2k)(\frac{en}{k})^k)$.

\section{Experiments}
\label{sec:exp}

To understand the performance of the algorithms discussed in this work for concrete problem sizes (for which an asymptotic mathematical analysis cannot give definite answers), we now conduct a simple experimental analysis. Since the SEMO cannot find the Pareto front, we did not include it in this investigation. We did include the variant of the SD-GSEMO, denoted by SD-GSEMO-Ind, in which each individual has its own failure counter (see the discussion in Section~\ref{sec:sdgsemo}). Our experimental settings are the same for all algorithms.
\begin{itemize}
\item $\ojzj_{n,k}$: jump size $k=4$ and problem size $n=10,14,\dots,50$.
\item Maximal iterations for the termination: $n^{k+3}$. This number of iterations was reached in none of the experiments, i.e., the results reported are the true runtimes until the full Pareto front was found.
\item $\beta=1.5$ as suggested in~\cite{DoerrLMN17} for the power-law distribution in GSEMO-HTM.
\item $R=n$ for SD-GSEMO and SD-GSEMO-Ind as suggested in~\cite{RajabiW20}. 
\item $20$ independent runs for each setting. 
\end{itemize}

Figure~\ref{fig:ojzjrun} shows the median number of function evaluations of GSEMO, GSEMO-HTM, SD-GSEMO, and SD-GSEMO-Ind on the $\ojzj_{n,k}$ function. To see how the experimental results compare with our bounds, we also plot (i)~the curve $1.5e(n-2k)n^k$ corresponding to the bounds for the GSEMO in Theorems~\ref{thm:gsemoojzj} and~\ref{thm:lowerbound}, (ii)~the curve $(n-2k)(en)^k/k^{k-1}$ for the GSEMO-HTM with $\beta=1.5$; since the leading constant in Theorem~\ref{thm:gsemohojzj} is implicit, we chose a constant such that the curve matches the experimental data. We observe that for $n\ge 18$,  {Theorem}~\ref{cor:gsemohojzj} have given an upper bound of $\tfrac{ {4}e^{8\sqrt 2 +14}\beta}{(\beta-1)\sqrt \pi} \ge  {6\times10^{11}}$ for the leading constant, which is far larger than $1$ and indicates that further improvements of the leading constant estimate are possible, and (iii)~the curve $1.5(n-2k)(en)^k/k^{k}$ corresponding to the upper bound of SD-GSEMO with $R=n$ in Theorem~\ref{thm:sdgsemoojzj}. 

We clearly see that these curves, in terms of shape and, where known, in terms of leading constants, match well the estimates of our theoretical runtime results. We also see, as predicted by informal considerations, the similarity of the performance of the SD-GSEMO and the SD-GSEMO-Ind. Finally, our experiments show that the different runtime behaviors are already visible for moderate (and thus realistic) problem sizes and not only in the asymptotic sense in which they were proven. In particular, we observe a performance improvement by a factor of (roughly) $5$ through the use heavy-tailed mutation and by a factor of (roughly) $10$ with the stagnation-detection strategy.
\begin{figure}[!ht]
\centering
\includegraphics[width=4.0in]{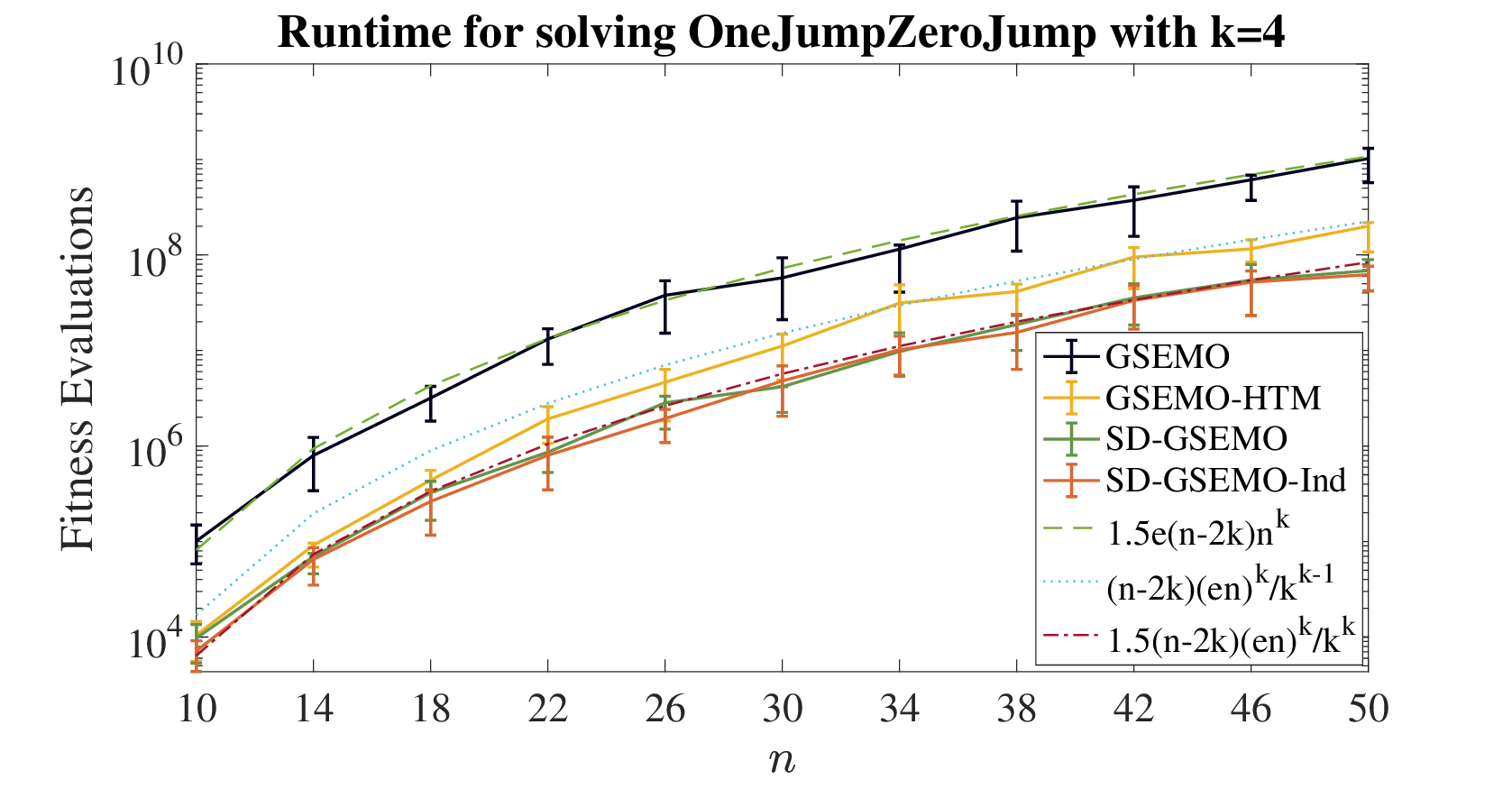}
\caption{The median number of function evaluations (with the first and third quartiles) of GSEMO, GSEMO-HTM, SD-GSEMO, and SD-GSEMO-Ind on $\ojzj_{n,k}$ with $k=4$ and $n=10:4:50$ in 20 independent runs.}
\label{fig:ojzjrun}
\end{figure}

\section{Conclusion and Outlook}
\label{sec:con}

Noting that, different from single-objective evolutionary computation, no broadly accepted multimodal benchmarks exist in the theory of MOEAs, we proposed the \ojzj benchmark, which is a multiobjective analogue of the single-objective jump problem. 

We use this benchmark to show that several insights from single-objective EA theory extend to the multiobjective world. Naturally, algorithms using 1-bit mutation such as the SEMO cannot optimize our benchmark, that is, cannot compute a population that covers the full Pareto front. In contrast, for all problem sizes $n$ and jump sizes $k\in [4..\frac n2 -1]$, the GSEMO covers the Pareto front in $\Theta((n-2k)n^k)$ iterations in expectation. Heavy-tailed mutation rates and the stagnation-detection mechanism of \cite{RajabiW22} give a runtime improvement over the standard GSEMO by a factor of $k^{\Omega(k)}$, with the stagnation-detection GSEMO slightly ahead by a small polynomial factor  {in~$k$}. These are the same gains as observed for the single-objective \jump benchmark with gap size~$k$. Our experiments confirm this performance ranking already for moderate problem sizes, with the stagnation-detection GSEMO slightly more ahead than what the asymptotically small advantage suggests. On the downside, adapting the stagnation-detection mechanism to MOEAs needs taking several design choices, among which the question how to treat indifferent solutions could be difficult for problems having larger plateaus of constant fitness.

Overall, this work suggests that the recently developed ideas to cope with multimodality in single-objective evolutionary optimization can be effective in multiobjective optimization as well. In this first work in this direction, we only concentrated on mutation-based algorithms. The theory of evolutionary computation has also observed that crossover and estimation-of-distribution algorithms can be helpful in multimodal optimization. Investigating to what degree these results extend into multiobjective optimization is clearly an interesting direction for future research. 

Also, we only covered very simple MOEAs in this work. Analyzing more complex MOEAs amenable to mathematical runtime anlyses, such as the decomposition-based MOEA/D~\cite{ZhangL07,LiZZZ16} or the \NSGA~\cite{DebPAM02,ZhengLD22} would be highly interesting. For the MOEA/D, this would most likely require an adaptation of our benchmark problem. Since the difficult-to-find extremal points of the front are just the solutions of the single-objective sub-problems, and thus the two problems that naturally are part of the set of subproblems regarded by the MOEA/D, this algorithm might have an unfair advantage on the \ojzj problem. 


\emph{Work conducted after this research:} After the publication of~\cite{DoerrZ21aaai}, the following works have used the \ojzj benchmark. In~\cite{DoerrQ22ppsn}, the performance of the \NSGA, both with standard-bit mutation and with heavy-tailed mutation, on the \ojzj benchmark was analyzed. When the  {population} size $N$ is at least four times the size $n-2k+3$ of the Pareto front and $2 < k \le n/4$, then the standard \NSGA finds the Pareto front in time $O(N n^k)$. Hence the \NSGA and the GSEMO satisfy the same asymptotic runtime guarantees when $N$ is chosen asymptotically optimal (that is, $N = \Theta(n-2k)$). With heavy-tailed mutation, again a $k^{\Omega(k)}$ improvement is obtained. In this work, an \NSGA was regarded that does not use crossover, but it is clear that the same upper bounds could have been shown when crossover with rate less than one was used. Doerr and Qu\cite{DoerrQ23LB} show that the upper bound of $O(N n^k)$ for the \NSGA with standard-bit mutation is asymptotically tight in many cases: When $N = o(n^2/k^2)$, the runtime is $\Omega(N n^k)$. Consequently, in this regime the \NSGA does not profit from larger population sizes (not even when regarding the parallel runtime, that is, the number of iterations). Doerr and Qu~\cite{DoerrQ23crossover} prove that uniform crossover can give super-constant speed-ups for the \NSGA optimizing \ojzj functions. This result is not totally surprising given that Dang et al.~\cite{DangFKKLOSS18} have shown that crossover speeds up the \mpoea on single-objective jump functions, but the reason for the speed-ups shown in~\cite{DoerrQ23crossover} is different (and, in fact, can be translated to different, and sometimes stronger, speed-ups for the \mpoea on jump functions). These recent results show that a jump-like benchmark, as proposed in this work, is useful in multiobjective evolutionary computation. 

\section*{Acknowledgments}
This work was supported by a public grant as part of the Investissement d'avenir project, reference ANR-11-LABX-0056-LMH, LabEx LMH.

This work was also supported by Science, Technology and Innovation Commission of Shenzhen Municipality (Grant No. GXWD20220818191018001), Guangdong Basic and Applied Basic Research Foundation (Grant No. 2019A1515110177), Guangdong Provincial Key Laboratory (Grant No. 2020B121201001), the Program for Guangdong Introducing Innovative and Enterpreneurial Teams (Grant No. 2017ZT07X386), Shenzhen Science and Technology Program (Grant No. KQTD2016112514355531).


\begin{thebibliography}{COGNS20}

\bibitem[ABD21]{AntipovBD21gecco}
Denis Antipov, Maxim Buzdalov, and Benjamin Doerr.
\newblock Lazy parameter tuning and control: choosing all parameters randomly
  from a power-law distribution.
\newblock In {\em Genetic and Evolutionary Computation Conference, GECCO 2021},
  pages 1115--1123. {ACM}, 2021.

\bibitem[ABD22]{AntipovBD22}
Denis Antipov, Maxim Buzdalov, and Benjamin Doerr.
\newblock Fast mutation in crossover-based algorithms.
\newblock {\em Algorithmica}, 84:1724--1761, 2022.

\bibitem[AD20]{AntipovD20ppsn}
Denis Antipov and Benjamin Doerr.
\newblock Runtime analysis of a heavy-tailed ${(1+(\lambda, \lambda))}$ genetic
  algorithm on jump functions.
\newblock In {\em Parallel Problem Solving From Nature, PPSN 2020, Part~II},
  pages 545--559. Springer, 2020.

\bibitem[ADK22]{AntipovDK22}
Denis Antipov, Benjamin Doerr, and Vitalii Karavaev.
\newblock A rigorous runtime analysis of the ${(1 + (\lambda,\lambda))}$ {GA}
  on jump functions.
\newblock {\em Algorithmica}, 84:1573--1602, 2022.

\bibitem[B{\"{a}}c93]{Back93}
Thomas B{\"{a}}ck.
\newblock Optimal mutation rates in genetic search.
\newblock In {\em International Conference on Genetic Algorithms, ICGA 1993},
  pages 2--8. Morgan Kaufmann, 1993.

\bibitem[BBD21]{BenbakiBD21}
Riade Benbaki, Ziyad Benomar, and Benjamin Doerr.
\newblock A rigorous runtime analysis of the 2-{MMAS}$_{\mathrm{ib}}$ on jump
  functions: ant colony optimizers can cope well with local optima.
\newblock In {\em Genetic and Evolutionary Computation Conference, GECCO 2021},
  pages 4--13. {ACM}, 2021.

\bibitem[BFH{\etalchar{+}}07]{BrockhoffFHKNZ07}
Dimo Brockhoff, Tobias Friedrich, Nils Hebbinghaus, Christian Klein, Frank
  Neumann, and Eckart Zitzler.
\newblock Do additional objectives make a problem harder?
\newblock In {\em Genetic and Evolutionary Computation Conference, {GECCO}
  2007}, pages 765--772. {ACM}, 2007.

\bibitem[BFN08]{BrockhoffFN08}
Dimo Brockhoff, Tobias Friedrich, and Frank Neumann.
\newblock Analyzing hypervolume indicator based algorithms.
\newblock In {\em Parallel Problem Solving from Nature, {PPSN} 2008}, pages
  651--660. Springer, 2008.

\bibitem[BQ22]{BianQ22}
Chao Bian and Chao Qian.
\newblock Better running time of the non-dominated sorting genetic
  algorithm~{II} {(NSGA-II)} by using stochastic tournament selection.
\newblock In {\em Parallel Problem Solving From Nature, PPSN 2022}, pages
  428--441. Springer, 2022.

\bibitem[BQT18]{BianQT18ijcaigeneral}
Chao Bian, Chao Qian, and Ke~Tang.
\newblock A general approach to running time analysis of multi-objective
  evolutionary algorithms.
\newblock In {\em International Joint Conference on Artificial Intelligence,
  {IJCAI} 2018}, pages 1405--1411. {ijcai.org}, 2018.

\bibitem[COGNS20]{OsunaGNS20}
Edgar Covantes~Osuna, Wanru Gao, Frank Neumann, and Dirk Sudholt.
\newblock Design and analysis of diversity-based parent selection schemes for
  speeding up evolutionary multi-objective optimisation.
\newblock {\em Theoretical Computer Science}, 832:123--142, 2020.

\bibitem[COY21]{CorusOY21foga}
Dogan Corus, Pietro~S. Oliveto, and Donya Yazdani.
\newblock Automatic adaptation of hypermutation rates for multimodal
  optimisation.
\newblock In {\em Foundations of Genetic Algorithms, FOGA 2021}, pages
  4:1--4:12. {ACM}, 2021.

\bibitem[DD20]{DoerrD20bookchapter}
Benjamin Doerr and Carola Doerr.
\newblock Theory of parameter control for discrete black-box optimization:
  provable performance gains through dynamic parameter choices.
\newblock In Benjamin Doerr and Frank Neumann, editors, {\em Theory of
  Evolutionary Computation: Recent Developments in Discrete Optimization},
  pages 271--321. Springer, 2020.
\newblock Also available at \url{https://arxiv.org/abs/1804.05650}.

\bibitem[DELQ22]{DangELQ22}
Duc{-}Cuong Dang, Anton~V. Eremeev, Per~Kristian Lehre, and Xiaoyu Qin.
\newblock Fast non-elitist evolutionary algorithms with power-law ranking
  selection.
\newblock In {\em Genetic and Evolutionary Computation Conference, GECCO 2022},
  pages 1372--1380. {ACM}, 2022.

\bibitem[DFK{\etalchar{+}}18]{DangFKKLOSS18}
Duc{-}Cuong Dang, Tobias Friedrich, Timo K{\"{o}}tzing, Martin~S. Krejca,
  Per~Kristian Lehre, Pietro~S. Oliveto, Dirk Sudholt, and Andrew~M. Sutton.
\newblock Escaping local optima using crossover with emergent diversity.
\newblock {\em {IEEE} Transactions on Evolutionary Computation}, 22:484--497,
  2018.

\bibitem[DGN16]{DoerrGN16}
Benjamin Doerr, Wanru Gao, and Frank Neumann.
\newblock Runtime analysis of evolutionary diversity maximization for
  {OneMinMax}.
\newblock In {\em Genetic and Evolutionary Computation Conference, GECCO 2016},
  pages 557--564. {ACM}, 2016.

\bibitem[DHP22]{DoerrHP22}
Benjamin Doerr, Omar~El Hadri, and Adrien Pinard.
\newblock The $(1+(\lambda,\lambda))$ global {SEMO} algorithm.
\newblock In {\em Genetic and Evolutionary Computation Conference, GECCO 2022},
  pages 520--528. {ACM}, 2022.

\bibitem[DJK08]{DoerrJK08}
Benjamin Doerr, Thomas Jansen, and Christian Klein.
\newblock Comparing global and local mutations on bit strings.
\newblock In {\em Genetic and Evolutionary Computation Conference, GECCO 2008},
  pages 929--936. ACM, 2008.

\bibitem[DJW02]{DrosteJW02}
Stefan Droste, Thomas Jansen, and Ingo Wegener.
\newblock On the analysis of the (1+1) evolutionary algorithm.
\newblock {\em Theoretical Computer Science}, 276:51--81, 2002.

\bibitem[DKV13]{DoerrKV13}
Benjamin Doerr, Bojana Kodric, and Marco Voigt.
\newblock Lower bounds for the runtime of a global multi-objective evolutionary
  algorithm.
\newblock In {\em Congress on Evolutionary Computation, CEC 2013}, pages
  432--439. IEEE, 2013.

\bibitem[DLMN17]{DoerrLMN17}
Benjamin Doerr, Huu~Phuoc Le, R\'egis Makhmara, and Ta~Duy Nguyen.
\newblock Fast genetic algorithms.
\newblock In {\em Genetic and Evolutionary Computation Conference, GECCO 2017},
  pages 777--784. {ACM}, 2017.

\bibitem[Doe20]{Doerr20bookchapter}
Benjamin Doerr.
\newblock Probabilistic tools for the analysis of randomized optimization
  heuristics.
\newblock In Benjamin Doerr and Frank Neumann, editors, {\em Theory of
  Evolutionary Computation: Recent Developments in Discrete Optimization},
  pages 1--87. Springer, 2020.
\newblock Also available at \url{https://arxiv.org/abs/1801.06733}.

\bibitem[Doe21]{Doerr21cgajump}
Benjamin Doerr.
\newblock The runtime of the compact genetic algorithm on {J}ump functions.
\newblock {\em Algorithmica}, 83:3059--3107, 2021.

\bibitem[DPAM02]{DebPAM02}
Kalyanmoy Deb, Amrit Pratap, Sameer Agarwal, and T.~Meyarivan.
\newblock A fast and elitist multiobjective genetic algorithm: {NSGA-II}.
\newblock {\em IEEE Transactions on Evolutionary Computation}, 6:182--197,
  2002.

\bibitem[DQ22]{DoerrQ22ppsn}
Benjamin Doerr and Zhongdi Qu.
\newblock A first runtime analysis of the {NSGA-II} on a multimodal problem.
\newblock In {\em Parallel Problem Solving From Nature, PPSN 2022, Part~{II}},
  pages 399--412. Springer, 2022.

\bibitem[DQ23a]{DoerrQ23LB}
Benjamin Doerr and Zhongdi Qu.
\newblock From understanding the population dynamics of the {NSGA-II} to the
  first proven lower bounds.
\newblock In {\em Conference on Artificial Intelligence, {AAAI} 2023}, pages
  12408--12416. {AAAI} Press, 2023.

\bibitem[DQ23b]{DoerrQ23crossover}
Benjamin Doerr and Zhongdi Qu.
\newblock Runtime analysis for the {NSGA-II:} provable speed-ups from
  crossover.
\newblock In {\em Conference on Artificial Intelligence, {AAAI} 2023}, pages
  12399--12407. {AAAI} Press, 2023.

\bibitem[Dro04]{Droste04}
Stefan Droste.
\newblock Analysis of the {(1+1)} {EA} for a noisy {OneMax}.
\newblock In {\em Genetic and Evolutionary Computation Conference, GECCO 2004},
  pages 1088--1099. Springer, 2004.

\bibitem[DZ21]{DoerrZ21aaai}
Benjamin Doerr and Weijie Zheng.
\newblock Theoretical analyses of multi-objective evolutionary algorithms on
  multi-modal objectives.
\newblock In {\em Conference on Artificial Intelligence, {AAAI} 2021}, pages
  12293--12301. {AAAI} Press, 2021.

\bibitem[FHN10]{FriedrichHN10}
Tobias Friedrich, Nils Hebbinghaus, and Frank Neumann.
\newblock Plateaus can be harder in multi-objective optimization.
\newblock {\em Theoretical Computer Science}, 411:854--864, 2010.

\bibitem[FHN11]{FriedrichHN11}
Tobias Friedrich, Christian Horoba, and Frank Neumann.
\newblock Illustration of fairness in evolutionary multi-objective
  optimization.
\newblock {\em Theoretical Computer Science}, 412:1546--1556, 2011.

\bibitem[FQT19]{FengQT19}
Chao Feng, Chao Qian, and Ke~Tang.
\newblock Unsupervised feature selection by {P}areto optimization.
\newblock In {\em Conference on Artificial Intelligence, {AAAI} 2019}, pages
  3534--3541. {AAAI} Press, 2019.

\bibitem[Gie03]{Giel03}
Oliver Giel.
\newblock Expected runtimes of a simple multi-objective evolutionary algorithm.
\newblock In {\em Congress on Evolutionary Computation, {CEC} 2003}, pages
  1918--1925. {IEEE}, 2003.

\bibitem[GL10]{GielL10}
Oliver Giel and Per~Kristian Lehre.
\newblock On the effect of populations in evolutionary multi-objective
  optimisation.
\newblock {\em Evolutionary Computation}, 18:335--356, 2010.

\bibitem[Gre09]{Greiner09}
Gero Greiner.
\newblock Single- and multi-objective evolutionary algorithms for graph
  bisectioning.
\newblock In {\em Foundations of Genetic Algorithms, {FOGA} 2009}, pages
  29--38. {ACM}, 2009.

\bibitem[Gut12]{Gutjahr12}
Walter~J. Gutjahr.
\newblock Runtime analysis of an evolutionary algorithm for stochastic
  multi-objective combinatorial optimization.
\newblock {\em Evolutionary Computation}, 20:395--421, 2012.

\bibitem[Hor09]{Horoba09}
Christian Horoba.
\newblock Analysis of a simple evolutionary algorithm for the multiobjective
  shortest path problem.
\newblock In {\em Foundations of Genetic Algorithms, {FOGA} 2009}, pages
  113--120. {ACM}, 2009.

\bibitem[HS18]{HasenohrlS18}
V{\'{a}}clav Hasen{\"{o}}hrl and Andrew~M. Sutton.
\newblock On the runtime dynamics of the compact genetic algorithm on jump
  functions.
\newblock In {\em Genetic and Evolutionary Computation Conference, {GECCO}
  2018}, pages 967--974. {ACM}, 2018.

\bibitem[HZ20]{HuangZ20}
Zhengxin Huang and Yuren Zhou.
\newblock Runtime analysis of somatic contiguous hypermutation operators in
  {MOEA/D} framework.
\newblock In {\em Conference on Artificial Intelligence, {AAAI} 2020}, pages
  2359--2366. {AAAI Press}, 2020.

\bibitem[HZCH19]{HuangZCH19}
Zhengxin Huang, Yuren Zhou, Zefeng Chen, and Xiaoyu He.
\newblock Running time analysis of {MOEA/D} with crossover on discrete
  optimization problem.
\newblock In {\em Conference on Artificial Intelligence, {AAAI} 2019}, pages
  2296--2303. {AAAI Press}, 2019.

\bibitem[HZLL21]{HuangZLL21}
Zhengxin Huang, Yuren Zhou, Chuan Luo, and Qingwei Lin.
\newblock A runtime analysis of typical decomposition approaches in {MOEA/D}
  framework for many-objective optimization problems.
\newblock In {\em International Joint Conference on Artificial Intelligence,
  {IJCAI} 2021}, pages 1682--1688, 2021.

\bibitem[JS07]{JagerskupperS07}
Jens J{\"a}gersk{\"u}pper and Tobias Storch.
\newblock When the plus strategy outperforms the comma strategy and when not.
\newblock In {\em Foundations of Computational Intelligence, FOCI 2007}, pages
  25--32. IEEE, 2007.

\bibitem[JW01]{JansenW01}
Thomas Jansen and Ingo Wegener.
\newblock Evolutionary algorithms - how to cope with plateaus of constant
  fitness and when to reject strings of the same fitness.
\newblock {\em {IEEE} Transactions on Evolutionary Computation}, 5:589--599,
  2001.

\bibitem[JW02]{JansenW02}
Thomas Jansen and Ingo Wegener.
\newblock The analysis of evolutionary algorithms -- a proof that crossover
  really can help.
\newblock {\em Algorithmica}, 34:47--66, 2002.

\bibitem[JW05]{JansenW05}
Thomas Jansen and Ingo Wegener.
\newblock Real royal road functions -- where crossover provably is essential.
\newblock {\em Discrete Applied Mathematics}, 149:111--125, 2005.

\bibitem[KB06]{KumarB06}
Rajeev Kumar and Nilanjan Banerjee.
\newblock Analysis of a multiobjective evolutionary algorithm on the 0-1
  knapsack problem.
\newblock {\em Theoretical Computer Science}, 358:104--120, 2006.

\bibitem[LN19]{LehreN19foga}
Per~Kristian Lehre and Phan Trung~Hai Nguyen.
\newblock On the limitations of the univariate marginal distribution algorithm
  to deception and where bivariate {EDA}s might help.
\newblock In {\em Foundations of Genetic Algorithms, FOGA 2019}, pages
  154--168. ACM, 2019.

\bibitem[LTZ{\etalchar{+}}02]{LaumannsTZWD02}
Marco Laumanns, Lothar Thiele, Eckart Zitzler, Emo Welzl, and Kalyanmoy Deb.
\newblock Running time analysis of multi-objective evolutionary algorithms on a
  simple discrete optimization problem.
\newblock In {\em Parallel Problem Solving from Nature, {PPSN} 2002}, pages
  44--53. Springer, 2002.

\bibitem[LTZ04a]{LaumannsTZ04knapsack}
Marco Laumanns, Lothar Thiele, and Eckart Zitzler.
\newblock Running time analysis of evolutionary algorithms on a simplified
  multiobjective knapsack problem.
\newblock {\em Natural Computing}, 3:37--51, 2004.

\bibitem[LTZ04b]{LaumannsTZ04}
Marco Laumanns, Lothar Thiele, and Eckart Zitzler.
\newblock Running time analysis of multiobjective evolutionary algorithms on
  pseudo-{B}oolean functions.
\newblock {\em IEEE Transactions on Evolutionary Computation}, 8:170--182,
  2004.

\bibitem[LYQ16]{LiangYQ16}
Jing~J. Liang, Caitong Yue, and Bo{-}Yang Qu.
\newblock Multimodal multi-objective optimization: {A} preliminary study.
\newblock In {\em Congress on Evolutionary Computation, {CEC} 2016}, pages
  2454--2461. {IEEE}, 2016.

\bibitem[LZZZ16]{LiZZZ16}
Yuan-Long Li, Yu-Ren Zhou, Zhi-Hui Zhan, and Jun Zhang.
\newblock A primary theoretical study on decomposition-based multiobjective
  evolutionary algorithms.
\newblock {\em IEEE Transactions on Evolutionary Computation}, 20:563--576,
  2016.

\bibitem[M{\"{u}}h92]{Muhlenbein92}
Heinz M{\"{u}}hlenbein.
\newblock How genetic algorithms really work: mutation and hillclimbing.
\newblock In {\em Parallel Problem Solving from Nature, PPSN 1992}, pages
  15--26. Elsevier, 1992.

\bibitem[Neu07]{Neumann07}
Frank Neumann.
\newblock Expected runtimes of a simple evolutionary algorithm for the
  multi-objective minimum spanning tree problem.
\newblock {\em European Journal of Operational Research}, 181:1620--1629, 2007.

\bibitem[NR08]{NeumannR08}
Frank Neumann and Joachim Reichel.
\newblock Approximating minimum multicuts by evolutionary multi-objective
  algorithms.
\newblock In {\em Parallel Problem Solving from Nature, {PPSN} 2008}, pages
  72--81. Springer, 2008.

\bibitem[NT10]{NeumannT10}
Frank Neumann and Madeleine Theile.
\newblock How crossover speeds up evolutionary algorithms for the
  multi-criteria all-pairs-shortest-path problem.
\newblock In {\em Parallel Problem Solving from Nature, PPSN 2010, Part~{I}},
  pages 667--676. Springer, 2010.

\bibitem[Pre15]{Preuss15}
Mike Preuss.
\newblock {\em Multimodal Optimization by Means of Evolutionary Algorithms}.
\newblock Springer, 2015.

\bibitem[QBF20]{QianBF20}
Chao Qian, Chao Bian, and Chao Feng.
\newblock Subset selection by {P}areto optimization with recombination.
\newblock In {\em Conference on Artificial Intelligence, {AAAI} 2020}, pages
  2408--2415. {AAAI} Press, 2020.

\bibitem[QTZ16]{QianTZ16}
Chao Qian, Ke~Tang, and Zhi-Hua Zhou.
\newblock Selection hyper-heuristics can provably be helpful in evolutionary
  multi-objective optimization.
\newblock In {\em Parallel Problem Solving from Nature, {PPSN} 2016}, pages
  835--846. Springer, 2016.

\bibitem[QYZ13]{QianYZ13}
Chao Qian, Yang Yu, and Zhi{-}Hua Zhou.
\newblock An analysis on recombination in multi-objective evolutionary
  optimization.
\newblock {\em Artificial Intelligence}, 204:99--119, 2013.

\bibitem[QYZ15]{QianYZ15ijcai}
Chao Qian, Yang Yu, and Zhi{-}Hua Zhou.
\newblock On constrained {B}oolean {P}areto optimization.
\newblock In {\em International Joint Conference on Artificial Intelligence,
  {IJCAI} 2015}, pages 389--395. {AAAI} Press, 2015.

\bibitem[QZTY18]{QianZTY18}
Chao Qian, Yibo Zhang, Ke~Tang, and Xin Yao.
\newblock On multiset selection with size constraints.
\newblock In {\em Conference on Artificial Intelligence, {AAAI} 2018}, pages
  1395--1402. {AAAI} Press, 2018.

\bibitem[RBN20]{RoostapourBN20}
Vahid Roostapour, Jakob Bossek, and Frank Neumann.
\newblock Runtime analysis of evolutionary algorithms with biased mutation for
  the multi-objective minimum spanning tree problem.
\newblock In {\em Genetic and Evolutionary Computation Conference, {GECCO}
  2020}, pages 551--559. {ACM}, 2020.

\bibitem[RNNF22]{RoostapourNNF22}
Vahid Roostapour, Aneta Neumann, Frank Neumann, and Tobias Friedrich.
\newblock {P}areto optimization for subset selection with dynamic cost
  constraints.
\newblock {\em Artificial Intelligence}, 302:103597, 2022.

\bibitem[Rud97]{Rudolph97}
G{\"u}nter Rudolph.
\newblock {\em Convergence Properties of Evolutionary Algorithms}.
\newblock Verlag Dr.~Kov{\v a}c, 1997.

\bibitem[RW20]{RajabiW20}
Amirhossein Rajabi and Carsten Witt.
\newblock Self-adjusting evolutionary algorithms for multimodal optimization.
\newblock In {\em Genetic and Evolutionary Computation Conference, GECCO 2020},
  pages 1314--1322. {ACM}, 2020.

\bibitem[RW22]{RajabiW22}
Amirhossein Rajabi and Carsten Witt.
\newblock Self-adjusting evolutionary algorithms for multimodal optimization.
\newblock {\em Algorithmica}, 84:1694--1723, 2022.

\bibitem[Thi03]{Thierens03}
Dirk Thierens.
\newblock Convergence time analysis for the multi-objective counting ones
  problem.
\newblock In {\em Evolutionary Multi-Criterion Optimization, {EMO} 2003}, pages
  355--364. Springer, 2003.

\bibitem[TI20]{TanabeI20}
Ryoji Tanabe and Hisao Ishibuchi.
\newblock A review of evolutionary multimodal multiobjective optimization.
\newblock {\em IEEE Transactions on Evolutionary Computation}, 24:193--200,
  2020.

\bibitem[Weg01]{Wegener01}
Ingo Wegener.
\newblock Theoretical aspects of evolutionary algorithms.
\newblock In {\em Automata, Languages and Programming, {ICALP} 2001}, pages
  64--78. Springer, 2001.

\bibitem[Wit23]{Witt23}
Carsten Witt.
\newblock How majority-vote crossover and estimation-of-distribution algorithms
  cope with fitness valleys.
\newblock {\em Theoretical Computer Science}, 940:18--42, 2023.

\bibitem[ZD22]{ZhengD22gecco}
Weijie Zheng and Benjamin Doerr.
\newblock Better approximation guarantees for the {NSGA-II} by using the
  current crowding distance.
\newblock In {\em Genetic and Evolutionary Computation Conference, GECCO 2022},
  pages 611--619. {ACM}, 2022.

\bibitem[ZL07]{ZhangL07}
Qingfu Zhang and Hui Li.
\newblock {MOEA/D}: A multiobjective evolutionary algorithm based on
  decomposition.
\newblock {\em IEEE Transactions on Evolutionary Computation}, 11:712--731,
  2007.

\bibitem[ZLD22]{ZhengLD22}
Weijie Zheng, Yufei Liu, and Benjamin Doerr.
\newblock A first mathematical runtime analysis of the {N}on-{D}ominated
  {S}orting {G}enetic {A}lgorithm~{II} ({NSGA-II}).
\newblock In {\em Conference on Artificial Intelligence, {AAAI} 2022}, pages
  10408--10416. {AAAI} Press, 2022.

\bibitem[ZQL{\etalchar{+}}11]{ZhouQLZSZ11}
Aimin Zhou, Bo-Yang Qu, Hui Li, Shi-Zheng Zhao, Ponnuthurai~Nagaratnam
  Suganthan, and Qingfu Zhang.
\newblock Multiobjective evolutionary algorithms: A survey of the state of the
  art.
\newblock {\em Swarm and Evolutionary Computation}, 1:32--49, 2011.

\end{thebibliography}

\newcommand{\etalchar}[1]{$^{#1}$}

}
\end{document}